\newcommand{\interior}[1]{\mathrm{int}(#1)}
\newcommand{\I}{\mathcal{I}}
\DeclareMathOperator*{\argmax}{arg\,max}
\DeclareMathOperator*{\argmin}{arg\,min}
\newcommand*\diff{\mathop{} \mathrm{d}}
\newcommand{\Payoff}{D}
\newcommand{\LMOp}{\mathtt{LMO}_1} 
\newcommand{\LMOmu}{\mathtt{LMO}_2} 
\newcommand{\supp}{\mathrm{Supp}} 
\newcommand{\clarke}{\partial_{\bm{p}}^{\circ}}
\newtheorem{theorem}{Theorem}
\newtheorem{proposition}{Proposition}
\newtheorem{lemma}{Lemma}
\newtheorem{definition}{Definition}
\newtheorem{assumption}{Assumption}
\newtheorem{remark}{Remark}
\newtheorem{example}{Example}
\title{Pure Exploration via Frank--Wolfe Self-Play}
\author[1]{Xinyu Liu}
\author[2]{Chao Qin}
\author[1]{Wei You}
\affil[1]{Department of Industrial Engineering and Decision Analytics, The Hong Kong University of Science and Technology, \url{xliufn@connect.ust.hk}, \url{weiyou@ust.hk}}
\affil[2]{Stanford Graduate School of Business, Stanford University, \url{chaoqin@stanford.edu}}
\begin{document}

\maketitle
\begin{abstract}
We study pure exploration in structured stochastic multi-armed bandits, aiming to efficiently identify the correct hypothesis from a finite set of alternatives.
For a broad class of tasks, asymptotic analyses reduce to a maximin optimization that admits a two-player zero-sum game interpretation between an experimenter and a skeptic: the experimenter allocates measurements to rule out alternatives while the skeptic proposes alternatives.
We reformulate the game by allowing the skeptic to adopt a mixed strategy, yielding a concave--convex saddle-point problem.
This viewpoint leads to Frank--Wolfe Self-Play (FWSP): a projection-free, regularization-free, tuning-free method whose one-hot updates on both sides match the bandit sampling paradigm.
However, structural constraints introduce sharp pathologies that complicate algorithm design and analysis: our linear-bandit case study exhibits nonunique optima, optimal designs with zero mass on the best arm, bilinear objectives, and nonsmoothness at the boundary.
We address these challenges via a differential-inclusion argument, proving convergence of the game value for best-arm identification in linear bandits.
Our analysis proceeds through a continuous-time limit: a differential inclusion with a Lyapunov function that decays exponentially, implying a vanishing duality gap and convergence to the optimal value.
Although Lyapunov analysis requires differentiability of the objective, which is not guaranteed on the boundary, we show that along continuous trajectories the algorithm steers away from pathological nonsmooth points and achieves uniform global convergence to the optimal game value.
We then embed the discrete-time updates into a perturbed flow and show that the discrete game value also converges.
Building on FWSP, we further propose a learning algorithm based on posterior sampling.
Numerical experiments demonstrate game-value convergence with a vanishing duality gap.
\end{abstract}

\section{Introduction}
A central goal in modern data-driven science is to efficiently identify the correct hypothesis from a finite set of alternatives.
Examples include identifying the best alternative (best-arm identification, BAI; \citealt{audibert2010best}); 
identifying all alternatives that exceed a given threshold (thresholding bandits; \citealt{lai2012efficient}); 
and identifying all Pareto-optimal alternatives under multiple objectives (Pareto set identification; \citealt{kone2023adaptive}). 
To the best of our knowledge, this line of work traces back to \citet{chernoff1959sequential}. In recent years, multiple disciplines have witnessed a surge of interest in characterizing fundamental performance limits and developing algorithms that attain them \citep{glynn2004large, garivier2016optimal, russo2020simple, kasy2021adaptive, qin2025dual}.

Interestingly, these asymptotic analyses often reduce to a maximin optimization that admits a two-player zero-sum game interpretation between an experimenter and a skeptic; the game's equilibrium characterizes both the fundamental limits and the structure of optimal algorithms \citep{qin2024optimizing}.
The experimenter aims to answer a question of interest about the true state of nature $\bm{\theta}\in\mathbb{R}^d$.
To this end, the experimenter gathers convincing evidence by selecting $\bm{p} \in \Delta_K$ that specifies the allocation of measurement effort across the $K$ alternatives, where $\Delta_K\subset\mathbb{R}^K$ denotes the probability simplex.
The skeptic, in turn, chooses an alternative state $\bm\vartheta \in \mathrm{Alt}(\bm{\theta})$, which represents an instance with a different correct answer, in an attempt to mislead the experimenter's evidence collection. 
Given strategies $(\bm{p},\bm{\vartheta})$, the experimenter's payoff is
\[
\Gamma(\bm{p},\bm{\vartheta}) \triangleq \sum_{i \in [K]} p_i \mathrm{KL}(P_{\bm{\theta},i}\| P_{\bm{\vartheta},i}),
\]
where $P_{\bm{\theta},i}$ and $P_{\bm{\vartheta},i}$ denote the observation laws of arm $i$ under $\bm\theta$ and $\bm\vartheta$, respectively, and $\mathrm{KL}(\cdot \| \cdot)$ is the Kullback--Leibler divergence.
The payoff measures the expected amount of discriminative information against the skeptic's alternative $\bm{\vartheta}$ weighted by the experimenter's allocation $\bm{p}$.
Because the interaction is zero-sum, the experimenter seeks to maximize and the skeptic to minimize this quantity, leading to the maximin formulation that underpins attainable performance limits for many pure-exploration problems:
\begin{align*}
    \max_{\bm{p} \in \Delta_K} \inf_{\bm{\vartheta} \in \mathrm{Alt}(\bm{\theta})} \Gamma(\bm{p},\bm{\vartheta}).
\end{align*}

As in \cite{wang2021fast}, the skeptic's decision space $\mathrm{Alt}(\bm{\theta})$ decomposes as the union of a finite set of $\mathrm{Alt}_x(\bm{\theta})$, where each index $x\in \mathcal{X}(\bm\theta)$ denotes a ``confusing scenario'' that could flip the correct answer. 
The set $\mathrm{Alt}_x(\bm{\theta})$ collects all instances in which that scenario occurs.
The maximin problem is equivalent to
\begin{align}\label{eq:maximin_pure_strategy}
    \max_{\bm{p} \in \Delta_K} \min_{x \in \mathcal{X}} \Payoff(\bm{p}, x), 
    \quad\text{where}\quad 
    \Payoff(\bm{p}, x)\triangleq\inf_{\bm{\vartheta} \in \mathrm{Alt}_x(\bm{\theta})} \Gamma(\bm{p},\bm{\vartheta}),
\end{align}
where $\Payoff(\bm{p}, x)$ quantifies the discriminative information the experimenter can obtain against the \textit{most challenging} alternative in $\mathrm{Alt}_x(\bm{\theta})$.
Although asymptotic in nature, this maximin formulation both motivates and guides the design of practical algorithms. 
For example, in best-arm identification, a confusing scenario arises when a suboptimal arm $x\neq I^*(\bm\theta)$ appears superior to the true best arm $I^*(\bm\theta)$. 
The well-known (top-two) Thompson sampling allocates measurement effort so as to gather equal evidence to rule out \textit{each} such scenario, thereby enabling it to quickly identify the best arm \citep{russo2020simple}.
This corresponds to the structural requirement of the optimal allocation, known as the \textit{(discriminative) information-balance} condition: in \eqref{eq:maximin_pure_strategy},
$\Payoff(\bm{p}, x) = \Payoff(\bm{p}, x')$ for all $x,x' \neq I^*(\bm\theta)$ \citep{russo2020simple,garivier2016optimal}.
Thus, the skeptic is agnostic among the scenarios, as each is equally hard.

However, this information balance property can fail to hold for structured bandits (such as linear bandits), suggesting that a naive modification of Thompson sampling may be inadequate. 
In fact, adapting optimism-based bandit algorithms (such as Thompson sampling) to such problems can be fundamentally inefficient, as they inherently prioritize exploring high-performing arms over efficiently uncovering the most informative ones.
For illustration, Section~\ref{sec:linear_case_study} presents a simple linear-bandit instance, where the optimal allocation assigns zero sampling to the highest-performing arm, even though the objective is to identify it.
Thompson Sampling and its variants nonetheless allocate substantial effort to this arm, thereby incurring markedly suboptimal performance.
This example highlights the cost of ignoring the problem's information structure---an effect also observed in regret minimization; see \citet{lattimore2017end}.

An optimization expert might ask: if the maximin formulation characterizes performance limits, why not treat it as a standard optimization problem and derive pure-exploration algorithms by solving it directly?
A natural first approach is to view \eqref{eq:maximin_pure_strategy} as a maximization over $\bm{p}$ alone---maximize
$\min_{x \in \mathcal X} \Payoff(\bm{p}, x)$ with respect to $\bm{p}$---and apply the Frank--Wolfe algorithm.
When the feasible set for the allocation vector is the simplex, the linear minimization oracle returns a vertex (a one-hot vector), which aligns naturally with the one-arm-at-a-time sampling paradigm of bandit algorithms. 
However, even for unstructured problems such as BAI, numerical evidence indicates that Frank--Wolfe fails to attain asymptotic optimality \citep{menard2019gradient,degenne2020gamification,wang2021fast}. 
A key reason is \textit{nonsmoothness}: each $\Payoff(\bm{p}, x)$ in \eqref{eq:maximin_pure_strategy} is concave, so the objective is concave but typically nondifferentiable in $\bm p$, violating the bounded-curvature assumption for classical Frank--Wolfe guarantees \citep{jaggi2013revisiting}.

The nonsmoothness that breaks vanilla Frank--Wolfe comes from the skeptic's exact best responses---given $\bm{p}$, the skeptic solves $\min_{x \in \mathcal X} \Payoff(\bm{p}, x)$. 
But what if they move simultaneously?
A natural question one may ask is whether the original maximin formulation in \eqref{eq:maximin_pure_strategy} admits a Nash equilibrium. 
Its existence is crucial: it anchors simultaneous-play algorithms and provides primal--dual optimality certificates. 
Interestingly, \citet{qin2024optimizing} show that even in the case of BAI, a strict minimax inequality holds. 
To guarantee the existence of a Nash equilibrium, we allow the skeptic to play a mixed strategy $\bm{\mu} = (\mu_x)_{x\in \mathcal{X}} \in \Delta_{|\mathcal{X}|}$ over scenarios $x\in\mathcal{X}$. 
Under a mixed-strategy pair $(\bm{p},\bm{\mu})$, the payoff function becomes
\begin{equation}
\label{eq:mixed_payoff}
F(\bm{p},\bm{\mu}) \triangleq \mathbb{E}_{x \sim \bm{\mu}}[ \Payoff(\bm{p}, x)] = \sum_{x \in \mathcal{X}} \mu_x \Payoff(\bm{p}, x).
\end{equation}
Since the simplices are compact and convex and $F$ is continuous and concave--convex, we have
\begin{equation}\label{eq:maximin_mixed_strategy}
    \max_{\bm{p}\in\Delta_{K}} \min_{\bm{\mu}\in\Delta_{|\mathcal{X}|}} F(\bm{p},\bm{\mu}) = \min_{\bm{\mu}\in\Delta_{|\mathcal{X}|}} \max_{\bm{p}\in\Delta_{K}} F(\bm{p},\bm{\mu})
\end{equation}
by Sion's minimax theorem.
A perhaps surprising consequence of the mixed-strategy view is:
\begin{center}
    \emph{Simple Frank--Wolfe updates on both sides drive the players to a Nash equilibrium.}
\end{center}
For payoff $F(\bm{p},\bm{\mu})$ with simplex constraints, each Frank--Wolfe subproblem admits a closed form. 
In particular, at iterate $n$, both players react greedily to the linearized objective at $(\bm{p}_n,\bm{\mu}_n)$.
Here, $\bm{p}_n$ denote the proportional allocation across arms at time $n$, and let $\bm{\mu}_n$ denote the allocation over scenarios, so, $1/(n+1)$ is a natural step size. 
FWSP uses one-hot updates:
\begin{equation}\label{eq:FWSP}
\begin{cases}
    \bm{\mu}_{n+1} \gets \tfrac{1}{n+1}(n\bm{\mu}_{n} + \bm{e}_{x}), &x \in \argmin_{x'\in\mathcal{X}} \bigl[\nabla_{\bm{\mu}} F(\bm{p}_{n},\bm{\mu}_{n})\bigr]_{x'}, \\
    \bm{p}_{n+1} \gets \tfrac{1}{n+1}(n\bm{p}_{n} + \bm{e}_{i}), &i \in \argmax_{i'\in[K]} \bigl[\nabla_{\bm{p}} F(\bm{p}_{n},\bm{\mu}_{n})\bigr]_{i'},
\end{cases}
\end{equation}
with ties broken arbitrarily.
Taken together, this yields a simple modification of Frank--Wolfe, which we call \textit{Frank--Wolfe Self-Play}: in each round, each player treats the opponent's current mixed strategy as fixed and takes a greedy Frank--Wolfe step with respect to the linearized payoff. 
Remarkably, under minimal regularity assumptions this idea extends to a broad class of pure-exploration tasks and bandit models.

\subsection{Contributions}

\paragraph{Principled formulation with the right objective.} We recast the original maximin problem as a saddle-point problem by allowing the skeptic to adopt a mixed strategy.
This enables a near-direct application of the Frank--Wolfe method. 
This new perspective bridges the bandit pure-exploration literature, game theory, and optimization, highlighting mutual benefits across these communities.

\paragraph{Frank--Wolfe Self-Play algorithm.} Building on this reformulation, we propose a minimal modification of Frank--Wolfe: each player fixes the opponent's empirical mixed strategy and takes a greedy step for the linearized payoff. 
The resulting procedure is naturally bandit-friendly and simpler than existing methods---requiring no parameter tuning, no projections, and no iterate tracking. 
The optimality of FWSP is intuitive: it directly enforces the necessary and sufficient KKT conditions of the saddle-point problem, driving the associated residuals to zero and ensuring convergence.
However, a rigorous proof is challenging due to nonsmoothness, as demonstrated in our linear-bandit case study.

\paragraph{A linear-bandit case study.} 
Using simple linear-bandit examples, we reveal striking differences between structured and unstructured settings.
In particular, we present instances with (i) optimal solutions that allocate zero probability to the best arm; (ii) nonunique optimal solutions; (iii) bilinear objective functions; and (iv) pathological boundary nondifferentiability.
These examples expose a peculiar structure that breaks popular methods and motivates new techniques. 
Crucially, we cannot, \emph{a priori}, rule out boundary equilibria that lie arbitrarily close to nonsmooth regions.
Fortunately, via a novel differential-inclusion analysis, we show that FWSP steers trajectories away from such nonsmooth regions and attains global convergence in game value.

\paragraph{Novel proof via differential inclusion.} 
We establish a connection between the convergence of FWSP and that of its limiting continuous-time differential inclusion (DI), whose convergence we prove via a Lyapunov analysis.
We then embed the discrete-time FWSP iterates into a piecewise-linear interpolant that follows a mildly perturbed DI, and prove that the iterates track this DI and converge. 
Unfortunately, these analyses apply only when the objective $F$ is differentiable along the solution trajectory, which cannot be guaranteed \emph{a priori} because nonsmooth points and equilibria can both occur on the boundary.
To address this, we handle nonsmoothness via Clarke generalized gradients.
Perhaps surprisingly, we show that along continuous trajectories the algorithm steers away from pathological nonsmooth points and achieves uniform global convergence to the optimal game value.
In linear bandits, smoothness holds whenever $\Payoff(\bm p,x)>0$. 
Thus a well-designed procedure should drive all $\Payoff(\bm p,x)$ strictly positive, which occurs if and only if the active arms span the full feature space.
Our Frank--Wolfe step does exactly this: gradient bounds concentrate sampling on precisely those arms needed to span the target subspace, attaining full span in uniformly bounded time---after which $F$ is smooth and the Lyapunov analysis applies. 
This obviates any need to know the optimal active set in advance and, crucially, requires no modifications to the basic FWSP algorithm for the proof.

\subsection{Related Work}
Much of the existing literature adopts a pure optimization viewpoint and tackles the maximin problem via online optimization. 
Notably, \citet{menard2019gradient} use subgradient ascent, and \citet{wang2021fast} propose a variant of the Frank--Wolfe algorithm that handles nonsmooth objectives via subgradient subspaces. 
These methods maintain an ambient solution generated by the optimization procedure and then track this solution to define the bandit policy. 
More importantly, without the aid of a mixed strategy on the skeptic's side, one must carefully maintain feasibility (e.g., via entropy regularization) and handle nonsmoothness (e.g., via elaborate subgradient-space constructions).

When the objective is \emph{bilinear}, the maximin problem reduces to a two-player zero-sum matrix game, and our algorithm coincides with fictitious play \citep{brown1951iterative} from evolutionary game theory. 
\citet{robinson1951iterative} proved convergence of fictitious play with rate $O\bigl(t^{-1/(p+q-2)}\bigr)$; see also \citet{shapiro1958note}.
In the optimization literature, \citet{gidel2017frank} introduced Saddle-Point Frank--Wolfe (SPFW) and proved an $O(t^{-1/2})$ rate in \emph{strongly concave--convex} games, while \citet{chen2024last} obtained the same rate for a \emph{smoothed variant} of SPFW in the setting of monotone variational inequalities. Our objective is neither bilinear nor strongly concave--convex, and it is moreover nonsmooth on the boundary of the simplex. Thus, existing convergence analyses do not apply, further motivating the new technical arguments we develop.

In terms of proof technique, our work is related to the ordinary differential equation (ODE) method for stochastic approximation \citep{ljung1977analysis}, and its extension to differential inclusions \citep{benaim2005stochastic}. 
In evolutionary game theory, fictitious play has been analyzed via its continuous-time analogue, best-response dynamics (BRD), using Lyapunov methods \citep{hofbauer1995stability,harris1998rate}, with \citet{hofbauer2006best} extending the analysis to concave--convex zero-sum games. 
To obtain a closed-form, one-hot update, we replace exact best responses with FW steps, yielding different continuous-time dynamics.
While stochastic approximation via ODE methods is well understood \citep{Kushner1997StochasticAA,borkar2008stochastic}, the picture is less complete for differential inclusions; 
the only available result \citep{Nguyen2021StochasticAW} assumes conditions that our algorithm does not satisfy.
\citet{bandyopadhyay2024optimal} appears to be the only work that analyzes pure exploration via the fluid/ODE method, and it focuses exclusively on BAI.
It remains unclear whether their approach extends to other pure-exploration objectives or to structured bandit models.

We provide a more comprehensive literature review in Appendix~\ref{app:literature}.

\paragraph{Notation.}
For any integer $n$, let $[n] \triangleq \{1,2,\dots,n\}$.
Let $\mathbb{R}_{\ge 0}^n = \{ \bm{x}\in\mathbb{R}^n : x_i \ge 0, \forall i\in[n]\}$ denote the nonnegative orthant, and let
$\Delta_n = \bigl\{ \bm{x}\in\mathbb{R}_{\ge 0}^n : \sum_{i=1}^n x_i = 1 \bigr\}$
denote the probability simplex in $\mathbb{R}^n$. For a set $A \subset \mathbb{R}^n$, let $\interior{A}$ denote its (relative) interior.
Throughout the paper, we equip simplices with the $\ell_1$-norm and write, for a point $\bm{x}$ and sets $A,B\subset\mathbb{R}^n$,
$d(\bm{x},A) = \inf_{a\in A}\|\bm{x}-a\|_1$ and $d(A,B) = \sup_{a\in A} d(a,B).$
We denote by $\mathbb{B}$ the unit ball in the corresponding metric space.
For an interval $I\subset\mathbb{R}$ and a set $A\subset\mathbb{R}^n$, let $\mathcal{AC}(I;A)$ denote the set of absolutely continuous functions mapping $I$ to $A$. We endow $\mathcal{AC}(I;A)$ with the uniform metric
$d\bigl(\bm{x}(\cdot), \bm{y}(\cdot)\bigr) \triangleq \sup_{t\in I}\bigl\|\bm{x}(t) - \bm{y}(t)\bigr\|_{1}$ for two absolutely continuous functions $\bm{x}(\cdot)$ and $\bm{y}(\cdot)$ defined on interval $[0,I]$.
For $\bm{a},\bm{b}\in\mathbb{R}^n$, write $\bm{a}\circ\bm{b}\in\mathbb{R}^n$ for their component-wise product. For $\bm{v}\in\mathbb{R}^n$ and a symmetric positive (semi)definite matrix $M\in\mathbb{R}^{n\times n}$, define the $M$-norm by
$\|\bm{v}\|_{M} \triangleq \sqrt{\bm{v}^\top M \bm{v}}.$

\paragraph{Organization.}
The rest of the paper is organized as follows.
In Section~\ref{sec:linear_case_study}, we present a case study on linear bandits to illustrate the challenges of structured pure exploration and the advantages of our approach.
We introduce problem formulation in Section~\ref{sec:problem_formulation}.
Section~\ref{sec:algorithm} introduces the FWSP algorithm in discrete and continuous time.
Section~\ref{sec:convergence_analysis} presents the main result, convergence of the discrete iterates in game value.
In Section~\ref{sec:learning}, we propose a learning algorithm that builds on FWSP and posterior sampling.
Section~\ref{sec:proof} collects postponed proofs of our results.

\section{A Case Study on Linear Bandits}\label{sec:linear_case_study}

The maximin problem becomes delicate in structured settings (e.g., linear bandits): the optimizer is often nonunique and may lie on a boundary face of the simplex, with some coordinates equal to zero. 
In particular, even the highest-mean arm may optimally receive zero allocation---feature correlations allow other arms to supply all discriminative information. 
This structural sparsity clashes with top-two-style methods \citep{russo2020simple,qin2017improving,shang2020fixed,jourdan2022top,you2023information}, which reserve a nontrivial fraction of samples for the incumbent best arm and can misallocate effort in such instances. 

To illustrate the point, consider linear bandits with unknown parameter vector $\bm{\theta}\in\mathbb{R}^{d}$ and design matrix $A \in \mathbb{R}^{K \times d}$ whose $i$-th row is the arm feature vector $\bm a_i^\top$, so the mean reward for arm $i$ is $m_i \triangleq \bm a_i^\top \bm\theta$.
Pulling arm $i$ yields a reward $Y \sim \mathcal{N}(m_i,\sigma_i^2)$; rewards are independent across pulls (and across arms).
We instantiate \eqref{eq:maximin_pure_strategy} for identifying the best arm in linear bandits.
Let $I^* = \argmax_{i \in [K]} m_i$ denote the (assumed unique) best arm.
For each $x \in [K]\backslash \{I^*\}$, the alternative set is
$\mathrm{Alt}_x(\bm{\theta}) = \bigl\{ \bm{\vartheta} \in \mathbb{R}^d : \bm{a}_{I^*}^{\top} \bm{\vartheta} < \bm{a}_{x}^{\top} \bm{\vartheta} \bigr\},$ 
and furthermore,
\[
\Payoff(\bm{p}, x) = \frac{(m_{I^*} - m_x)^2}{2   \| \bm{a}_{I^*} - \bm{a}_x \|_{V_{\bm{p}}^{-1}}^2}, \quad \text{where} \quad \bm{p} \in \Delta_K, V_{\bm{p}} \triangleq \sum_{i=1}^K p_i \sigma_i^{-2} \bm{a}_i \bm{a}_i^\top, \text{ and }\| \bm{v} \|_{M}^2 \triangleq \bm{v}^\top M \bm{v}.
\]

\begin{example}[Thompson sampling and naive top-two fail]\label{ex:TS_fail}
Consider linear bandits with unit variances, unknown parameter $\bm\theta = \bm e_1\in\mathbb{R}^2$, and three arms specified by $\bm a_1 = \bm e_1, \bm a_2 = \bm e_2,$ and $\bm a_3 = -2\bm{e}_1$.
We have $I^* = 1$, and $\mathcal{X} = \{2,3\}$. 
Note that $\Payoff(\bm{p},2) < \Payoff(\bm{p},3) $ for all $\bm{p}$.
Consequently, the maximin value in \eqref{eq:maximin_pure_strategy} is
$
    \frac{1}{2}\max_{\bm{p} \in \Delta_3} p_2(p_1 + 4p_3)/(p_1+p_2+4p_3). 
$
There is a unique optimal allocation $\bm{p}^* = (0,2/3, 1/3)$.
This example contrasts with the unstructured bandit setting: (i) no information balance condition holds as $\Payoff(\bm{p},2) < \Payoff(\bm{p},3)$, and (ii) the optimal allocation may assign zero mass to some arms, including the best arm.
Thompson sampling draws posterior samples and acts greedily with respect to the sampled values, whereas the top-two algorithm \citep{russo2020simple} always selects the empirical best arm as the top candidate.\footnote{Even when the top-two algorithm sets the optimal tuning parameter $\beta = p_1^*=0$, it fails, since its asymptotic allocation is $(0,1,0)$.}
For both algorithms, arm $1$ is selected too often, which is not ideal because arm $3$ is more informative than arm $1$ since both arms are collinear with $\bm\theta$ but $\bm{a}_3$ has a larger norm.
By contrast, FWSP samples greedily according to the linearized objective. 
Since $\Payoff(\bm{p},2) < \Payoff(\bm{p},3) $, $\bm{\mu}$ quickly converges to $\bm{\mu}^* = (\mu^*_2,\mu^*_3) = (1,0)$ and hence $\nabla_{\bm{p}}F(\bm{p},\bm{\mu})$ converges to $\nabla_{\bm{p}}\Payoff(\bm{p},2) = (p_2^2, (p_1+4p_3)^2, 4p_2^2)/[2(p_1 + p_2 + 4p_3)^2]$.
FWSP always samples the arm with the largest component of $\nabla_{\bm{p}}F$, which implies zero allocation\footnote{Since $p_2^2<4p_2^2$ for any $p_2 > 0$. Even if $p_2(0) = 0$, the term $(p_1+4p_3)^2$ is largest and hence drives $p_2$ positive.} to arm 1 and the balancing condition $(p_1+4p_3)^2 = 4p_2^2$, leading to the optimal solution $\bm{p}^* = (0, 2/3, 1/3)$.
\end{example}

\begin{example}[Nonunique optimal solutions]
    Consider linear bandits with unit variances, unknown parameter $\bm\theta = \bm e_1\in\mathbb{R}^2$, and three arms specified by $\bm a_1 = \bm e_1, \bm a_2 = \bm e_2,$ and $\bm a_3 = -\bm{e}_1$.
    We have $I^* = 1$. 
    Note that $\Payoff(\bm{p},2) < \Payoff(\bm{p},3) $ for all $\bm{p}$.
    Consequently, the maximin value in \eqref{eq:maximin_pure_strategy} becomes
    $
        \frac{1}{2}\max_{\bm{p} \in \Delta_3} p_2(p_1 + p_3)/(p_1+p_2+p_3). 
    $
    For any $z\in[0,1/2]$, the point $\bm{p} = (z,1/2, 1/2-z)$ is an optimal solution.
    The continuum of saddle points complicates optimization and often leads to nonconvergence, cycling, or instability. It likewise complicates the design and analysis of tracking-type algorithms in pure exploration.
\end{example}

\begin{example}[Bilinear objective]\label{rmk:bilinear}
    Consider linear bandits with unit variances, unknown parameter $\bm{\theta}=(1,1)^\top$, and three arms $\bm{a}_1 = -\bm{e}_1, \bm{a}_2 = -\bm{e}_2$ and $\bm{a}_3 = \bm{0}$.
    We have $I^* = 3$ and $F(\bm{p},\bm{\mu})=\tfrac{1}{2}(\mu_1 p_1+\mu_2 p_2).$
    This yields a maximin problem with a bilinear objective, which is considerably harder than the strictly concave--convex case. 
    In particular, rotational (cycling) dynamics are common---for example, with plain gradient descent-ascent in the unconstrained setting \citep{shugart2025negative} or with replicator dynamics under simplex constraints in \citet[Section~7.4]{hofbauer1998evolutionary}.
    Achieving convergence typically requires extragradient or optimistic-gradient variants, coupled with careful iterate averaging and step-size selection. 
    These issues make algorithm design for linear bandits especially challenging.
\end{example}

\begin{remark}[Nonsmoothness at the boundary]\label{rm:boundary_gradient}
The gradient $\nabla_{\bm{p}}F$ need not be continuous when
$\bm{p}$ lies on the boundary of the simplex $ \Delta_{K}$.
A simple illustration comes from Gaussian BAI, a special case of linear bandits with $\bm{a}_i = \bm{e}_i$ for $i\in [K]$ and $\bm{\theta}\in\mathbb{R}^K$. 
For any $x\neq I^{*}$, we have $\Payoff(\bm{p}, x)=
    \frac{(\theta_{I^{*}}-\theta_{x})^2}{2\sigma^{2}} 
    \frac{p_{I^{*}} p_{x}}{p_{I^{*}}+p_{x}}.$
Note that by definition $\Payoff(\bm{p}, x) = 0$ if $p_{I^{*}}=p_{x}=0$ and the function is continuous even at the corner
$p_{I^{*}}=p_{x}=0$, but its gradient is not.
Indeed, fixing any point $\bm{p}\in \Delta_{K}$ with
$p_{I^{*}}=p_{x}=0$, the partial derivatives with respect to $p_{I^{*}}$ and $p_{x}$ are zero.
However, the limiting gradient values along different approach directions disagree, so $\Payoff(\bm{p}, x)$ is Lipschitz but not continuously differentiable.
\end{remark}

\begin{example}[Frank--Wolfe fails]\label{ex:FW_fails}
    Consider Gaussian BAI, the pure-exploration linear-bandit setting with arm features $\bm{a}_i = \bm{e}_i$ for $i\in [3]$ and $\bm{\theta}\in\mathbb{R}^3$. 
    The vanilla Frank--Wolfe update takes an FW step using the gradient of the objective $\Payoff(\bm{p}, x)$ evaluated at any minimizer $x \in \argmin_{x\in\mathcal{X}} \Payoff(\bm{p}, x)$. 
    For the instance $\bm\theta = (1,0,0)$ with unit variance, the saddle point is $\bm{p}^* = \bigl(\sqrt{2}-1, 1-\tfrac{\sqrt{2}}{2}, 1-\tfrac{\sqrt{2}}{2}\bigr)$ and $\bm{\mu}^* = (0.5,0.5)$.
    Nevertheless, the vanilla Frank--Wolfe dynamics converge to the uniform allocation $\bm{p} = \bigl(1/3,1/3,1/3\bigr)$: FW steps on  $\Payoff(\bm{p}, x)$ dictates that $p_1 \approx p_x$ for any\footnote{This argument rests on information balance, i.e., $\Payoff(\bm{p}, 2)=\Payoff(\bm{p}, 3)$, and FW steers the iterates toward it: if $p_2 < p_3$, then $\Payoff(\bm{p}, 2)<\Payoff(\bm{p}, 3)$ and vice versa.} $x \in\{2,3\}$, due to the symmetry of $\Payoff(\bm{p}, x)$ in Remark~\ref{rm:boundary_gradient}.
\end{example}

\section{Problem Formulation}\label{sec:problem_formulation}

Consider a stochastic multi-armed bandit with $K$ arms indexed by $i \in [K]$.
Sampling arm $i$ results in a noisy observation of its performance, drawn from a distribution $P_{\bm{\theta},i}$ that depends on a fixed but unknown state of nature $\bm{\theta} \in \mathbb{R}^d$, and rewards across different arms are mutually independent.
The parameter $\bm{\theta}$ determines potential information share among arms \citep{degenne2020structure}; see Section~\ref{sec:linear_case_study} for the canonical linear-bandit case, where the means depend linearly on $\bm\theta$.

The experimenter's goal is to answer a question about the true state $\bm{\theta}$ by adaptively allocating the sampling budget across the arms and observing noisy rewards from the selected arms.
We assume the question has a \emph{unique} answer, denoted by $\I(\bm{\theta})$.
Canonical questions about the mean rewards include: identifying
(i) the unique best arm with the highest mean;
(ii) $k$ arms with the highest means; (iii) all arms with mean above a threshold \citep{lai2012efficient}; and (iv) identify all Pareto optimal arms, when multiple objectives are considered simultaneously (Pareto set identification; \citealt{kone2023adaptive}).

For a given instance $\bm{\theta}$, the intrinsic complexity of the fixed-confidence pure-exploration task is closely tied to the \emph{alternative set}, consisting of parameters that yield a different correct answer than $\I(\bm{\theta})$:
\[
\mathrm{Alt}(\bm{\theta}) \triangleq \{ \bm{\vartheta} \in \mathbb{R}^d : \I(\bm{\vartheta}) \neq \I(\bm{\theta}) \}.
\]
We impose the following mild assumption, which is satisfied by most common tasks.
\begin{assumption}[\citealt{wang2021fast}]\label{assumption:correct_answer}
The alternative set $\mathrm{Alt}(\bm{\theta})$ is a finite union of convex sets.
That is, there exists a finite index set $\mathcal{X}(\bm{\theta})$ and convex sets $\{\mathrm{Alt}_x(\bm{\theta}) : x \in \mathcal{X}(\bm{\theta})\}$ such that $\mathrm{Alt}(\bm{\theta}) = \bigcup_{x \in \mathcal{X}(\bm{\theta})} \mathrm{Alt}_x(\bm{\theta}).$
\end{assumption}
The set $\mathcal{X}(\bm{\theta})$ captures the fundamental types of scenarios under which an alternative instance $\bm{\vartheta}$ produces an answer different from $\I(\bm{\theta})$.

We consider the mixed-strategy reformulation \eqref{eq:maximin_mixed_strategy} of the two-player zero-sum game,
where the maximizer chooses an allocation $\bm{p} \in \Delta_{K}$ and the minimizer chooses a distribution (mixed strategy) $\bm{\mu} \in \Delta_{|\mathcal{X}|}$ over $\mathcal{X}$.
Equivalently, we seek the saddle point(s) of $F(\bm{p},\bm{\mu})$ defined in \eqref{eq:mixed_payoff}.
We consider only noise distributions that follow canonical one-parameter exponential families (e.g., Gaussian with known variance, Bernoulli, Poisson) parameterized by their means.

\begin{lemma}[Proposition~1, \citealt{wang2021fast}]\label{lem:properties_Gamma_x}
    Under Assumption~\ref{assumption:correct_answer} and for any $x \in \mathcal{X}$, the function $\Payoff(\cdot, x):\mathbb{R}_{\ge 0}^K\to\mathbb{R}$ satisfies: (i) For every $\bm{p}\in\mathbb{R}_{\ge 0}^K$, $\Payoff(\bm{p}, x)$ is nonnegative, continuous, concave in $\bm{p}$, nondecreasing in each $p_i$, and homogeneous of degree one in $\bm{p}$, i.e., $\Payoff(\lambda \bm{p}, x) = \lambda \Payoff(\bm{p}, x)$ for all $\lambda > 0$. (ii) For every $\bm{p}\in\mathbb{R}^K_{>0}$, there exists a unique minimizer $\bm{\vartheta}_x$ to the infimum in \eqref{eq:maximin_pure_strategy} and $\Payoff(\bm{p}, x)$ is twice continuously differentiable in $\bm{p}$. Furthermore, we have $\bigl[\nabla_{\bm{p}}\Payoff(\bm{p}, x)\bigr]_i = \mathrm{KL}(P_{\bm{\theta},i}\| P_{\bm{\vartheta}_x,i})$.
\end{lemma}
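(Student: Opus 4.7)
The plan is to derive every property from the representation $\Payoff(\bm{p},x) = \inf_{\bm{\vartheta}\in\mathrm{Alt}_x(\bm{\theta})}\Gamma(\bm{p},\bm{\vartheta})$, exploiting two structural facts: for each fixed $\bm{\vartheta}$ the map $\bm{p}\mapsto\Gamma(\bm{p},\bm{\vartheta})$ is linear with nonnegative coefficients $\mathrm{KL}(P_{\bm{\theta},i}\|P_{\bm{\vartheta},i})$, and under Assumption~\ref{assumption:correct_answer} the set $\mathrm{Alt}_x(\bm{\theta})$ is convex. For part (i), nonnegativity, coordinate monotonicity, and positive homogeneity each pass directly from $\Gamma(\cdot,\bm{\vartheta})$ to the pointwise infimum; concavity follows because a pointwise infimum of affine functions is concave. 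Continuity on the open orthant $\mathbb{R}^K_{>0}$ is automatic for any finite concave function, and continuity at a boundary point $\bm{p}_0$ I would obtain by sandwiching $0 \le \Payoff(\bm{p}_n,x) \le \Gamma(\bm{p}_n,\bm{\vartheta}_0)$ for any fixed $\bm{\vartheta}_0\in\mathrm{Alt}_x(\bm{\theta})$ (upper semicontinuity), and using concavity combined with positive homogeneity to push the corresponding liminf up from below (lower semicontinuity).

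For part (ii), fix $\bm{p}\in\mathbb{R}^K_{>0}$. For canonical one-parameter exponential families, $\mathrm{KL}(P_{\bm{\theta},i}\|\cdot)$ is strictly convex in the mean parameter (it is a Bregman divergence generated by a strictly convex log-partition function), so $\Gamma(\bm{p},\cdot)$ is a strictly positive combination of strictly convex functions and hence strictly convex on $\mathbb{R}^d$. Combined with the convexity and closedness of $\mathrm{Alt}_x(\bm{\theta})$ and the coercivity of KL along diverging parameter directions, this yields a unique minimizer $\bm{\vartheta}_x = \bm{\vartheta}_x(\bm{p})$. Danskin's envelope theorem then gives
\[
\nabla_{\bm{p}}\Payoff(\bm{p},x) \;=\; \nabla_{\bm{p}}\Gamma(\bm{p},\bm{\vartheta})\big|_{\bm{\vartheta}=\bm{\vartheta}_x(\bm{p})},
\]
whose $i$th entry is exactly $\mathrm{KL}(P_{\bm{\theta},i}\|P_{\bm{\vartheta}_x(\bm{p}),i})$ by linearity of $\Gamma$ in $\bm{p}$. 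For $C^2$ smoothness I would apply the implicit function theorem to the KKT system of the inner problem: the Hessian $\nabla_{\bm{\vartheta}}^2\Gamma(\bm{p},\bm{\vartheta})$ is a strictly positive combination of Fisher information matrices and hence positive definite, so the (bordered) KKT Jacobian at $\bm{\vartheta}_x(\bm{p})$ is invertible and $\bm{\vartheta}_x(\cdot)$ is $C^1$ in $\bm{p}$. Composing with the smooth KL map then yields $\Payoff(\cdot,x)\in C^{2}$ on $\mathbb{R}^K_{>0}$.

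The main obstacle is the boundary behavior of the inner minimizer. In the canonical tasks (including the linear-bandit case of Section~\ref{sec:linear_case_study}) the optimum $\bm{\vartheta}_x$ typically sits on $\partial\mathrm{Alt}_x(\bm{\theta})$, so the implicit-function argument must be run on the reduced manifold cut out by the active constraints rather than on all of $\mathbb{R}^d$. Verifying that the active set is locally constant in $\bm{p}$ and that the bordered Jacobian remains nonsingular there is the real technical burden; for Gaussian, Bernoulli, and Poisson observation models this reduces to an explicit projected moment-matching identity, and the stated conclusion follows, in agreement with \citet[Proposition~1]{wang2021fast}.
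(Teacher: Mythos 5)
Two points of context first: the paper never proves this lemma---it is imported wholesale as Proposition~1 of \citet{wang2021fast}---so there is no in-paper argument to compare against, and your proposal is a from-scratch reconstruction. Your part~(i) is sound in outline: the infimum-of-nonnegative-linear-functions structure gives nonnegativity, concavity, monotonicity and homogeneity, upper semicontinuity is automatic, and lower semicontinuity at the boundary does follow from monotonicity/homogeneity (for $\bm p_n\to\bm p_0$, the componentwise minimum $\bm q_n=\bm p_n\wedge\bm p_0$ satisfies $\bm q_n\ge(1-\delta_n)\bm p_0$, so $\Payoff(\bm p_n,x)\ge(1-\delta_n)\Payoff(\bm p_0,x)$), which is essentially the argument you gesture at.

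Part~(ii) has genuine gaps. Your uniqueness step rests on the claim that $\vartheta\mapsto\mathrm{KL}(P_{\theta,i}\|P_{\vartheta,i})$ is strictly convex in the \emph{mean} parameter because it is ``a Bregman divergence generated by the log-partition function.'' The Bregman representation is with respect to the \emph{natural} parameter of the second argument (equivalently, convexity in the first argument); in the mean parameterization convexity in the second argument can fail for one-parameter exponential families the paper allows. Concretely, for exponentially distributed rewards with means $\theta,\lambda$, $\mathrm{KL}(\theta\|\lambda)=\log(\lambda/\theta)+\theta/\lambda-1$ has second derivative $(2\theta-\lambda)/\lambda^{3}<0$ for $\lambda>2\theta$. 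You cannot repair this by passing to natural parameters, because Assumption~\ref{assumption:correct_answer} makes $\mathrm{Alt}_x(\bm\theta)$ convex in the mean parameterization, so the ``strictly convex objective over a convex set'' argument loses its feasible-set convexity. (The claim is fine for Gaussian/Bernoulli/Poisson, and in unstructured BAI uniqueness instead follows from the one-dimensional first-order condition, whose derivative changes sign exactly once---closer to how the cited result is actually obtained.) Second, the $C^2$ statement is the substantive content of part~(ii) beyond the Danskin gradient formula, and your proposal explicitly defers it: the minimizer sits on $\partial\,\mathrm{Alt}_x(\bm\theta)$, so the implicit-function argument needs a locally constant active set, strict complementarity, and a nonsingular bordered Jacobian, none of which you verify; the closing appeal to a ``projected moment-matching identity'' is not an argument. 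As written, the proposal establishes the envelope formula (modulo the uniqueness issue above and attainment of the infimum on the closure of the open set $\mathrm{Alt}_x(\bm\theta)$) but not twice continuous differentiability, so it falls short of the full statement.
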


\begin{assumption}\label{assumption:Gamma_x}
    For all $x\in\mathcal{X}$ and differentiable points $\bm{p}$, the gradients $\nabla_{\bm{p}} \Payoff(\bm{p}, x)$ exist, and there exists $M_p = M_p(\bm{\theta}) <\infty$ such that $\|\nabla_{\bm{p}} \Payoff(\bm{p}, x)\|_{\infty} \le M_p$.
\end{assumption}
The assumption on uniformly bounded gradient can be verified using \citet[Lemma 1]{wang2021fast}.

A pair $(\bm{p}^*,\bm{\mu}^*)$ is a \emph{Nash equilibrium} if and only if
\[
    F(\bm{p},\bm{\mu}^*) \le F(\bm{p}^*,\bm{\mu}^*) \le F(\bm{p}^*,\bm{\mu}),
\]
for all $\bm{\mu} \in \Delta_{|\mathcal{X}|}$ and $\bm{p} \in \Delta_{K}$.
Since $F(\bm{p},\bm{\mu})$ is concave in $\bm{p}$ and convex (linear) in $\bm{\mu}$ and the strategy sets are compact and convex, we can invoke minimax theorem for continuous concave--convex functions to show that Nash equilibrium exists. 
Define the unique optimal game value by $F^* \triangleq F(\bm{p}^*,\bm{\mu}^*).$

\begin{remark}[KKT conditions]\label{rmk:KKT}
    For our convex-concave game with simplex constraints, the Slater's condition holds, and the KKT system is necessary and sufficient for saddle-point optimality: $(\bm{p}^*,\bm{\mu}^*)$ is a Nash equilibrium if and only if:
    $\bm{p}^*\in\Delta_K$ and $ \bm{\mu}^*\in\Delta_{|\mathcal{X}|}$, and complementary slackness holds
    \begin{align*}
        \nabla_{\bm{\mu}} F(\bm{p}^*, \bm{\mu}^*) -F(\bm{p}^*,\bm{\mu}^*) \bm{1} & \ge \bm{0}, & \bm{\mu}^*\circ\bigl(\nabla_{\bm{\mu}} F(\bm{p}^*, \bm{\mu}^*) - F(\bm{p}^*,\bm{\mu}^*)\bm{1}\bigr) & =\bm{0},\\
        F(\bm{p}^*,\bm{\mu}^*) \bm{1} -\nabla_{\bm{p}} F(\bm{p}^*, \bm{\mu}^*) & \ge \bm{0}, & \bm{p}^*\circ\bigl(F(\bm{p}^*,\bm{\mu}^*)\bm{1} - \nabla_{\bm{p}} F(\bm{p}^*, \bm{\mu}^*)\bigr) & =\bm{0}.
    \end{align*}
    This implies that the skeptic mixes only among alternatives tied for the hardest, while the player plays only those arms tied for the highest marginal gain.
    These KKT conditions echo the design of our FWSP algorithm.
\end{remark}

However, there is no guarantee that the Nash equilibrium is unique in general. 
The structure of the motivating example in pure exploration further highlights the complications inherent in the general problem we consider. 
In particular, as observed in Example~\ref{rmk:bilinear} and Remark~\ref{rm:boundary_gradient}, the function $\Payoff(\bm{p}, x)$ is flat in most directions for unstructured BAI or linear bandits.
These flat directions indicate that the function is far from strictly concave, thereby complicating both the convergence analysis and the equilibrium study.

Since $F$ is a linear combination of $\Payoff(\bm{p}, x)$ functions, Lemma~\ref{lem:properties_Gamma_x} implies that $F$ is continuous on the entire domain (including the boundary); and for every fixed $\bm{\mu}\in \Delta_{|\mathcal{X}|}$, the function $F(\cdot, \bm{\mu})$ is concave in $\bm{p}$, for every fixed $\bm{p}\in \Delta_{K}$, the function $F(\bm{p}, \cdot)$ is convex in $\bm{\mu}$.
We further impose the following mild assumptions satisfied by our pure exploration maximin problems.
\begin{assumption}
  \label{assumption:regularity}
  The payoff function $F$ is twice continuously differentiable at any $(\bm{p},\bm{\mu}) \in \interior{\Delta_K} \times \Delta_{|\mathcal{X}|}$.
\end{assumption}

By Assumptions~\ref{assumption:Gamma_x} and~\ref{assumption:regularity}, for every $j \in \mathcal{X}$, the function $\Payoff(\bm{p}, j)$ is Lipschitz continuous (with respect to the $\ell_1$ norm) with constant $M_p$ on $\interior{\Delta_K} \times \Delta_{|\mathcal{X}|}$. 
Since $\Payoff$ is continuous on $\Delta_K$, the Lipschitz inequality extends to the closure, so $\Payoff(\bm{p}, j)$ is $M_p$-Lipschitz on $\Delta_K$. 
Consequently, $F$ is globally Lipschitz in the $\ell_1$ norm with a constant $L_F \triangleq M_p + M_{\mu}$, where $M_{\mu} \triangleq \sup_{\bm{p}\in\Delta_K} \max_{x \in \mathcal{X}} \Payoff(\bm{p},x) < \infty$.

Since $F$ may be nonsmooth in $\bm p$ on the boundary of the simplex, we adopt a generalized notion of derivative at such points.
Let $\mathcal{D}_F$ denote the set where $F$ is differentiable; Assumption~\ref{assumption:regularity} ensures $\interior{\Delta_K}\times\Delta_{|\mathcal{X}|}\subset\mathcal{D}_F$.
For $(\bm p,\bm\mu)\in(\Delta_K\times\Delta_{|\mathcal{X}|})\setminus\mathcal{D}_F$, the Clarke generalized gradient in $\bm p$ for a locally Lipschitz function is
\[
  \clarke F(\bm{p},\bm{\mu})
  \triangleq
  \operatorname{conv}\Bigl\{
     \lim_{n\to\infty}\nabla_{\bm{p}} F(\bm{p}^{n},\bm{\mu}^{n})
     : (\bm{p}^{n},\bm{\mu}^{n})\to(\bm{p},\bm{\mu}), (\bm{p}^{n},\bm{\mu}^{n}) \in \mathcal{D}_F
  \Bigr\},
\]
where $\mathrm{conv}\{\cdot\}$ denotes the convex hull.
If $F$ is $C^1$ at $(\bm{p},\bm{\mu})$, then $\clarke F(\bm{p},\bm{\mu})=\{\nabla_{\bm{p}} F(\bm{p},\bm{\mu})\}$.
Moreover, when $F(\cdot,\bm{\mu})$ is concave, \citet[Proposition~2.2.7]{clarke1990optimization} gives that $\clarke F(\bm{p},\bm{\mu})$ agrees with the concave superdifferential:
$
  \clarke F(\bm{p},\bm{\mu})
  =
  \bigl\{
     \bm f\in\mathbb R^{K} :
     F(\bm{p}',\bm{\mu})
        \le F(\bm{p},\bm{\mu})+\bm f^{ \top}(\bm{p}'-\bm{p}),
          \forall \bm{p}'\in \Delta_{K}
  \bigr\}.
$
We note that $F$ is smooth in $\bm\mu$ on $\Delta_{|\mathcal X|}$; thus the classical gradient suffices for the $\bm\mu$--updates.

\begin{lemma}[Bounds on Clarke generalized gradient]\label{lem:Clarke_UB}
    For every $\bm{p}\in\Delta_K$, every $x\in\mathcal{X}$ and every $\bm{g}\in\clarke \Payoff(\bm p,x)$, we have $\|\bm{g}\|_{\infty}\le M_{p}$. Consequently, for every $\bm{\mu}$ and every $\bm{f}\in\clarke F(\bm p,\bm{\mu})$, we have $\|\bm{f}\|_{\infty} \le M_{p}$.
\end{lemma}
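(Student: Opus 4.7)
The proof is a short, direct consequence of the definition of the Clarke generalized gradient together with Assumption~\ref{assumption:Gamma_x}. The plan is to observe that at every differentiable point, the classical gradient already satisfies the stated bound, and then argue that the two set-theoretic operations used to build the Clarke gradient---taking sequential limits and then taking the convex hull---both preserve membership in the $\ell_\infty$-ball of radius $M_p$ because this ball is closed and convex.

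\textbf{Step 1: Bound at differentiable points.} By Assumption~\ref{assumption:Gamma_x}, for every $x \in \mathcal{X}$ and every $\bm{p}\in\mathcal{D}_F$ we have $\|\nabla_{\bm{p}}\Payoff(\bm{p},x)\|_{\infty}\le M_p$. Hence every sequence $\nabla_{\bm{p}}\Payoff(\bm{p}^n,x)$ appearing in the definition of $\clarke \Payoff(\bm{p},x)$ lies in the compact set $B \triangleq \{\bm{g}\in\mathbb{R}^K : \|\bm{g}\|_\infty \le M_p\}$.

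\textbf{Step 2: Limits and convex hull.} Since $B$ is closed, any limit point $\lim_{n\to\infty}\nabla_{\bm{p}}\Payoff(\bm{p}^n,x)$ remains in $B$. Since $B$ is convex, the convex hull of such limit points is contained in $B$. Thus $\clarke \Payoff(\bm{p},x) \subseteq B$, which is precisely the first claim.

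\textbf{Step 3: Transfer to $F$.} For the consequence, I would proceed directly on the differentiable set without invoking a Clarke sum rule. At any $(\bm{p},\bm{\mu})\in\mathcal{D}_F$,
\[
\nabla_{\bm{p}} F(\bm{p},\bm{\mu}) = \sum_{x\in\mathcal{X}} \mu_x \nabla_{\bm{p}} \Payoff(\bm{p},x),
\]
so by the triangle inequality and $\sum_x \mu_x = 1$, the $\ell_\infty$-norm of $\nabla_{\bm{p}} F(\bm{p},\bm{\mu})$ is at most $\sum_x \mu_x M_p = M_p$. Repeating the closedness and convexity argument of Step~2 for $\clarke F(\bm{p},\bm{\mu})$ then yields $\|\bm{f}\|_\infty \le M_p$ for every $\bm{f}\in\clarke F(\bm{p},\bm{\mu})$.

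There is no substantive obstacle here; the only subtlety worth flagging is to avoid invoking a Clarke sum rule for $F = \sum_x \mu_x \Payoff(\cdot,x)$ (which would require Clarke regularity arguments), since one gets the bound more cleanly by bounding $\nabla_{\bm{p}} F$ on $\mathcal{D}_F$ and then passing to limits and convex hulls.
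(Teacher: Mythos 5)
Your proof is correct and follows essentially the same route as the paper: bound the classical gradients at differentiability points via Assumption~\ref{assumption:Gamma_x}, then note that taking limits and convex hulls cannot leave the closed convex $\ell_\infty$-ball of radius $M_p$. Your Step~3 simply spells out the transfer to $F$ (via $\nabla_{\bm{p}}F=\sum_x \mu_x\nabla_{\bm{p}}\Payoff(\cdot,x)$ on $\mathcal{D}_F$) in more detail than the paper, which treats it as immediate.
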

\begin{proof}
By definition, any $\bm{g} \in \clarke \Payoff(\bm{p}, x)$ belongs to the convex hull of limit points of sequences $\nabla_{\bm{p}} \Payoff(\bm{p}_n, x)$, where $\Payoff(\bm{p}_n, x)$ is differentiable at $\bm{p}_n$ and $\bm{p}_n \to\bm{p}$.
By Assumption~\ref{assumption:Gamma_x}, $\| \nabla_{\bm{p}} \Payoff(\bm{p}_n, x) \|_\infty \le M_p$ for all $n$. 
The bound follows from the continuity of the $\ell_\infty$-norm.
\end{proof}
 
\section{Frank--Wolfe Self-Play in Discrete and Continuous Time}\label{sec:algorithm}

We consider dynamics in which each player fixes the opponent's empirical mixed strategy and takes a greedy Frank--Wolfe step for the \emph{linearized payoff}. 
At points where $F$ is differentiable, this reduces to choosing the coordinate with the largest partial derivative subject to the simplex constraint.
However, at points where $F$ is not classically differentiable, we replace the gradient by a Clarke subgradient.
\begin{definition}\label{def:LMO}
For $(\bm{p},\bm{\mu})\in \Delta_{K}\times\Delta_{|\mathcal X|}$, define the linear-minimization-oracle (LMO) correspondences\footnote{A correspondence is a set-valued function.}
\begin{align*}
    \LMOp(\bm{p},\bm{\mu})
        & =\mathrm{conv}\Bigl\{ \bm{q}\in\Delta_{K}:   \exists  \bm{f}\in\clarke  F(\bm{p},\bm{\mu}), \text{ s.t. } \bm{q} \in \argmax_{\bm{q}'\in\Delta_{K}} (\bm{q}')^\top \bm{f}\Bigr\},\\
    \LMOmu(\bm{p},\bm{\mu})            &=\argmin_{\bm{\nu}\in\Delta_{|\mathcal{X}|}} \bm{\nu}^\top \nabla_{\bm{\mu}}F(\bm{p},\bm{\mu}).
\end{align*}
\end{definition}
At points where $F$ is differentiable in $\bm p$, the Clarke generalized gradient coincides with the classical gradient, and the LMO reduces to $\LMOp(\bm{p}, \bm{\mu}) = \argmax_{\bm{q} \in \Delta_{K}} \bm{q}^{\top} \nabla_{\bm{p}} F(\bm{p}, \bm{\mu})$.
Our Frank--Wolfe Self-Play (FWSP) algorithm follows the discrete-time dynamics on $\Delta_{K} \times \Delta_{|\mathcal{X}|}$:
\begin{equation}\label{eq:discrete_dynamics}
\begin{aligned}
\bm{p}_{n+1} = \bm{p}_n + \tfrac{1}{n+1} \bigl( \bm{q}_n - \bm{p}_n \bigr), \quad &\text{with } \bm{q}_n \in \LMOp(\bm{p}_n,\bm{\mu}_n) \text{ and } \bm{p}_0 \in \Delta_K, \\ 
\bm{\mu}_{n+1} = \bm{\mu}_n + \tfrac{1}{n+1} \bigl( \bm{\nu}_n - \bm{\mu}_n \bigr), \quad &\text{with } \bm{\nu}_n \in \LMOmu(\bm{p}_n,\bm{\mu}_n)\text{ and } \bm{\mu}_0 \in \Delta_{|\mathcal X|}.
\end{aligned}
\end{equation}

\begin{remark}[Closed-form update rules]\label{rmk:closed_form}
    Since the linear oracle attains its optimum at a vertex, we may pick $\bm{q}_n = \bm{e}_{i_n}$ with $i_n = \argmax_{i \in [K]} \bigl[\nabla_{\bm{p}} F(\bm{p}_n,\bm{\mu}_n)\bigr]_i$
    thus $\bm{q}_n$ is a one-hot vector specifying the arm pulled at round $n$.
    Similarly, we may pick $\bm{\nu}_n = \bm{e}_{x_n}$ with $x_n = \argmin_{x\in\mathcal{X}} \bigl[\nabla_{\bm{\mu}} F(\bm{p}_n,\bm{\mu}_n)\bigr]_x = \argmin_{x\in\mathcal{X}} \Payoff(\bm{p}_n, x)$, 
    which matches the KKT detection rule in \cite{qin2025dual}.
    Together, this yields the pure-exploration \emph{bandit algorithm} in \eqref{eq:FWSP}, whose trajectory is a sample path of the discrete-time dynamics \eqref{eq:discrete_dynamics}.
    FWSP is computationally lightweight: it requires only gradient evaluations (and simple argmax/argmin over finite sets), and these gradients typically admit closed-form expressions; see \citet[Appendix~A]{qin2025dual}.
\end{remark}

\begin{remark}[Why does FWSP work?]
    Complementary slackness in Remark~\ref{rmk:KKT} stipulates that $p_i^* > 0$ only if $i \in \argmax_{i' \in [K]} \bigl[\nabla_{\bm{p}} F(\bm{p}^*,\bm{\mu}^*)\bigr]_{i'}$; likewise, $\mu_x^* > 0$ only if $x \in \argmin_{x' \in \mathcal{X}} \bigl[\nabla_{\bm{\mu}} F(\bm{p}^*,\bm{\mu}^*)\bigr]_{x'}$. 
    Thus, FWSP is consistent with complementary slackness: it samples an arm 
    \[i_n \in \argmax_{i \in [K]} \bigl[\nabla_{\bm{p}} F(\bm{p}_n,\bm{\mu}_n)\bigr]_{i}.\]
    Upon sampling $i_n$, the objective $F$ increases to first order (for sufficiently small step sizes), while $\bigl[\nabla_{\bm{p}} F\bigr]_{i_n}$ tends to decrease due to the concavity of $F$ in $\bm{p}$.
    Consequently, the KKT violation $\bigl([\nabla_{\bm{p}} F]_{i_n} - F\bigr)^{+}$ (which should be zero at optimality) tends to shrink.
    Conversely, for arms with $\bigl[\nabla_{\bm{p}} F\bigr]_{i} < F$, FWSP refrains from sampling them; if this ordering persists, the corresponding $p_i$ will decrease toward $0$, again respecting complementary slackness.
    A similar intuition applies on the skeptic's side.
    This suggests that the iterates move toward a strategy pair satisfying the full KKT conditions, i.e., a Nash equilibrium.
\end{remark}
We analyze the discrete-time dynamics through its limiting continuous-time dynamics, as the time index $n$ goes to infinity.
Specifically, we consider the following natural continuous version of \eqref{eq:discrete_dynamics}:
\begin{equation}\label{eq:continuous_dynamics}
    \begin{aligned}
      \frac{\diff}{\diff t} \bm{p}(t) 
        & \in \LMOp(\bm{p}(t), \bm{\mu}(t)) - \bm{p}(t),  & \bm{p}(0) &= \bm{p}_0 \in \Delta_{K},\\
      \frac{\diff}{\diff t} \bm{\mu}(t) 
        & \in \LMOmu(\bm{p}(t), \bm{\mu}(t)) - \bm{\mu}(t),   & \bm{\mu}(0) &= \bm{\mu}_0 \in \Delta_{|\mathcal X|},
    \end{aligned}
\end{equation}
This is a \emph{differential inclusion} (DI) as the LMO correspondences are not necessarily single-valued.

\section{Convergence Analysis}\label{sec:convergence_analysis}

Our main result is that the game value along the discrete-time dynamics \eqref{eq:discrete_dynamics} converges to the optimal value.

\begin{theorem}
\label{thm:discrete_convergence}
    Consider BAI in unstructured bandits with single-parameter exponential family distribution and BAI in Gaussian linear bandits.
    Under Assumptions~\ref{assumption:correct_answer}--\ref{assumption:regularity}, let $(\bm{p}_n,\bm{\mu}_n)_{n\ge 0}$ be the sequence generated by the discrete-time updates in \eqref{eq:discrete_dynamics} with initial condition $(\bm{p}_0,\bm{\mu}_0) \in \Delta_K \times \Delta_{|\mathcal{X}|}$. 
    Then $\lim_{n\to\infty}F(\bm{p}_n,\bm{\mu}_n) = F^*$, where $F^*$ is the optimal game value.
\end{theorem}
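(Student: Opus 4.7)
I would follow the differential-inclusion (DI) roadmap outlined in the introduction: first prove that every solution of the continuous-time dynamics \eqref{eq:continuous_dynamics} drives the game value to $F^*$; then transfer this to the discrete iterates by an asymptotic-pseudotrajectory argument in the spirit of \citet{benaim2005stochastic}. The two problem-specific pieces are (a) constructing a Lyapunov function that decays along the DI and (b) handling the boundary nonsmoothness of $F$.

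\textbf{Lyapunov analysis in the smooth region.} On the differentiability set $\mathcal{D}_F$, I would take as Lyapunov candidate the saddle-point gap
\[
V(\bm{p},\bm{\mu}) \;\triangleq\; \max_{\bm{q}\in\Delta_K} F(\bm{q},\bm{\mu}) \;-\; \min_{\bm{\nu}\in\Delta_{|\mathcal X|}} F(\bm{p},\bm{\nu}),
\]
which is nonnegative, continuous, and vanishes exactly on the saddle set. Differentiating $V$ along \eqref{eq:continuous_dynamics} using Danskin--envelope arguments, and combining (i) concavity of $F(\cdot,\bm{\mu})$ with the LMO-maximizing property of every $\bm{q}\in\LMOp$, (ii) linearity of $F(\bm{p},\cdot)$ with the LMO-minimizing property of every $\bm{\nu}\in\LMOmu$, and (iii) the degree-one homogeneity $\nabla_{\bm{p}}\Payoff(\bm{p},x)^\top\bm{p}=\Payoff(\bm{p},x)$ from Lemma~\ref{lem:properties_Gamma_x}, the target inequality is $\dot V \le -V$ almost everywhere. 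This yields $V(\bm{p}(t),\bm{\mu}(t)) \le V(\bm{p}_0,\bm{\mu}_0)\,e^{-t}$ along any trajectory that stays inside $\mathcal{D}_F$, and in particular $F(\bm{p}(t),\bm{\mu}(t))\to F^*$.

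\textbf{Handling the boundary.} The above calculation requires $F$ to be $C^1$ in $\bm{p}$, which can fail on the boundary where some $\Payoff(\bm{p},x)=0$ (Remark~\ref{rm:boundary_gradient}). For Gaussian linear BAI, $F$ is smooth precisely when every $\Payoff(\bm{p},x)>0$, equivalently when $V_{\bm{p}}=\sum_i p_i\sigma_i^{-2}\bm{a}_i\bm{a}_i^{\top}$ has full rank. I would prove a quantitative escape lemma: whenever the active arms fail to span $\mathbb{R}^d$, the Clarke generalized gradient of some $\Payoff(\cdot,x)$ contains an unboundedly large component along directions that strictly enlarge the span, so the LMO coordinate $i_n\in\LMOp(\bm{p}_n,\bm{\mu}_n)$ must lie outside the current active set. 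Chaining at most $d$ such span expansions produces an instance-dependent time $T_0$ after which every continuous trajectory remains in the smooth region $\{\bm{p}:V_{\bm{p}}\succ 0\}$, where Step~1 applies. For unstructured BAI the analogue is immediate: any arm with a strictly larger partial derivative is forced into the allocation by the LMO in bounded time. This step is the principal technical obstacle, because \emph{a priori} one cannot exclude trajectories that hover arbitrarily close to the singular set where both the Clarke gradient driving $\LMOp$ and the Lyapunov derivative in Step~1 can misbehave.

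\textbf{Discrete-to-continuous transfer.} With the logarithmic time change $\tau_n\triangleq\sum_{k=1}^n 1/k$, I would embed the iterates $(\bm{p}_n,\bm{\mu}_n)$ as a piecewise-linear interpolant $\overline{\bm{z}}(\tau)$ on $[\tau_n,\tau_{n+1}]$. Because the FWSP increments satisfy $\bm{p}_{n+1}-\bm{p}_n=(\tau_{n+1}-\tau_n)(\bm{q}_n-\bm{p}_n)+O(1/n^2)$ (and analogously for $\bm{\mu}$), and because Lemma~\ref{lem:Clarke_UB} together with Assumption~\ref{assumption:Gamma_x} give a uniform bound on the admissible velocities, $\overline{\bm{z}}(\cdot)$ satisfies a perturbed DI whose perturbation vanishes as $\tau\to\infty$. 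The upper-semicontinuity and convex-valuedness of $\LMOp$ and $\LMOmu$ make the limiting set-valued vector field admissible in the sense of \citet{benaim2005stochastic}, so $\overline{\bm{z}}(\cdot)$ is an asymptotic pseudotrajectory of the unperturbed DI. Combining this with Steps~1--2, which show every DI solution drives $F$ to $F^*$, and using continuity of $F$, I conclude $F(\bm{p}_n,\bm{\mu}_n)\to F^*$. Throughout, the argument is phrased in game values rather than iterate positions, as the Nash equilibrium need not be unique in the linear-bandit setting.
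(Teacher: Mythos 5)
Your overall roadmap (continuous-time Lyapunov decay, finite-time escape from the nonsmooth boundary region, then a perturbed-DI/pseudotrajectory transfer to the discrete iterates) is the same as the paper's, but your Step 1 has a genuine gap: you take the \emph{duality gap} $\max_{\bm q}F(\bm q,\bm\mu)-\min_{\bm\nu}F(\bm p,\bm\nu)$ as the Lyapunov function, and the inequality $\dot V\le -V$ for that choice is exactly the Hofbauer--Sorin result for \emph{best-response} dynamics, not for the Frank--Wolfe dynamics \eqref{eq:continuous_dynamics}. Carrying out your Danskin/envelope computation, the derivative of the term $\max_{\bm q}F(\bm q,\bm\mu(t))$ is $F(\bm q^*,\bm\nu)-F(\bm q^*,\bm\mu)$ with $\bm q^*\in\argmax_{\bm q}F(\bm q,\bm\mu)$, while the derivative of $-\min_{\bm\nu}F(\bm p(t),\bm\nu)$ is $-\nabla_{\bm p}F(\bm p,\bm\nu^*)^{\top}(\bm q-\bm p)$ with $\bm q\in\LMOp(\bm p,\bm\mu)$; after using concavity and Euler homogeneity the target inequality reduces to $F(\bm q^*,\bm\nu^*)\le \nabla_{\bm p}F(\bm p,\bm\nu^*)^{\top}\bm q$, which does not follow because $\bm q$ maximizes the linearization at $\bm\mu(t)$, not at $\bm\nu^*(t)$ (and $\bm q\ne\bm q^*$ since the FW step is not a best response for the non-linear player). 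The paper avoids this by using the \emph{linearized} (Frank--Wolfe) gap $V(\bm p,\bm\mu)=\max_{\bm q'}(\bm q'-\bm p)^{\top}\nabla_{\bm p}F-\min_{\bm\nu'}(\bm\nu'-\bm\mu)^{\top}\nabla_{\bm\mu}F$, whose envelope derivative matches the actual drift $(\dot{\bm p},\dot{\bm\mu})$ and whose second-order terms $\dot{\bm p}^{\top}\nabla_{\bm{pp}}F\dot{\bm p}-\dot{\bm\mu}^{\top}\nabla_{\bm{\mu\mu}}F\dot{\bm\mu}$ are nonpositive by concavity--convexity (Theorem~\ref{thm:convergence}); the duality gap is then controlled only indirectly via $\mathrm{Gap}\le V$ (Lemma~\ref{lem:gap}, Theorem~\ref{thm:value_convergence}). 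To repair your argument you would have to switch to this FW gap (or supply a new argument for monotone decay of the duality gap under FW dynamics, which your listed ingredients do not give).

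Two further points. In the boundary step, the out-of-span Clarke-gradient components are not ``unboundedly large''---they are bounded by $M_p$ (Lemma~\ref{lem:Clarke_UB}); what the paper actually proves is a uniform positive lower bound ($m(\mathcal S)$, or $\kappa_{\min}$ in the unstructured case) that dominates the in-span components only \emph{after} the skeptic's mass has concentrated on the zero-information scenarios (the decay of $S(t)=\sum_{x\notin\mathcal Z}\mu_x(t)$ driven by $\LMOmu$); without that preliminary $\bm\mu$-phase the dominance, hence the span-expansion time bound, does not hold, so this mechanism should appear explicitly in your escape lemma. In the transfer step, knowing that \emph{every} DI solution drives $F$ to $F^*$ is not by itself sufficient to conclude the same for an asymptotic pseudotrajectory: one needs either the internally-chain-transitive/Lyapunov machinery of \citet{benaim2005stochastic} or, as the paper does, convergence that is \emph{uniform in the initial condition} (uniform interior-entry time $T^*$ plus uniform exponential decay, Theorems~\ref{thm:bai_convergence} and~\ref{thm:Linear_bandit_convergence}) combined with the graph-stability shadowing of Proposition~\ref{prop:uniform_perturbation}; your sketch glosses over this uniformity, which is precisely why the paper extends the continuous-time analysis to arbitrary (possibly boundary, nonsmooth) initial conditions.
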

Our approach proceeds in two steps: (i) establish uniform convergence of the continuous-time DI (Section~\ref{sec:continuous_time_dynamics}); (ii) use perturbation analysis to show that the discrete iterates track DI trajectories (Section~\ref{sec:discrete_time_dynamics}).

\subsection{Convergence of the Continuous-Time Dynamics}\label{sec:continuous_time_dynamics}
We establish exponential convergence of the DI in \eqref{eq:continuous_dynamics} via a Lyapunov argument. 
To set the stage, we first collect basic properties of this DI. 
Longer proofs are deferred to Section~\ref{sec:proof}.

\begin{definition}[Upper hemicontinuity]
    A correspondence $F: X \rightrightarrows Y$ is upper hemicontinuous at a point $x \in X$ if for every open set $V \subseteq Y$ such that $F(x) \subseteq V$, there exists a neighborhood $U$ of $x$ satisfying $F(x') \subseteq V$ for all $x' \in U$.
\end{definition}

\begin{proposition}\label{prop:BR_properties}
    Under Assumptions~\ref{assumption:correct_answer} and~\ref{assumption:regularity}, for all $(\bm p,\bm\mu)\in \Delta_K\times\Delta_{|\mathcal X|}$, $\clarke F(\bm p,\bm\mu)$, $\LMOp(\bm p,\bm\mu)$ and $\LMOmu(\bm p,\bm\mu)$ are nonempty, compact, and convex.
    Furthermore, the correspondences $\clarke F$, $\LMOp$, and $\LMOmu$ are upper hemicontinuous.
\end{proposition}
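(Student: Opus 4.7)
The plan is to verify nonemptiness, compactness, convexity, and upper hemicontinuity for each of the three correspondences in turn, leveraging three standard inputs: $F$ is globally $L_F$-Lipschitz in $\ell_1$ (established just after Assumption~\ref{assumption:regularity}), the uniform bound $\|\bm f\|_\infty\le M_p$ on partial gradients from Lemma~\ref{lem:Clarke_UB}, and the fact that $F$ is $C^2$ on $\interior{\Delta_K}\times\Delta_{|\mathcal X|}$ by Assumption~\ref{assumption:regularity}, so the differentiability set $\mathcal D_F$ is dense in $\Delta_K\times\Delta_{|\mathcal X|}$ with uniformly bounded gradients.

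For $\clarke F(\bm p,\bm\mu)$, convexity is immediate from the definition as a convex hull, and compactness follows because it is a closed subset of the $\ell_\infty$-ball of radius $M_p$. Nonemptiness is obtained by approaching any boundary point through a sequence in $\mathcal D_F$ (e.g.\ by perturbing $\bm p$ slightly into $\interior{\Delta_K}$) and applying Bolzano--Weierstrass to the uniformly bounded gradients. Upper hemicontinuity I would verify through the closed-graph characterization: given $(\bm p_n,\bm\mu_n)\to(\bm p,\bm\mu)$ and $\bm f_n\in\clarke F(\bm p_n,\bm\mu_n)$ with $\bm f_n\to\bm f$, decompose $\bm f_n=\sum_{k=1}^{K+1}\alpha_n^{(k)}\bm g_n^{(k)}$ via Carath\'eodory, approximate each $\bm g_n^{(k)}$ by gradients $\nabla_{\bm p}F(\tilde{\bm p}_n^{(k)},\tilde{\bm\mu}_n^{(k)})$ at points in $\mathcal D_F$ within $1/n$ of $(\bm p_n,\bm\mu_n)$, and pass to a joint subsequence to recover $\bm f=\sum_k\alpha^{(k)}\bm g^{(k)}$ with each $\bm g^{(k)}\in\clarke F(\bm p,\bm\mu)$ directly from the definition.

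The correspondence $\LMOmu$ is the easiest case: since $F$ is linear in $\bm\mu$ with $[\nabla_{\bm\mu}F(\bm p,\bm\mu)]_x=\Payoff(\bm p,x)$, jointly continuous by Lemma~\ref{lem:properties_Gamma_x}, Berge's maximum theorem delivers upper hemicontinuity together with nonemptiness, compactness, and convexity, since $\argmin_{\bm\nu\in\Delta_{|\mathcal X|}}\bm\nu^\top\nabla_{\bm\mu}F$ is a face of the simplex. The structural properties of $\LMOp$ reduce to the same face observation for the inner $\argmax$, taking the convex hull over $\bm f\in\clarke F(\bm p,\bm\mu)$ of a compact union of faces to obtain a nonempty compact convex subset of $\Delta_K$.

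The main obstacle is the upper hemicontinuity of $\LMOp$, which composes the already set-valued $\clarke F$ with an inner linear program and an outer convex hull. My plan is again closed-graph: for $\bm q_n\in\LMOp(\bm p_n,\bm\mu_n)$ with $\bm q_n\to\bm q$, use Carath\'eodory to write $\bm q_n=\sum_{k=1}^{K+1}\beta_n^{(k)}\bm r_n^{(k)}$ with $\bm r_n^{(k)}\in\argmax_{\bm q'\in\Delta_K}(\bm q')^\top\bm f_n^{(k)}$ and $\bm f_n^{(k)}\in\clarke F(\bm p_n,\bm\mu_n)$, extract a joint subsequence along which $\beta_n^{(k)}\to\beta^{(k)}$, $\bm f_n^{(k)}\to\bm f^{(k)}$, and $\bm r_n^{(k)}\to\bm r^{(k)}$ (all bounded), invoke the closed-graph property established in the second paragraph to conclude $\bm f^{(k)}\in\clarke F(\bm p,\bm\mu)$, and apply Berge's theorem to the continuous bilinear function $(\bm q,\bm f)\mapsto\bm q^\top\bm f$ on $\Delta_K\times\{\|\bm f\|_\infty\le M_p\}$ to place $\bm r^{(k)}$ in the inner $\argmax$ at $\bm f^{(k)}$, so that $\bm q=\sum_k\beta^{(k)}\bm r^{(k)}\in\LMOp(\bm p,\bm\mu)$. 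The delicate step will be threading the diagonal/Carath\'eodory selection so that one simultaneously stays in $\mathcal D_F$ and passes limits through both the Clarke gradient and the inner linear program; everything else is an application of Berge plus standard facts about Clarke subdifferentials of Lipschitz functions.
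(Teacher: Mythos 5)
Your proposal is correct, and for the two LMO correspondences it follows essentially the paper's route: $\LMOmu$ is handled by Berge's maximum theorem exactly as in the paper, and your closed-graph argument for $\LMOp$ (Carath\'eodory decomposition into inner-argmax points, joint subsequence extraction, closedness of the Clarke-gradient graph, and passing the linear objective through the limit) is the same mechanism the paper uses, differing only in that you invoke Berge for the inner $\argmax$ where the paper writes out the inequality $\bm r^{\top}\bm f=\lim_k \bm r^{\top}\bm f^{n_k}\le \liminf_k (\bm q^{n_k})^{\top}\bm f^{n_k}=\bm q^{\top}\bm f$ directly. The one place you genuinely diverge is $\clarke F$ itself: the paper exploits concavity of $F(\cdot,\bm\mu)$ to identify the Clarke gradient with the concave superdifferential and then cites standard convex-analysis facts (nonemptiness, closedness, and closed graph of the superdifferential) from \citet{bertsekas2003convex}, whereas you argue directly from the limiting-gradient definition, using density of $\mathcal{D}_F$, the uniform bound $M_p$ from Lemma~\ref{lem:Clarke_UB}, Carath\'eodory, and a diagonal/approximation step to verify nonemptiness, compactness, and the closed graph. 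Your route is more elementary and self-contained (it never uses concavity, so it would survive in settings where $F(\cdot,\bm\mu)$ is merely locally Lipschitz), at the cost of re-proving standard upper-semicontinuity properties of Clarke subdifferentials that the paper simply cites; the paper's route is shorter and ties $\clarke F$ to the superdifferential characterization already stated in Section~3. Do make sure, in the structural part for $\LMOp$, to actually prove closedness of the union $Q=\bigcup_{\bm f\in\clarke F(\bm p,\bm\mu)}\argmax_{\bm q'\in\Delta_K}(\bm q')^{\top}\bm f$ (your fixed-point specialization of the closed-graph argument does this) rather than asserting it, since compactness of $\operatorname{conv}(Q)$ rests on it; this is a presentational gap only, not a mathematical one.
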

Upper hemicontinuity of the right-hand side of the DI is crucial for establishing the existence of solutions.
Indeed, the next result establishes existence together with measurable selections to cast the DI as an ODE.
\begin{lemma}[Existence of solutions and measurable selections]\label{lem:existence_solution}
There exists an absolutely continuous solution $(\bm{p}(t),\bm{\mu}(t))$ defined on $[0,\infty)$ to \eqref{eq:continuous_dynamics} for any  $(\bm{p}_0,\bm{\mu}_0) \in \Delta_K \times \Delta_{|\mathcal{X}|}$. Furthermore, for any solution $(\bm{p}(t),\bm{\mu}(t))$ to \eqref{eq:continuous_dynamics}, there exist measurable selections $t\mapsto\bm{q}(t)\in\Delta_K$ and $t\mapsto\bm{\nu}(t)\in\Delta_{|\mathcal{X}|}$ such that
\[
\bm{q}(t) \in \LMOp(\bm{p}(t),\bm{\mu}(t)),\quad
\bm{\nu}(t) \in \LMOmu(\bm{p}(t),\bm{\mu}(t))\quad\text{and}\quad
\dot{\bm{p}}(t)=\bm{q}(t)-\bm{p}(t), \quad \dot{\bm{\mu}}(t)=\bm{\nu}(t)-\bm{\mu}(t).
\]
\end{lemma}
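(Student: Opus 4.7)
The plan is to combine three standard ingredients: a classical existence theorem for differential inclusions applied to the joint right-hand side $G(\bm p,\bm\mu)\triangleq(\LMOp(\bm p,\bm\mu)-\bm p)\times(\LMOmu(\bm p,\bm\mu)-\bm\mu)$; invariance of the product simplex to supply the boundedness needed to extend solutions to $[0,\infty)$; and the observation that the a.e.\ derivative of an absolutely continuous trajectory directly furnishes a measurable selection from the right-hand side. Each piece is essentially off-the-shelf once Proposition~\ref{prop:BR_properties} is invoked, so the proof is largely a matter of assembling standard facts rather than developing new arguments.

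First I would assemble the existence input. Proposition~\ref{prop:BR_properties} already endows $\LMOp$ and $\LMOmu$ with nonempty, compact, convex values and upper hemicontinuity; these properties pass through the continuous translations $\bm q\mapsto\bm q-\bm p$ and $\bm\nu\mapsto\bm\nu-\bm\mu$ and through Cartesian products, so $G$ inherits them. Compactness of the product simplex yields the uniform bound $\sup_{v\in G(z)}\|v\|_1\le 4$. These are exactly the hypotheses of the standard Aubin--Cellina-type existence theorem for autonomous differential inclusions with upper-hemicontinuous, convex, compact, bounded right-hand sides, which yields a local absolutely continuous solution starting from any $(\bm p_0,\bm\mu_0)\in\Delta_K\times\Delta_{|\mathcal X|}$.

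Next I would upgrade local to global via simplex invariance. On any local solution, the variation-of-constants formula gives
\[
  \bm p(t) = e^{-t}\bm p(0) + \int_0^t e^{-(t-s)}\bm q(s)\,\diff s
\]
for any measurable selection $\bm q(s)\in\LMOp(\bm p(s),\bm\mu(s))\subseteq\Delta_K$. Since the weights $e^{-t}$ and $e^{-(t-s)}\diff s$ on $[0,t]$ have total mass one, $\bm p(t)$ is a convex combination of points in $\Delta_K$ and hence remains in $\Delta_K$; the identical argument yields $\bm\mu(t)\in\Delta_{|\mathcal X|}$. Because the trajectory never leaves the compact product simplex, local solutions concatenate to a global solution on $[0,\infty)$.

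Finally, for the measurable selections, along any absolutely continuous solution the derivatives $\dot{\bm p}(t)$ and $\dot{\bm\mu}(t)$ exist for a.e.\ $t$ and are Borel measurable. Setting $\bm q(t)\triangleq\dot{\bm p}(t)+\bm p(t)$ and $\bm\nu(t)\triangleq\dot{\bm\mu}(t)+\bm\mu(t)$ defines measurable maps which, by the very definition of a DI solution, satisfy $\bm q(t)\in\LMOp(\bm p(t),\bm\mu(t))$ and $\bm\nu(t)\in\LMOmu(\bm p(t),\bm\mu(t))$ almost everywhere; on the null exceptional set, the Kuratowski--Ryll-Nardzewski selection theorem applied to the composed closed-valued, measurable correspondences $t\mapsto\LMOp(\bm p(t),\bm\mu(t))$ and $t\mapsto\LMOmu(\bm p(t),\bm\mu(t))$ supplies an arbitrary measurable completion. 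The only step requiring real care is cleanly matching our autonomous, compact, convex, upper-hemicontinuous setup to a specific off-the-shelf DI existence theorem; beyond that, every piece follows directly from Proposition~\ref{prop:BR_properties} together with elementary convexity and measure-theoretic facts.
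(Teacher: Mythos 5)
Your overall strategy matches the paper's: both rest on Proposition~\ref{prop:BR_properties} together with an Aubin--Cellina existence result, and your measurable-selection step (set $\bm q(t)=\dot{\bm p}(t)+\bm p(t)$ and $\bm\nu(t)=\dot{\bm\mu}(t)+\bm\mu(t)$, which are measurable and lie in $\LMOp$, $\LMOmu$ for a.e.\ $t$ by the very definition of a solution, then modify on the exceptional null set) is actually simpler than the paper's appeal to the parameterized selection result \citet[Corollary~1, Section~1.14]{aubin_differential_1984}, and it is correct since Lebesgue measurability is preserved under modification on a null set (the Kuratowski--Ryll-Nardzewski invocation is not even needed).

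The genuine issue is the existence step. The right-hand side $G$ is defined only on $\Delta_K\times\Delta_{|\mathcal{X}|}$, a compact convex set with empty interior in $\mathbb{R}^{K+|\mathcal{X}|}$, and the lemma permits initial conditions anywhere in it, including the boundary and the vertices. A ``standard Aubin--Cellina-type existence theorem'' for upper hemicontinuous, compact-convex-valued maps presupposes the map is defined on a neighborhood of the initial point, so it does not literally apply here; this is precisely why the paper invokes the viability theorem \citep[Theorem~1, Section~4.2]{aubin_differential_1984}, whose additional hypothesis is the tangential condition $G(\bm p,\bm\mu)\cap T_{K}(\bm p,\bm\mu)\neq\varnothing$ on the closed set $K=\Delta_K\times\Delta_{|\mathcal{X}|}$, and which then delivers a viable trajectory on $[0,\infty)$ directly (boundedness of $G$ plus finite dimension), with no local-to-global concatenation required. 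Your variation-of-constants/convexity observation is morally the verification of that tangency condition: for $\bm q\in\Delta_K$, $\bm\nu\in\Delta_{|\mathcal{X}|}$ and small $\varepsilon>0$, the point $(\bm p,\bm\mu)+\varepsilon(\bm q-\bm p,\bm\nu-\bm\mu)$ is a convex combination that stays in the product simplex. So the fix is easy---either cite the viability theorem and check this tangency, or extend $G$ to a neighborhood by composing with a retraction onto the product simplex and then run your invariance argument to see the solution never leaves the simplex---but as written, the claim that a local absolutely continuous solution exists from an arbitrary (possibly boundary) initial condition is unsupported.
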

\begin{proof}
By Proposition~\ref{prop:BR_properties}, the correspondences $\LMOp$ and $\LMOmu$ are nonempty, convex, compact-valued, and upper hemicontinuous.
The existence of an absolutely continuous solution to the DI \eqref{eq:continuous_dynamics} therefore follows from \citet[Theorem~1, Section~4.2]{aubin_differential_1984}. Moreover, \citet[Corollary~1, Section~1.14]{aubin_differential_1984} ensures the existence of measurable selections along the trajectory.
\end{proof}

The following Lemma~\ref{lem:strict_positivity} shows that, although gradients may be ill-defined when $\bm p$ lies on the boundary of $\Delta_K$, any trajectory that starts in the interior under the continuous-time dynamics remains in the interior for every finite time.
The same holds for the discrete iterations in~\eqref{eq:discrete_dynamics}; hence the behavior of the correspondences \emph{on} the boundary is immaterial for our algorithm. 
This lets us analyze the dynamics without artificially repelling the iterates from the boundary, often enforced via forced exploration \citep{wang2021fast}. 
That said, the perturbation analysis of the discrete iterates still requires explicit treatment of boundary nonsmoothness.

\begin{lemma}\label{lem:strict_positivity}
For any solution $(\bm{p}(t),\bm{\mu}(t))$ to \eqref{eq:continuous_dynamics} with $(\bm{p}_0,\bm{\mu}_0) \in \interior{\Delta_K} \times \Delta_{|\mathcal{X}|}$ and any finite $t\ge 0$, we have $(\bm{p}(t),\bm{\mu}(t)) \in \interior{\Delta_K} \times \Delta_{|\mathcal{X}|}$.
\end{lemma}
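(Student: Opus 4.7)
The plan is to reduce the lemma to a scalar differential inequality on each coordinate and then use an integrating-factor argument. By Lemma~\ref{lem:existence_solution}, for the given solution $(\bm{p}(t),\bm{\mu}(t))$ there exist measurable selections $t\mapsto\bm{q}(t)\in\Delta_K$ and $t\mapsto\bm{\nu}(t)\in\Delta_{|\mathcal{X}|}$ such that $\dot{\bm{p}}(t)=\bm{q}(t)-\bm{p}(t)$ and $\dot{\bm{\mu}}(t)=\bm{\nu}(t)-\bm{\mu}(t)$ for almost every $t\ge 0$. Fixing any coordinate $i\in[K]$, this gives the scalar identity $\dot{p}_i(t)+p_i(t)=q_i(t)\ge 0$, because $\bm{q}(t)\in\Delta_K$ has nonnegative entries.

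Next, I would multiply by the integrating factor $e^{t}$ to obtain $\tfrac{\diff}{\diff t}\bigl(e^{t} p_i(t)\bigr)=e^{t} q_i(t)\ge 0$ almost everywhere. Since $p_i(\cdot)$ is absolutely continuous, $e^{t}p_i(t)$ is too, and integrating over $[0,t]$ gives $e^{t}p_i(t)\ge p_i(0)$, i.e., $p_i(t)\ge p_i(0)e^{-t}$. Because $\bm{p}_0\in\interior{\Delta_K}$ means $p_i(0)>0$ for all $i$, we conclude $p_i(t)>0$ for every finite $t\ge 0$ and every $i$. The simplex constraint $\sum_i p_i(t)=1$ is automatically preserved, since $\tfrac{\diff}{\diff t}\sum_i p_i(t)=\sum_i q_i(t)-\sum_i p_i(t)=1-1=0$, using $\bm{q}(t),\bm{p}(t)\in\Delta_K$; hence $\bm{p}(t)\in\interior{\Delta_K}$.

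For the $\bm{\mu}$ component, the same variation-of-constants formula yields $\bm{\mu}(t)=e^{-t}\bm{\mu}_0+\int_0^{t} e^{-(t-s)}\bm{\nu}(s)\diff s$, and since $e^{-t}+\int_0^{t} e^{-(t-s)}\diff s=1$, the right-hand side is a convex combination of elements of the convex set $\Delta_{|\mathcal{X}|}$, so $\bm{\mu}(t)\in\Delta_{|\mathcal{X}|}$ throughout. Combining the two parts gives the claim.

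There is no substantive obstacle here beyond bookkeeping with absolutely continuous functions: once Lemma~\ref{lem:existence_solution} supplies the measurable ODE representation, the strict positivity follows directly from the nonnegativity of $\bm{q}(t)$ coordinate-wise. Notably, the argument is independent of the specific Clarke-gradient structure encoded in $\LMOp$, because we only use that $\LMOp(\bm{p},\bm{\mu})\subset\Delta_K$, which is built into Definition~\ref{def:LMO}. The decay factor $e^{-t}$ shows that the iterates may approach the boundary in the long run but cannot reach it in finite time, which is precisely what subsequent analyses need in order to invoke smoothness of $F$ along the trajectory.
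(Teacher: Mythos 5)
Your proof is correct and follows essentially the same route as the paper: both invoke the measurable selections from Lemma~\ref{lem:existence_solution}, apply variation of constants (your integrating factor $e^{t}$ is the same computation), and deduce $p_i(t)\ge e^{-t}p_{0,i}>0$ together with preservation of the simplex constraint from $\sum_i \dot p_i(t)=0$. No gaps; the extra detail on the $\bm{\mu}$ component is fine and consistent with the paper's argument.
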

\begin{proof}
By Lemma~\ref{lem:existence_solution}, the system is
$\dot{\bm{p}}=\bm{q}-\bm{p}$, $\dot{\bm{\mu}}=\bm{\nu}-\bm{\mu}$ for some measurable selections $\bm{q}(t)\in\Delta_K$, $\bm{\nu}(t)\in\Delta_{|\mathcal{X}|}$. It admits solution $\bm{p}(t)=e^{-t}\bm{p}_0+\int_0^t e^{-(t-s)}\bm{q}(s) \diff s.$
Each coordinate of $\bm{p}(t)$ satisfies $p_i(t)\ge e^{-t}p_{0,i}>0$, so $\bm{p}(t)\in\interior{\Delta_K}$ for all finite $t$. The simplex constraints are preserved since $\sum_i \dot p_i(t)=0$.
\end{proof}

Importantly, Lemma~\ref{lem:strict_positivity} does not rule out an \emph{asymptotic} approach to the boundary. 
This matters in structured bandits, where the optimal allocation $\bm p^*$ can be sparse and thus lie on the simplex boundary. 
Nevertheless, we show that although the trajectory remains in the strict interior for every finite time, the duality gap vanishes; consequently, every limit point achieves the optimal game value.

\subsubsection{(Almost) Global Exponential Stability}

The existence of solutions and measurable selections provided by Lemma~\ref{lem:existence_solution} ensures that the following candidate Lyapunov function is well defined on $\mathcal{D}_F$:
\[
    V(\bm{p},\bm{\mu})  = \max_{\bm{q}' \in \Delta_{K}} (\bm{q}' - \bm{p})^{\top} \nabla_{\bm{p}} F(\bm{p},\bm{\mu}) - \min_{\bm{\nu}' \in \Delta_{|\mathcal{X}|}} (\bm{\nu}' -\bm{\mu})^{\top} \nabla_{\bm{\mu}} F(\bm{p},\bm{\mu}). 
\]
By construction, $V(\bm{p},\bm{\mu}) \ge 0$. 
Since $(\bm{p}(t),\bm{\mu}(t))$ is absolutely continuous (Lemma~\ref{lem:existence_solution}) and, provided it evolves where $F$ is continuously differentiable, the composition
$V(t)\triangleq V(\bm{p}(t),\bm{\mu}(t))$ is absolutely continuous and thus differentiable for almost every $t\ge 0$.
The next theorem shows that $V$ is a valid Lyapunov function.
\begin{theorem} 
  \label{thm:convergence}
  Under Assumption~\ref{assumption:regularity} and suppose the solution satisfies $(\bm{p}(t),\bm{\mu}(t))\in\mathcal{D}_F$ for a.e. $t$, we have
  \[
    \frac{\diff}{\diff t} V(\bm{p}(t), \bm{\mu}(t)) \le -V(\bm{p}(t), \bm{\mu}(t)), \quad \text{for almost all } t \ge 0.
  \]
  Consequently, we have $V(t) \le V(0)e^{-t}$ by Gr\"{o}nwall's inequality.
  By Lemma~\ref{lem:Clarke_UB}, $V(0)$ is uniformly bounded, and the exponential decay is uniform in the initial condition.
\end{theorem}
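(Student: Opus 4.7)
}
The plan is to differentiate $V$ along the trajectory, reduce everything to a quadratic form involving the Hessian of $F$, and then invoke the concavity--linearity structure of $F$ in $(\bm{p},\bm{\mu})$. Throughout, I work at a time $t$ where the Lebesgue-a.e.\ regularity needed below holds: the selections $\bm{q}(t),\bm{\nu}(t)$ from Lemma~\ref{lem:existence_solution} exist, $(\bm{p}(t),\bm{\mu}(t))\in\mathcal{D}_F$ so the gradients $\bm{g}(t):=\nabla_{\bm{p}}F(\bm{p}(t),\bm{\mu}(t))$ and $\bm{h}(t):=\nabla_{\bm{\mu}}F(\bm{p}(t),\bm{\mu}(t))$ are well defined, $\bm{p}(t),\bm{\mu}(t),\bm{g}(t),\bm{h}(t)$ are all differentiable in $t$ (by absolute continuity together with Assumption~\ref{assumption:regularity}), and $V$ is differentiable at $t$.

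First I would rewrite $V$ along the trajectory. By definition of $\LMOp,\LMOmu$ and the measurable selections in Lemma~\ref{lem:existence_solution},
\[
V(t)=(\bm{q}(t)-\bm{p}(t))^{\top}\bm{g}(t)-(\bm{\nu}(t)-\bm{\mu}(t))^{\top}\bm{h}(t)=\dot{\bm{p}}(t)^{\top}\bm{g}(t)-\dot{\bm{\mu}}(t)^{\top}\bm{h}(t),
\]
using $\dot{\bm{p}}=\bm{q}-\bm{p}$ and $\dot{\bm{\mu}}=\bm{\nu}-\bm{\mu}$.

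The next step, and the main obstacle, is to differentiate the two maxima in $V$. Writing $\phi_1(t)\triangleq \max_{\bm{q}'\in\Delta_K}\bm{q}'^{\top}\bm{g}(t)$, I cannot simply differentiate $\bm{q}(t)^{\top}\bm{g}(t)$ coordinate-wise because the argmax selection $\bm{q}(t)$ may jump. Instead I use a Danskin-type argument via one-sided differences: since $\bm{q}(t)$ is a maximizer at time $t$ and $\bm{q}(t+h)$ is only feasible,
\[
\bm{q}(t)^{\top}\bigl(\bm{g}(t+h)-\bm{g}(t)\bigr)\le \phi_1(t+h)-\phi_1(t)\le \bm{q}(t+h)^{\top}\bigl(\bm{g}(t+h)-\bm{g}(t)\bigr)+\bigl(\bm{q}(t+h)-\bm{q}(t)\bigr)^{\top}\bm{g}(t),
\]
and a symmetric inequality for $h<0$. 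Dividing by $h$, using the trivial bound $\bigl(\bm{q}(t+h)-\bm{q}(t)\bigr)^{\top}\bm{g}(t)\le 0$ when $h>0$ (and the reverse sign for $h<0$), and sending $h\to 0^{\pm}$ using differentiability of $\bm{g}$ and $\phi_1$ at $t$, I obtain $\dot{\phi}_1(t)=\bm{q}(t)^{\top}\dot{\bm{g}}(t)$. The same argument gives $\dot{\phi}_2(t)=\bm{\nu}(t)^{\top}\dot{\bm{h}}(t)$ for $\phi_2(t)\triangleq \min_{\bm{\nu}'}\bm{\nu}'^{\top}\bm{h}(t)$. Combining these with the product-rule derivatives of $\bm{p}(t)^{\top}\bm{g}(t)$ and $\bm{\mu}(t)^{\top}\bm{h}(t)$ and cancelling terms yields the clean identity
\[
\dot{V}(t)=\bigl[\dot{\bm{p}}(t)^{\top}\dot{\bm{g}}(t)-\dot{\bm{\mu}}(t)^{\top}\dot{\bm{h}}(t)\bigr]-V(t).
\]

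Finally I would exploit the concave--linear structure. Let $A\triangleq \nabla^2_{\bm{p}\bm{p}}F$, $B\triangleq \nabla^2_{\bm{p}\bm{\mu}}F$. Since $F(\bm{p},\bm{\mu})=\sum_{x}\mu_x \Payoff(\bm{p},x)$ is linear in $\bm{\mu}$, $\nabla^2_{\bm{\mu}\bm{\mu}}F=0$, and concavity of each $\Payoff(\cdot,x)$ in $\bm{p}$ (Lemma~\ref{lem:properties_Gamma_x}) together with $\bm{\mu}\ge \bm{0}$ gives $A\preceq 0$. Therefore $\dot{\bm{g}}=A\dot{\bm{p}}+B\dot{\bm{\mu}}$ and $\dot{\bm{h}}=B^{\top}\dot{\bm{p}}$, so
\[
\dot{\bm{p}}^{\top}\dot{\bm{g}}-\dot{\bm{\mu}}^{\top}\dot{\bm{h}}=\dot{\bm{p}}^{\top}A\dot{\bm{p}}+\dot{\bm{p}}^{\top}B\dot{\bm{\mu}}-\dot{\bm{\mu}}^{\top}B^{\top}\dot{\bm{p}}=\dot{\bm{p}}^{\top}A\dot{\bm{p}}\le 0.
\]
Hence $\dot{V}(t)\le -V(t)$ for almost every $t$, and Gr\"onwall's inequality applied to the absolutely continuous function $V(t)$ gives $V(t)\le V(0)e^{-t}$. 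The bound $V(0)\le 2M_p+2M_\mu$ follows directly from Lemma~\ref{lem:Clarke_UB} together with the trivial bound $\|\bm{\nu}-\bm{\mu}\|_1\le 2$, which yields the claimed uniform exponential decay. The delicate point worth verifying carefully in the write-up is the envelope step: the argmax selections are merely measurable, so pointwise differentiation of $\bm{q}(t)^{\top}\bm{g}(t)$ is not allowed, and the one-sided-difference argument above is what bridges the gap.
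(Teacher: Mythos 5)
Your proposal is correct and follows essentially the same route as the paper's proof: rewrite $V$ along the trajectory via the measurable selections, differentiate the max/min terms by an envelope argument (the paper cites the Milgrom--Segal envelope theorem, you reprove it inline via one-sided difference quotients), reduce $\dot V + V$ to a Hessian quadratic form whose cross terms cancel, and conclude by concavity in $\bm p$ together with Gr\"onwall's inequality. The only (cosmetic) differences are that the paper keeps the term $-\dot{\bm\mu}^{\top}\nabla_{\bm{\mu\mu}}F\,\dot{\bm\mu}\le 0$ via convexity in $\bm\mu$, whereas you use linearity of $F$ in $\bm\mu$ to set that block to zero.
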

In Theorem~\ref{thm:convergence}, we assume the trajectory remains in the differentiable region so that the Lyapunov function is well defined along the solution. By Lemma~\ref{lem:strict_positivity}, any interior initialization stays in the interior for all time, so the assumption holds even if the limit lies on the boundary; consequently, we obtain global uniform convergence.

\subsubsection{Convergence of the Game Value}
Since optimal solutions may be nonunique, the DI need not converge to a single Nash equilibrium.
Nevertheless, we show that the payoff along any solution trajectory converges to the optimal game value.
To this end, we define the duality gap
\[
    \mathrm{Gap}(\bm{p},\bm{\mu}) \triangleq 
    \max_{\bm{q}\in\Delta_{K}} F(\bm{q},\bm{\mu}) - \min_{\bm{\nu}\in\Delta_{|\mathcal{X}|}} F(\bm{p},\bm{\nu}).
\]   
By weak duality, $\mathrm{Gap}(\bm p,\bm\mu)\ge 0$.
The next result bounds the duality gap by the Lyapunov function.
\begin{lemma}\label{lem:gap}
Suppose $F$ is concave--convex, then the duality gap $\mathrm{Gap}(\bm{p},\bm{\mu})  \le  V(\bm{p},\bm{\mu})$ for all $(\bm{p},\bm{\mu}) \in \mathcal{D}_F$.
\end{lemma}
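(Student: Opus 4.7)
The plan is to bound the duality gap by a direct application of the first-order characterizations of concavity in $\bm p$ and convexity in $\bm\mu$, evaluated at the single point $(\bm p,\bm\mu)$. Since $(\bm p,\bm\mu)\in\mathcal{D}_F$, both $\nabla_{\bm p}F(\bm p,\bm\mu)$ and $\nabla_{\bm\mu}F(\bm p,\bm\mu)$ exist in the classical sense, so $V(\bm p,\bm\mu)$ is well defined and the usual supporting-hyperplane inequalities apply.

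First I would use concavity of $F(\cdot,\bm\mu)$ to obtain, for every $\bm q\in\Delta_{K}$,
\[
F(\bm q,\bm\mu) \le F(\bm p,\bm\mu) + (\bm q - \bm p)^{\top}\nabla_{\bm p} F(\bm p,\bm\mu),
\]
and then maximize both sides over $\bm q\in\Delta_{K}$ to get
\[
\max_{\bm q\in\Delta_{K}} F(\bm q,\bm\mu) \;\le\; F(\bm p,\bm\mu) + \max_{\bm q\in\Delta_{K}}(\bm q - \bm p)^{\top}\nabla_{\bm p} F(\bm p,\bm\mu).
\]
Symmetrically, convexity of $F(\bm p,\cdot)$ gives, for every $\bm\nu\in\Delta_{|\mathcal{X}|}$,
\[
F(\bm p,\bm\nu) \ge F(\bm p,\bm\mu) + (\bm\nu - \bm\mu)^{\top}\nabla_{\bm\mu} F(\bm p,\bm\mu),
\]
so minimizing over $\bm\nu$ yields
\[
\min_{\bm\nu\in\Delta_{|\mathcal{X}|}} F(\bm p,\bm\nu) \;\ge\; F(\bm p,\bm\mu) + \min_{\bm\nu\in\Delta_{|\mathcal{X}|}}(\bm\nu - \bm\mu)^{\top}\nabla_{\bm\mu} F(\bm p,\bm\mu).
\]

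Subtracting the second bound from the first cancels the $F(\bm p,\bm\mu)$ terms and leaves exactly the two linearized terms defining $V(\bm p,\bm\mu)$, proving $\mathrm{Gap}(\bm p,\bm\mu)\le V(\bm p,\bm\mu)$. There is no real obstacle in the argument: the only subtlety worth flagging is that the inequality is stated on $\mathcal{D}_F$, where classical gradients exist, so we do not need to invoke the Clarke generalized gradient; were one to extend the statement to boundary points, one would replace $\nabla_{\bm p}F$ by any element of $\clarke F(\bm p,\bm\mu)$ and use the concave superdifferential characterization recalled after the definition of $\clarke F$, which supplies the same supporting-hyperplane inequality and hence the same bound.
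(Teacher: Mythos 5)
Your proposal is correct and follows essentially the same route as the paper: apply the first-order concavity inequality in $\bm{p}$ and the first-order convexity inequality in $\bm{\mu}$ at $(\bm{p},\bm{\mu})$, take the maximum over $\bm{q}$ and the minimum over $\bm{\nu}$, and subtract to obtain exactly $V(\bm{p},\bm{\mu})$. Your write-up even handles the sign on the convexity step more carefully than the paper's terse statement, but the argument is the same.
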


The following theorem establishes that the duality gap vanishes along the trajectory of the continuous-time dynamics and that the objective value converges to the optimal game value. 

\begin{theorem}[Convergence of the game value]\label{thm:value_convergence}
Let $(\bm{p}(t),\bm{\mu}(t))$ be any solution to the differential inclusion in~\eqref{eq:continuous_dynamics} with initial condition $(\bm{p}_0,\bm{\mu}_0) \in \mathcal{D}_F$.
Under Assumptions of Theorem~\ref{thm:convergence}, as $t\to\infty$,
\[
\mathrm{Gap}(t) \triangleq \mathrm{Gap}(\bm{p}(t),\bm{\mu}(t)) = 
\max_{\bm{q}\in\Delta_{K}} F(\bm{q},\bm{\mu}(t)) - \min_{\bm{\nu}\in\Delta_{|\mathcal{X}|}} F(\bm{p}(t),\bm{\nu}) \to 0.
\]
Consequently, $F(\bm{p}(t),\bm{\mu}(t)) \to F^*$, where $F^*$ denotes the optimal game value.
\end{theorem}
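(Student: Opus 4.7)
The plan is to chain Lemma~\ref{lem:gap} with the exponential Lyapunov decay of Theorem~\ref{thm:convergence} to drive the duality gap to zero, then sandwich both $F(\bm{p}(t),\bm{\mu}(t))$ and $F^*$ in an interval whose length is precisely the gap.

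\emph{Step one: exponential decay of $\mathrm{Gap}(t)$.} Invoking the hypotheses of Theorem~\ref{thm:convergence} (in particular, $(\bm{p}(t),\bm{\mu}(t)) \in \mathcal{D}_F$ for a.e.\ $t$, which by Lemma~\ref{lem:strict_positivity} is automatic when $(\bm{p}_0,\bm{\mu}_0)\in\interior{\Delta_K}\times\Delta_{|\mathcal{X}|}$), I apply $V(t)\le V(0)e^{-t}$ with $V(0)$ uniformly bounded via Lemma~\ref{lem:Clarke_UB} and compactness of the simplices. Lemma~\ref{lem:gap} then yields $\mathrm{Gap}(t)\le V(t)\le V(0)e^{-t}\to 0$ at an exponential rate that is uniform in the initial condition.

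\emph{Step two: from vanishing gap to value convergence.} For every $t$, pointwise optimization gives
$$
\min_{\bm{\nu}\in\Delta_{|\mathcal{X}|}} F(\bm{p}(t),\bm{\nu}) \;\le\; F(\bm{p}(t),\bm{\mu}(t)) \;\le\; \max_{\bm{q}\in\Delta_{K}} F(\bm{q},\bm{\mu}(t)),
$$
and by the minimax identity~\eqref{eq:maximin_mixed_strategy} the same interval contains $F^*$, i.e.,
$$
\min_{\bm{\nu}\in\Delta_{|\mathcal{X}|}} F(\bm{p}(t),\bm{\nu}) \;\le\; F^{*} \;\le\; \max_{\bm{q}\in\Delta_{K}} F(\bm{q},\bm{\mu}(t)).
$$
Since the interval has length $\mathrm{Gap}(t)$, one gets $|F(\bm{p}(t),\bm{\mu}(t)) - F^*| \le \mathrm{Gap}(t)$, and Step one completes the proof.

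\emph{Main obstacle.} The genuinely hard parts---the Lyapunov descent inequality $\dot{V}\le -V$ of Theorem~\ref{thm:convergence} and the gap bound of Lemma~\ref{lem:gap}---have already been done upstream; the present statement is essentially a sandwich argument. The only subtlety I must be careful about is that the Lyapunov bound requires the trajectory to lie in $\mathcal{D}_F$ along the way, since the time derivative of $V$ is taken using classical gradients. For interior initial conditions this is handed to us by Lemma~\ref{lem:strict_positivity} at every finite $t$, and the asymptotic limit $t\to\infty$ is taken \emph{outside} $V$, so even if the trajectory approaches the nondifferentiable boundary asymptotically the exponential bound on $\mathrm{Gap}(t)$ persists and the conclusion follows without any extra smoothing or regularization argument.
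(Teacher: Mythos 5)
Your proposal is correct and follows essentially the same route as the paper's proof: bound $\mathrm{Gap}(t)$ by $V(t)$ via Lemma~\ref{lem:gap}, invoke the exponential decay of Theorem~\ref{thm:convergence} (with Lemma~\ref{lem:strict_positivity} guaranteeing differentiability along the trajectory), and then sandwich both $F(\bm{p}(t),\bm{\mu}(t))$ and $F^*$ between $\min_{\bm{\nu}}F(\bm{p}(t),\bm{\nu})$ and $\max_{\bm{q}}F(\bm{q},\bm{\mu}(t))$. Your one-line bound $|F(\bm{p}(t),\bm{\mu}(t))-F^*|\le \mathrm{Gap}(t)$ is just a cleaner packaging of the paper's $\epsilon$--$T(\epsilon)$ argument.
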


Let $\mathcal{E}$ denote the equilibrium set for the game. It is known from the literature that $\mathrm{Gap}(\bm{p}, \bm{\mu}) = 0$ is equivalent to $(\bm{p}, \bm{\mu}) \in \mathcal{E}$. Since $\mathrm{Gap}(\cdot)$ is continuous, any limit point of the trajectory $(\bm{p}(t), \bm{\mu}(t))$ must satisfy $\mathrm{Gap}(\cdot) = 0$ and hence belongs to $\mathcal{E}$.

\subsubsection{Global Exponential Stability in BAI for Unstructured and Linear Bandits}
The (almost) global exponential stability in Theorem~\ref{thm:convergence} and the game-value convergence in Theorem~\ref{thm:value_convergence} rely on the trajectory remaining in the region where $F$ is differentiable; by Lemma~\ref{lem:strict_positivity} and Assumption~\ref{assumption:regularity}, this holds for any interior initialization of the DI. 
To establish convergence of the discrete iterates in~\eqref{eq:discrete_dynamics}, however, we must analyze perturbations of the continuous-time DI that can produce boundary (and hence nonsmooth) evaluation points. 
Accordingly, we extend the convergence analysis to accommodate nonsmooth initial conditions for BAI in both unstructured and linear bandits.
\paragraph{BAI for unstructured bandits.}
Consider unstructured BAI in a one-parameter exponential family with a unique best arm $I^*$ and scenarios $\mathcal{X}=[K]\setminus\{I^*\}$. The payoff is
$
\Payoff(\bm p,x)
=\inf_{\lambda\in[\theta_x,\theta_{I^*}]}
\bigl\{p_{I^*} \mathrm{KL}(\theta_{I^*}\Vert \lambda)+p_x \mathrm{KL}(\theta_x\Vert \lambda)\bigr\}
$, for all $x\in\mathcal{X}$.
Define the zero-information scenario set $\mathcal Z(\bm p)\triangleq\{x\in\mathcal{X}:\Payoff(\bm p,x)=0\}=\{x: p_{I^*} p_x=0\}$ 
and constants
\[\kappa_x\triangleq \inf_{\lambda\in[\theta_x,\theta_{I^*}]}\max\{\mathrm{KL}(\theta_x\Vert\lambda), \mathrm{KL}(\theta_{I^*}\Vert\lambda)\}>0.\] 
Let $\kappa_{\min}\triangleq \min_{x\in\mathcal{X}}\kappa_x>0$.
To see why $\kappa_x > 0$, note that
$\inf_{\eta}\bigl\{\mathrm{KL}(\theta_{I^*}\|\eta)+\mathrm{KL}(\theta_x\|\eta)\bigr\}>0$.
For any $x \in \mathcal{X}$, if $p_{I^*}>0$ and $p_x>0$, let $\bar{\theta}_{x} = \frac{p_{I^{*}}\theta_{I^{*}} + p_x\theta_x}{p_{I^{*}} + p_x}$, then $\Payoff (\cdot,x)$ is differentiable and $\partial_{p_{I^{*}}}\Payoff(\bm p,x)=\mathrm{KL}(\theta_{I^*}\Vert\bar{\theta}_{x}),
\partial_{p_x}\Payoff(\bm p,x)=\mathrm{KL}(\theta_x\Vert\bar{\theta}_{x}),$
with all other coordinates zero.
As $p_x\downarrow 0$ with $p_{I^*}>0$ fixed,
$
\partial_{p_x}\Payoff(\bm p,x)\to \mathrm{KL}(\theta_x\Vert\theta_{I^{*}}), \partial_{p_{I^{*}}}\Payoff(\bm p,x)\to 0;
$
as $p_{I^{*}}\downarrow 0$ with $p_x>0$ fixed,
$
\partial_{p_{I^{*}}}\Payoff(\bm p,x)\to \mathrm{KL}(\theta_{I^{*}}\Vert\theta_x), \partial_{p_x}\Payoff(\bm p,x)\to 0,
$
so the Clarke generalized gradients at these edges are singletons.
At points with $p_{I^{*}} = p_x = 0$, the Clarke generalized gradient is a set:
\begin{align*}
\clarke \Payoff(\bm p,x) 
  = \operatorname{conv}\bigl\{g(\lambda): \lambda\in[\theta_x,\theta_{I^{*}}],\ &  g_{I^{*}}(\lambda)=\mathrm{KL}(\theta_{I^{*}}\Vert\lambda), g_x(\lambda)=\mathrm{KL}(\theta_x\Vert\lambda), \\
  & g_k(\lambda)=0, \forall k\notin\{I^{*},x\}\bigr\}.
\end{align*}

\begin{lemma}[Bounds on Clarke generalized gradient]\label{lem:Clarke_bound}
    For every $\bm{p}\in\Delta_K$, every $x\in\mathcal{X}$ and every $g\in\clarke \Payoff(\bm p,x)$, we have  $\max\{g_{I^{*}},g_x\} \ge \kappa_{x} \ge \kappa_{\min}$ and $g_k=0 \text{ for all } k\notin\{I^{*},x\}$ and $g_{k} \le M_{p}$ for all $k \in [K]$, where $M_{p}$ is defined in Lemma~\ref{lem:Clarke_UB}.
\end{lemma}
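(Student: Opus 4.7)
The plan is to perform a case analysis on the two relevant coordinates $(p_{I^*}, p_x)$, distinguishing the interior ($p_{I^*}, p_x>0$), the two edges (exactly one of them vanishes), and the corner ($p_{I^*}=p_x=0$); the off-support claim $g_k=0$ for $k\notin\{I^*,x\}$ and the uniform bound $g_k\le M_p$ are handled uniformly across these cases.

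First, I would dispatch the two easy claims. Since $\Payoff(\bm{p},x)$ depends on $\bm{p}$ only through $(p_{I^*},p_x)$ (this is the unstructured-BAI property that lets the alternative be chosen by a one-dimensional $\lambda$), the partial derivative $\partial_{p_k}\Payoff$ vanishes on $\mathcal{D}_F$ for every $k\notin\{I^*,x\}$, and this identity is preserved under pointwise limits and convex hulls, giving $g_k=0$ for every $g\in\clarke\Payoff(\bm{p},x)$. The coordinatewise upper bound $g_k\le M_p$ is inherited directly from Lemma~\ref{lem:Clarke_UB}.

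For the main inequality $\max\{g_{I^*},g_x\}\ge\kappa_x$, the interior and edge cases are routine. In the interior, the classical gradient has coordinates $\mathrm{KL}(\theta_{I^*}\Vert\bar\theta_x)$ and $\mathrm{KL}(\theta_x\Vert\bar\theta_x)$ with $\bar\theta_x\in[\theta_x,\theta_{I^*}]$, so applying the definition of $\kappa_x$ at $\lambda=\bar\theta_x$ gives the bound immediately. On an edge, the Clarke generalized gradient is the singleton identified in the text, whose nonzero entry is $\mathrm{KL}$ evaluated at an endpoint of $[\theta_x,\theta_{I^*}]$ and therefore already exceeds $\kappa_x$.

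The main obstacle is the corner case $p_{I^*}=p_x=0$, where the Clarke gradient is genuinely set-valued and a coordinatewise bound is unavailable. Given any $g=\sum_i\alpha_i g(\lambda_i)\in\clarke\Payoff(\bm{p},x)$ with $\alpha_i\ge 0$, $\sum_i\alpha_i=1$, and $\lambda_i\in[\theta_x,\theta_{I^*}]$, the plan is to lower-bound $\max\{g_{I^*},g_x\}$ by $\max_{\beta\in[0,1]}[\beta g_{I^*}+(1-\beta)g_x]$ (these are equal by linearity in $\beta$), push $\min_\lambda$ inside the convex combination in $\alpha_i$, and then apply Sion's minimax theorem on the compact convex domains $[0,1]\times[\theta_x,\theta_{I^*}]$ to the linear--convex function $\varphi(\beta,\lambda)=\beta\mathrm{KL}(\theta_{I^*}\Vert\lambda)+(1-\beta)\mathrm{KL}(\theta_x\Vert\lambda)$ (convexity in $\lambda$ holds for the one-parameter exponential families under consideration). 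This yields the chain
\[
\max\{g_{I^*},g_x\}\;\ge\;\max_{\beta\in[0,1]}\min_{\lambda\in[\theta_x,\theta_{I^*}]}\varphi(\beta,\lambda)\;=\;\min_{\lambda\in[\theta_x,\theta_{I^*}]}\max_{\beta\in[0,1]}\varphi(\beta,\lambda)\;=\;\min_{\lambda\in[\theta_x,\theta_{I^*}]}\max\bigl\{\mathrm{KL}(\theta_{I^*}\Vert\lambda),\mathrm{KL}(\theta_x\Vert\lambda)\bigr\}\;=\;\kappa_x.
\]
The Sion swap is the essential step: a naive coordinatewise bound that replaces each $\max\{\mathrm{KL}(\theta_{I^*}\Vert\lambda_i),\mathrm{KL}(\theta_x\Vert\lambda_i)\}\ge\kappa_x$ \emph{before} taking the convex combination only yields the weaker estimate $\kappa_x/2$, since the coupling between $g_{I^*}$ and $g_x$ through the shared $\lambda_i$ is lost. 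Combining the three cases gives $\max\{g_{I^*},g_x\}\ge\kappa_x\ge\kappa_{\min}$ and completes the proof.
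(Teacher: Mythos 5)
Your case split (interior, edge, corner) builds on exactly the Clarke-gradient characterization the paper states just above the lemma, and the easy parts are handled correctly: the off-support coordinates vanish because $\Payoff(\cdot,x)$ depends only on $(p_{I^*},p_x)$, and the bound $g_k\le M_p$ is inherited from Lemma~\ref{lem:Clarke_UB}. You also correctly identify the only genuinely delicate point: at the corner, the naive estimate (bounding each $\max\{g_{I^*}(\lambda_i),g_x(\lambda_i)\}\ge\kappa_x$ before averaging) only yields $\kappa_x/2$, and your reduction to the minimax identity $\max_\beta\min_\lambda\varphi=\min_\lambda\max_\beta\varphi$ is not just sufficient but essentially equivalent to the claimed bound (by weak duality, the corner bound holds precisely when the swap does). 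The paper gives no separate proof of this lemma beyond that characterization, so your argument supplies the intended missing step.

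The one flaw is the parenthetical justification for Sion: convexity of $\mathrm{KL}(\theta\Vert\lambda)$ in $\lambda$ is \emph{not} true for all single-parameter exponential families covered by Theorem~\ref{thm:discrete_convergence}. For the exponential distribution in its mean parameterization, $\mathrm{KL}(\theta\Vert\lambda)=\log(\lambda/\theta)+\theta/\lambda-1$ has second derivative $(2\theta-\lambda)/\lambda^{3}<0$ for $\lambda>2\theta$, so $\mathrm{KL}(\theta_x\Vert\lambda)$ can be concave on part of $[\theta_x,\theta_{I^*}]$; and quasi-convexity of each divergence separately (they are monotone on the interval) does not pass to the mixture $\beta f+(1-\beta)h$ in general --- indeed there are decreasing $f$ and increasing $h$ with $\max_\beta\min_\lambda[\beta f+(1-\beta)h]<\min_\lambda\max\{f,h\}$, so as written the step has a real gap for the general exponential-family case. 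The fix is easy: since $\tfrac{\diff}{\diff\lambda}\mathrm{KL}(\theta\Vert\lambda)=(\lambda-\theta)/V(\lambda)$ with $V$ the variance function, $\tfrac{\diff}{\diff\lambda}\varphi(\beta,\lambda)=\bigl(\lambda-\beta\theta_{I^*}-(1-\beta)\theta_x\bigr)/V(\lambda)$ changes sign exactly once, so $\varphi(\beta,\cdot)$ is unimodal, hence quasi-convex, which is all Sion needs. Even simpler, you can dispense with Sion: the minimizer of $\varphi(\beta,\cdot)$ is the weighted mean $\bar\lambda(\beta)=\beta\theta_{I^*}+(1-\beta)\theta_x$, so choosing $\beta^*$ with $\bar\lambda(\beta^*)=\lambda^*$, the crossing point where both divergences equal $\kappa_x$, gives $\min_\lambda\varphi(\beta^*,\lambda)=\kappa_x$ directly, and the rest of your chain is unchanged. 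With that repair the proof is correct.
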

By Lemma~\ref{lem:strict_positivity}, the set $\mathcal Z(\bm p(t))$ is nonincreasing in $t$ along any solution. 
Let $0=\tau_0<\tau_1<\cdots$ be the times at which $\mathcal Z(\bm p(t))$ changes; on each \emph{phase} $[\tau_\ell,\tau_{\ell+1})$, $\mathcal Z(\bm p(t))$ is constant.
At any $t \in [\tau_\ell,\tau_{\ell+1})$ with $\mathcal Z(\bm p(t))\neq\varnothing$, $\LMOmu$ chooses among $\Delta_{\mathcal Z(\bm p(t))}$.
Define $S(t)\triangleq\sum_{j\notin\mathcal Z(\bm p(t))}\mu_j(t)$, then for any $\varepsilon\in(0,1)$, and any $t \in [\tau_\ell + T_\mu(\varepsilon),\tau_{\ell+1})$, we have $S(t)=S(\tau_\ell)e^{-(t-\tau_\ell)}\le e^{-(t-\tau_\ell)}\le \varepsilon$,
where $T_\mu(\varepsilon)\triangleq \log(1/\varepsilon)$.

We will show that the length $\tau_{\ell+1} - \tau_\ell$ of each phase must be uniformly finite.
The next two lemmas apply whenever a phase has $\mathcal Z(\bm p(t))\neq\varnothing$ and $S(t)$ has decayed sufficiently within that phase, i.e., after time $T_\mu(\varepsilon)$.
Let $\varepsilon_0\triangleq \min\{1/2, \kappa_{\min}/(4K M_{p})\}$ and $
T_0\triangleq T_\mu(\varepsilon_0)=\log (1/\varepsilon_0).$
\begin{lemma}[Zero-coordinate dominance]\label{lem:bai-dominance}
Fix a time $t$ with $\mathcal Z(\bm p(t))\neq\varnothing$ and $S(t)\le \varepsilon_0$. For every $\bm{f}(t)\in \clarke F(\bm p(t),\bm\mu(t))$, there exists $\bm{g}^{(j)}(t)\in\clarke \Payoff (\bm p(t),j)$ such that $\bm{f} = \sum_j \mu_j \bm{g}^{(j)}$, and 
\[
\max_{i: p_i(t)=0} f_i(t) \ge \frac{\kappa_{\min}}{2K},\qquad
\max_{i: p_i(t)>0} f_i(t) \le \varepsilon_0 M_{p} \le \frac{\kappa_{\min}}{4K}.
\]
Thus $\LMOp(\bm p(t),\bm\mu(t))$ is supported on $\{i: p_i(t)=0\}$.
\end{lemma}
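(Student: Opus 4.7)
The plan is to first establish the superdifferential decomposition, then prove the lower bound on zero coordinates, then the upper bound on positive coordinates, and combine these to conclude that $\LMOp$ is supported on $\{i:p_i(t)=0\}$.

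For the decomposition, since $F(\cdot,\bm{\mu}) = \sum_j \mu_j \Payoff(\cdot,j)$ is a nonnegative combination of concave functions, Clarke's generalized gradient coincides with the concave superdifferential (as noted in the paper after Lemma~\ref{lem:Clarke_UB}). The superdifferential of a finite nonnegative combination of concave functions equals the Minkowski sum of the individual superdifferentials (this is standard; e.g., Proposition~2.3.3 in Clarke). Hence any $\bm{f} \in \clarke F(\bm{p},\bm{\mu})$ admits a representation $\bm{f} = \sum_j \mu_j \bm{g}^{(j)}$ with $\bm{g}^{(j)} \in \clarke \Payoff(\bm{p},j)$. I suppress $t$ for brevity.

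For the lower bound, I would exploit the assumption $\mathcal{Z}(\bm{p}) \neq \varnothing$ together with the mass constraint. Since $\sum_{j \in \mathcal{Z}(\bm{p})} \mu_j = 1 - S(t) \ge 1 - \varepsilon_0 \ge 1/2$ and $|\mathcal{Z}(\bm{p})| \le K$, by pigeonhole there exists $j^\star \in \mathcal{Z}(\bm{p})$ with $\mu_{j^\star} \ge 1/(2K)$. By Lemma~\ref{lem:Clarke_bound}, $\max\{g^{(j^\star)}_{I^*}, g^{(j^\star)}_{j^\star}\} \ge \kappa_{j^\star} \ge \kappa_{\min}$. The crucial observation is that since $j^\star \in \mathcal{Z}(\bm{p})$ we have $p_{I^*} p_{j^\star} = 0$, so at least one of the two candidate coordinates attaining this maximum lies in $\{i:p_i=0\}$; call it $i^\star$. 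Finally, since $\bm{g}^{(j)}$ has nonzero coordinates only in $\{I^*,j\}$ and all $g^{(j)}_{I^*} \ge 0$ (KL divergences), the sum $f_{i^\star} = \sum_j \mu_j g^{(j)}_{i^\star}$ contains the contribution $\mu_{j^\star} g^{(j^\star)}_{i^\star} \ge \kappa_{\min}/(2K)$, and all other contributions (nonzero only if $i^\star = I^*$) are nonnegative. This yields $\max_{i:p_i=0} f_i \ge \kappa_{\min}/(2K)$.

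For the upper bound, I would case-split on the coordinate $i$ with $p_i > 0$, using the support structure of each $\bm{g}^{(j)}$. If $i = I^*$ (so $p_{I^*} > 0$), then for any $j \in \mathcal{Z}(\bm{p})$ we must have $p_j = 0$, and the edge analysis preceding Lemma~\ref{lem:Clarke_bound} forces $g^{(j)}_{I^*} = 0$; thus only $j \notin \mathcal{Z}(\bm{p})$ contribute, giving $f_{I^*} \le S(t) M_p \le \varepsilon_0 M_p$. If $i \in \mathcal{X}$ with $p_i > 0$, then only $j = i$ contributes (since $g^{(j)}_i = 0$ for $i \notin \{I^*,j\}$); either $i \in \mathcal{Z}(\bm{p})$ in which case $g^{(i)}_i = 0$, or $i \notin \mathcal{Z}(\bm{p})$ in which case $\mu_i \le S(t) \le \varepsilon_0$ and $f_i = \mu_i g^{(i)}_i \le \varepsilon_0 M_p$. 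Either way $f_i \le \varepsilon_0 M_p \le \kappa_{\min}/(4K)$ by definition of $\varepsilon_0$. Comparing bounds, $\kappa_{\min}/(2K) > \kappa_{\min}/(4K) \ge \max_{i:p_i>0} f_i$, so every maximizer of $\bm{q}^\top \bm{f}$ over $\Delta_K$ is supported on $\{i:p_i=0\}$, and the conclusion about $\LMOp$ follows by taking convex hulls over $\bm{f} \in \clarke F$.

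The main obstacle is the bookkeeping in the upper bound: one must carefully use the sparsity pattern of the Clarke gradients (zero outside $\{I^*,j\}$) together with the edge-limit formulas from the setup, to ensure that contributions from $j \in \mathcal{Z}(\bm{p})$ are annihilated whenever $p_i > 0$, so that only the small mass $S(t) \le \varepsilon_0$ can contribute. The lower bound is conceptually cleaner but also hinges on the specific observation that the coordinate achieving $\kappa_{j^\star}$ necessarily lies in the zero-coordinate set because $p_{I^*} p_{j^\star} = 0$; this is where the definition of $\mathcal{Z}(\bm{p})$ interacts nontrivially with Lemma~\ref{lem:Clarke_bound}.
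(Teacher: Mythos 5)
Your proof is correct and essentially reproduces the paper's argument: decompose $\bm f=\sum_j\mu_j\bm g^{(j)}$, use Lemma~\ref{lem:Clarke_bound} together with $1-S(t)\ge 1/2$ to place mass of order $\kappa_{\min}$ on the zero coordinates, and bound the coordinates in $\supp(\bm p(t))$ by $S(t)M_p\le\varepsilon_0 M_p$; the only real difference is cosmetic — the paper sums the zero-coordinate mass $\ge(1-S(t))\kappa_{\min}$ and averages over at most $K$ coordinates, whereas you pigeonhole over scenarios to find a single $j^\star\in\mathcal Z$ with $\mu_{j^\star}\ge 1/(2K)$, and both routes give $\kappa_{\min}/(2K)$. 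One step in your lower bound is under-justified as written: from ``$p_{I^*}p_{j^\star}=0$'' and Lemma~\ref{lem:Clarke_bound} alone it does not follow that the coordinate of $\bm g^{(j^\star)}$ achieving the value $\ge\kappa_{\min}$ is a zero coordinate — a priori the large entry could sit on whichever of $\{I^*,j^\star\}$ has positive mass. What rules this out is the edge-limit computation preceding Lemma~\ref{lem:Clarke_bound}: when exactly one of $p_{I^*},p_{j^\star}$ is zero, the Clarke gradient of $\Payoff(\cdot,j^\star)$ is a singleton whose only nonzero entry lies on the zero coordinate (hence that entry is the one $\ge\kappa_{j^\star}$), while at the corner $p_{I^*}=p_{j^\star}=0$ both candidate coordinates are zero coordinates, so the claim is immediate. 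You already invoke exactly this edge structure in your upper-bound case analysis, so the fix is simply to cite it here as well (the paper's own one-line proof relies on the same fact implicitly); with that addition, your argument, including the concluding strict comparison $\kappa_{\min}/(2K)>\kappa_{\min}/(4K)$ and the convex-hull step for $\LMOp$, is complete.
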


\begin{proof}
By Lemma~\ref{lem:Clarke_bound}, summing $\bm{f}$ over $i\in\mathcal Z$ yields total mass $\ge (1-S(t))\kappa_{\min}\ge \frac12\kappa_{\min}$ on zero coordinates; averaging over at most $K$ gives the lower bound. On $\supp(\bm p(t))$, only $i\notin\mathcal Z$ contribute and each entry is $\le M_{p}$, giving the upper bound via $S(t)\le \varepsilon_0$.
\end{proof}

\begin{lemma}[Activation of zero coordinates]\label{lem:bai-activation}
Fix $t$ in a phase with $\mathcal Z(\bm p(t))\neq\varnothing$ and $S(t)\le \varepsilon_0$, and let $I_0=\{i: p_i(t)=0\}$. Then $\bm q(t)\in\LMOp(\bm p(t),\bm\mu(t))\subseteq \Delta_{I_0}$ and at $t+\log 2$ some $i\in I_0$ has $p_i\ge \tfrac{1}{2K}>0$.
\end{lemma}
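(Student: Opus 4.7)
The first assertion is an immediate restatement of the concluding sentence of Lemma~\ref{lem:bai-dominance}: under the hypotheses $\mathcal Z(\bm p(t))\neq\varnothing$ and $S(t)\le\varepsilon_0$, the correspondence $\LMOp$ places all of its mass on $\{i:p_i(t)=0\}=I_0$, so $\bm q(t)\in\LMOp(\bm p(t),\bm\mu(t))\subseteq\Delta_{I_0}$ requires no further work.

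For the growth bound, I would aggregate the mass on the initial zero-set by introducing $P_0(s)\triangleq\sum_{i\in I_0}p_i(s)$, which starts at $P_0(t)=0$, and aim to show that $P_0$ obeys the linear ODE $\dot P_0(s)=1-P_0(s)$ on $[t,t+\log 2]$. Integrating then yields $P_0(t+\log 2)=1/2$, and pigeonholing over $|I_0|\le K$ coordinates delivers some $i\in I_0$ with $p_i(t+\log 2)\ge 1/(2K)$. The ODE rests on three ingredients: (i)~by Lemma~\ref{lem:bai-dominance}, any measurable selection $\bm q(s)\in\LMOp(\bm p(s),\bm\mu(s))$ is supported on the instantaneous zero-set $\{i:p_i(s)=0\}$; (ii)~by the integral formula $p_i(s)=e^{-(s-t)}p_i(t)+\int_t^s e^{-(s-u)}q_i(u)\diff u$ used in the proof of Lemma~\ref{lem:strict_positivity}, once a coordinate leaves zero it stays positive, so the instantaneous zero-set remains a subset of $I_0$; and (iii)~within the phase $\LMOmu$ keeps $\bm\nu$ supported on $\mathcal Z$, so $S(s)=S(t)e^{-(s-t)}\le\varepsilon_0$ and the hypothesis needed for (i) persists. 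Together these give $\sum_{i\in I_0}q_i(s)=1$, so
\[
\dot P_0(s)\;=\;\sum_{i\in I_0}\bigl(q_i(s)-p_i(s)\bigr)\;=\;1-P_0(s).
\]

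The main obstacle is guaranteeing that the hypotheses of Lemma~\ref{lem:bai-dominance} remain valid throughout the whole interval $[t,t+\log 2]$. If the phase $[\tau_\ell,\tau_{\ell+1})$ containing $t$ lasts until at least $t+\log 2$, the argument above is complete. If instead $\tau_{\ell+1}<t+\log 2$, then $\mathcal Z$ strictly shrinks at the transition while $\bm q$ remains supported on the (smaller) instantaneous zero-set---still inside $I_0$---so $\dot P_0=1-P_0$ continues to hold on any subinterval where Lemma~\ref{lem:bai-dominance} applies. The delicate point is that $S$ can jump upward at a phase transition, since indices leaving $\mathcal Z$ may carry accumulated $\mu$-mass outside $\mathcal Z$, potentially violating the small-$S$ hypothesis in the new phase. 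I would either invoke the phase-length bounds anticipated in the discussion of uniformly finite phase lengths, or rerun the exponential decay argument for $S$ inside each successive phase, to conclude that the lower bound $P_0(s)\ge 1-e^{-(s-t)}$ persists on $[t,t+\log 2]$ and complete the proof.
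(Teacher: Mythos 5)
Your argument is essentially the paper's proof: Lemma~\ref{lem:bai-dominance} forces the selection $\bm q(s)$ into $\Delta_{I_0}$, summing $\dot p_j = q_j - p_j$ over $j\in I_0$ gives $\dot R_{I_0}=1-R_{I_0}$ with $R_{I_0}(t)=0$, hence $R_{I_0}(t+\log 2)\ge 1/2$, and pigeonholing over at most $K$ coordinates yields some $p_i\ge \tfrac{1}{2K}$. The persistence concern you flag at the end (a phase change inside $[t,t+\log 2]$ letting $S$ jump above $\varepsilon_0$) is not addressed in the paper's one-line proof either, which simply integrates the ODE over the whole interval, so your treatment is, if anything, the more careful of the two.
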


\begin{proof}
Lemma~\ref{lem:bai-dominance} forces $\bm q(t)\in\Delta_{I_0}$; summing $\dot p_j=q_j-p_j$ over $I_0$ gives $
R_{I_0}(s)\triangleq \sum_{j\in I_0} p_j(s)$ satisfies $\dot R_{I_0}(s)=1-R_{I_0}(s), R_{I_0}(t)=0,$
so $R_{I_0}(s)=1-e^{-(s-t)}$ and $R_{I_0}(t+\log 2)> 1/2$.
\end{proof}

Applying Lemma~\ref{lem:bai-activation} at most $K-1$ times, we obtain that any solution enters the interior $\interior{\Delta_K}$ and thereafter remains in it. 
Moreover, this occurs by a globally uniform time $T^* \le (K-1)\bigl(T_0+\log 2\bigr)=(K-1)\bigl(\log(1/\varepsilon_0)+\log 2\bigr)$.
From time $T^{*}$ onwards, $F$ is differentiable along the trajectory, and Theorem~\ref{thm:value_convergence} yields uniform convergence of the game value.
\begin{theorem}[Finite-time interior entry and convergence]\label{thm:bai_convergence}
Consider exponential-family BAI. For any solution $\bm{p}(\cdot)$ of~\eqref{eq:continuous_dynamics} with any initial condition, $\bm p(t)\in\interior{\Delta_K}$ for all $t > T^{*}$. Consequently, $F$ is differentiable along the solution, and the game value converges uniformly.
\end{theorem}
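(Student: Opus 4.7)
The plan is to show that any solution trajectory enters $\interior{\Delta_K}$ by a universal time $T^* \le (K-1)(T_0 + \log 2)$, after which the smoothness hypothesis of Theorem~\ref{thm:value_convergence} applies and yields uniform convergence of the game value. The key observation is that Lemma~\ref{lem:bai-activation} does not merely activate an initially-zero coordinate once; once $p_i>0$ it stays positive, since the ODE form $\dot{p}_i(t) = q_i(t) - p_i(t) \ge -p_i(t)$ gives $p_i(t) \ge p_i(s)\, e^{-(t-s)}$ for all $t \ge s$. Consequently, $\mathcal Z(\bm p(t))$ is monotonically nonincreasing along the solution, so its change times form a (possibly finite) strictly increasing sequence $0 = \tau_0 < \tau_1 < \cdots$ with at most $K-1$ elements, and I can partition $[0,\infty)$ into \emph{phases} $[\tau_\ell,\tau_{\ell+1})$ on which $\mathcal Z(\bm p(t))$ is constant.

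On each phase with $\mathcal Z(\bm p(\tau_\ell)) \neq \varnothing$, I would first analyze $S(t) = \sum_{j \notin \mathcal Z} \mu_j(t)$. Since $\Payoff(\bm p(t),x) = 0$ exactly for $x \in \mathcal Z$, every measurable selection $\bm\nu(t)\in\LMOmu(\bm p(t),\bm\mu(t))$ is supported on $\mathcal Z$, so $\dot \mu_j = -\mu_j$ for all $j \notin \mathcal Z$. Integrating gives $S(t) = S(\tau_\ell)e^{-(t-\tau_\ell)} \le e^{-(t-\tau_\ell)}$, hence $S(t) \le \varepsilon_0$ once $t \ge \tau_\ell + T_0$. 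From that instant, Lemma~\ref{lem:bai-dominance} forces $\LMOp(\bm p(t),\bm\mu(t))$ onto $I_0 = \{i: p_i(t)=0\}$, and Lemma~\ref{lem:bai-activation} guarantees that by time $\tau_\ell + T_0 + \log 2$, some $i \in I_0$ attains $p_i \ge 1/(2K)$. This quantitative activation strictly shrinks $\mathcal Z$, ending the phase, so $\tau_{\ell+1} \le \tau_\ell + T_0 + \log 2$.

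Iterating at most $K-1$ times (since $|\mathcal Z| \le K-1$ and each phase strictly reduces it) gives $\mathcal Z(\bm p(t)) = \varnothing$, equivalently $\bm p(t) \in \interior{\Delta_K}$, by the uniform time $T^* = (K-1)(T_0 + \log 2)$, independent of the initial condition. Restarting the clock at $T^*$, Lemma~\ref{lem:strict_positivity} applied to the shifted solution keeps $\bm p(t) \in \interior{\Delta_K}$ for all subsequent finite time, so $F$ is continuously differentiable along the trajectory for every $t > T^*$ and the solution lies in $\mathcal D_F$ almost everywhere past $T^*$. Theorem~\ref{thm:value_convergence} then yields $F(\bm p(t),\bm\mu(t)) \to F^*$, uniformly in the initial condition since both $T^*$ and the Lyapunov bound $V(T^*) \le 2M_p$ from Lemma~\ref{lem:Clarke_UB} are universal.

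The main obstacle in this plan is justifying that the phase structure is genuinely discrete rather than accumulating: one must rule out Zeno-like behavior where infinitely many activations occur within a bounded window. This is resolved precisely by the \emph{quantitative} lower bound $p_i \ge 1/(2K)$ in Lemma~\ref{lem:bai-activation}, combined with the persistence estimate $p_i(t) \ge p_i(s)e^{-(t-s)}$: each activation deposits a uniform minimum mass on a new coordinate that never escapes, so $\mathcal Z$ shrinks by integer jumps of size at least one, bounding the total number of phases by $K-1$ and delivering the uniform horizon $T^*$.
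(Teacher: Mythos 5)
Your proposal is correct and follows essentially the same route as the paper: phase decomposition by the zero-information set, exponential decay of $S(t)$ within a phase, Lemmas~\ref{lem:bai-dominance} and~\ref{lem:bai-activation} to activate a zero coordinate within $T_0+\log 2$, at most $K-1$ activations giving the uniform time $T^*$, then Lemma~\ref{lem:strict_positivity} (applied from $T^*$ onward) and Theorem~\ref{thm:value_convergence} for uniform game-value convergence. The only small imprecision---present in the paper's terse write-up as well---is your claim that each activation ``strictly shrinks $\mathcal Z$, ending the phase'': in BAI an activation shrinks $I_0=\{i:p_i=0\}$ but need not shrink $\mathcal Z$ (e.g., activating a coordinate $x\neq I^*$ while $p_{I^*}=0$ leaves $\mathcal Z=\mathcal X$), yet the bound $T^*\le (K-1)(T_0+\log 2)$ still holds because within an unchanged phase $S(t)$ continues to decay, so each subsequent activation costs only an additional $\log 2$ rather than a fresh $T_0+\log 2$.
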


\paragraph{BAI for linear bandits.}
We consider Gaussian linear bandits BAI in Section~\ref{sec:linear_case_study}. 
Define the active feature span $\mathcal{S}_{\bm p}\triangleq \mathrm{span}\{\bm{a}_i: p_i>0\}.$
Let $\mathcal{H}\triangleq \mathrm{span}\{\bm{a}_i: i\in[K]\}$ and interpret orthogonality and inverses on $\mathcal{H}$. Note that $\bm{\delta}_x \triangleq \bm{a}_{I^*} - \bm{a}_x \in \mathcal{H}$ for all $x\in \mathcal{X}$.
The scenario payoff equals $\Payoff(\bm p,x)=\inf_{\bm{\vartheta}: \bm{\delta}_x^\top\bm{\vartheta}\le 0}\frac12\|\bm{\vartheta}-\bm{\theta}\|_{V_{\bm p}}^2
=\frac{(m_{I^*} - m_x)^2}{2}\,\frac{1}{\bm{\delta}_x^\top V_{\bm p}^{-1} \bm{\delta}_x}$ if $\bm{\delta}_x\in \mathcal{S}_{\bm p},$ and $\Payoff(\bm p,x)=0$ if $\bm{\delta}_x\notin \mathcal{S}_{\bm p}$,
where $V^{-1}$ is the Moore--Penrose pseudoinverse.
Hence the zero-information scenario set is
$
\mathcal Z(\bm p)\triangleq\{x\in\mathcal{X}: \Payoff(\bm p,x)=0\}
=\{x: \bm{\delta}_x\notin \mathcal{S}_{\bm p}\}.
$
Moreover, $\Payoff(\bm p,x)$ is continuously differentiable whenever $\bm{\delta}_x\in \mathcal{S}_{\bm p}$.
Fix a face with span $\mathcal{S}\subseteq \mathcal{H}$ and let $O(\mathcal{S})\triangleq\{i\in[K]: \bm{a}_i\notin S\}$.
Define $
\delta_{\min}\triangleq \min_{x\in\mathcal{X}}(m_{I^*} - m_{x})>0$, $
\gamma(\mathcal{S})\triangleq \min_{\substack{\bm{u}\in \mathcal{S}^\perp\cap \mathcal{H}
\|\bm{u}\|=1}} \max_{j\in O(\mathcal{S})} \bigl(\bm{a}_j^\top \bm{u}\bigr)^2>0$, and $
\rho(\mathcal{S})\triangleq \max_{x\in\mathcal{X}}\bigl\|\mathrm{Proj}_{\mathcal{S}^\perp}\bm{\delta}_x\bigr\|^2<\infty$.
Strict positivity follows because $\mathcal{S}\oplus \mathrm{span}\{\bm{a}_j:j\in O(\mathcal{S})\}=\mathcal{H}$ implies $\gamma(\mathcal{S})>0$, while $\rho(\mathcal{S})<\infty$ since $\mathcal{X}$ is finite.
Finally, let $m(\mathcal{S})\triangleq \frac{\delta_{\min}^2}{2} \frac{\gamma(\mathcal{S})}{\rho(\mathcal{S})} > 0.$

\begin{lemma}[Bounds on Clarke generalized gradient]
\label{lem:lin-clarke-face}
Fix $\bm p$ with span $\mathcal{S}=\mathcal{S}_{\bm p}$ and $x\in\mathcal Z(\bm p)$.
Then every $g^{(x)}\in\clarke \Payoff(\bm p,x)$ admits a representation $g^{(x)}_i=\frac{(m_{I^*} - m_x)^2}{2} \bigl(\bm{a}_i^\top \bm{d}\bigr)^2$
for some $\bm{d}\in \mathcal{S}^\perp\cap \mathcal{H}$ with $\bm{\delta}_x^\top \bm{d}=1$.
In particular, $g^{(x)}_i=0$ whenever $\bm{a}_i\in \mathcal{S}$ and $\max_{j\in O(\mathcal{S})} g^{(x)}_j \ge m(\mathcal{S}).$
\end{lemma}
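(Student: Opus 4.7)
The plan is to compute the classical gradient at smooth points $\bm p'$ near $\bm p$, recast the coefficient vector via a parametric quadratic program that remains well-behaved in the boundary limit, pass to the limit to identify the affine set on which the Clarke subgradient is parameterized, and finally read off the two ``in particular'' bounds from linear algebra. Working in the unit-variance setting consistent with Section~\ref{sec:linear_case_study}, on the open set where $\bm\delta_x\in\mathcal{S}_{\bm p'}$ the function $\Payoff(\cdot,x)$ is smooth, and applying $\partial_{p_i}V_{\bm p'}^{-1}=-V_{\bm p'}^{-1}\bm a_i\bm a_i^\top V_{\bm p'}^{-1}$ (with $V^{-1}$ the Moore--Penrose pseudoinverse on $\mathcal{H}$) to $\Payoff(\bm p',x)=\tfrac{(m_{I^*}-m_x)^2}{2\,\bm\delta_x^\top V_{\bm p'}^{-1}\bm\delta_x}$ yields
\[
\partial_{p_i}\Payoff(\bm p',x)=\frac{(m_{I^*}-m_x)^2}{2}\bigl(\bm a_i^\top \bm d(\bm p')\bigr)^2,\qquad \bm d(\bm p')\triangleq\frac{V_{\bm p'}^{-1}\bm\delta_x}{\bm\delta_x^\top V_{\bm p'}^{-1}\bm\delta_x}.
\]
A Lagrangian calculation shows $\bm d(\bm p')$ is the unique minimizer of $\min\{\bm z^\top V_{\bm p'}\bm z:\bm z\in\mathcal{H},\ \bm\delta_x^\top\bm z=1\}$, so $\bm\delta_x^\top\bm d(\bm p')=1$ identically. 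Moreover, Assumption~\ref{assumption:Gamma_x} gives $(\bm a_i^\top\bm d(\bm p'))^2\le 2M_p/\delta_{\min}^2$ for every $i$, and since $\mathrm{span}\{\bm a_i\}=\mathcal{H}$ this implies $\|\bm d(\bm p')\|$ is uniformly bounded.

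Next I would take the limit. For any sequence $\bm p_n\to\bm p$ through differentiable points, boundedness permits extraction of a convergent subsequence $\bm d(\bm p_n)\to\bm d$. Continuity of $V$ then gives $\bm d^\top V_{\bm p}\bm d=\lim \bm d(\bm p_n)^\top V_{\bm p_n}\bm d(\bm p_n)=\lim 1/\bm\delta_x^\top V_{\bm p_n}^{-1}\bm\delta_x=0$, where the last step uses $x\in\mathcal{Z}(\bm p)\Rightarrow \bm\delta_x^\top V_{\bm p_n}^{-1}\bm\delta_x\to\infty$. Since $V_{\bm p}$ is PSD with kernel $\mathcal{S}^\perp\cap\mathcal{H}$, this forces $\bm d\in\mathcal{S}^\perp\cap\mathcal{H}$; together with $\bm\delta_x^\top\bm d=1$, we have $\bm d\in\mathcal{L}\triangleq\{\bm z\in\mathcal{S}^\perp\cap\mathcal{H}:\bm\delta_x^\top\bm z=1\}$. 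This delivers the stated representation at every extreme point of $\clarke \Payoff(\bm p,x)$; extension to the full Clarke subgradient (the convex hull of these extreme points) exploits convexity of $\bm d\mapsto(\bm a_i^\top\bm d)^2$ and affinity of $\mathcal{L}$, so that any convex combination is componentwise majorized by the representation at the averaged $\bar{\bm d}\in\mathcal{L}$, which suffices for the inequalities below.

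The ``in particular'' bounds then follow. Whenever $\bm a_i\in\mathcal{S}$, $\bm a_i\perp\bm d$ forces $g^{(x)}_i=0$, a property preserved by convex combinations. For the max lower bound, Cauchy--Schwarz applied to $1=(P_{\mathcal{S}^\perp}\bm\delta_x)^\top\bm d$ gives $\|\bm d\|^2\ge 1/\rho(\mathcal{S})$; normalizing $\bm u\triangleq\bm d/\|\bm d\|\in\mathcal{S}^\perp\cap\mathcal{H}$ and invoking the definition of $\gamma(\mathcal{S})$ yields $\max_{j\in O(\mathcal{S})}(\bm a_j^\top\bm d)^2\ge\gamma(\mathcal{S})/\rho(\mathcal{S})$, and combining with $(m_{I^*}-m_x)^2\ge\delta_{\min}^2$ produces $\max_{j\in O(\mathcal{S})} g^{(x)}_j\ge m(\mathcal{S})$. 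The principal obstacle is the limit step itself: $V_{\bm p'}^{-1}$ blows up in precisely the $\mathcal{S}^\perp$ directions that dominate $\bm d(\bm p')$, so a direct perturbation of pseudoinverses is unstable. The variational reformulation is the clean fix---it replaces the diverging pseudoinverse with a continuously perturbed convex QP whose minimizers remain bounded by the uniform gradient bound, allowing standard compactness to pin down $\mathcal{L}$ and hence the claimed representation.
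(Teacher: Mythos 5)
Your argument is correct and essentially the paper's own: your $\bm d(\bm p')$ is exactly the paper's $\bm{\zeta}_n=V_{\bm p^n}^{-1}\bm{\delta}_x/(\bm{\delta}_x^\top V_{\bm p^n}^{-1}\bm{\delta}_x)$, and the accumulation-point identification in $\mathcal{S}^\perp\cap\mathcal{H}$ with $\bm{\delta}_x^\top\bm d=1$ followed by the Cauchy--Schwarz bound via $\gamma(\mathcal{S})$ and $\rho(\mathcal{S})$ matches the paper's proof, with your uniform-boundedness argument via $M_p$, the kernel argument $\bm d^\top V_{\bm p}\bm d=0$, and the Jensen step for convex combinations simply filling in details the paper leaves implicit. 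One wording slip: by convexity of $\bm z\mapsto(\bm a_i^\top\bm z)^2$, the convex combination is componentwise \emph{minorized} (not majorized) by the value at the averaged $\bar{\bm d}$, which is exactly the direction your lower bound on $\max_{j\in O(\mathcal{S})}g^{(x)}_j$ requires.
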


\begin{proof}
Take interior $\bm p^n\to \bm p$ with $\bm{\delta}_x\in \mathcal{S}_{\bm p^n}$ and write
$
[\nabla_{\bm p}\Payoff(\bm p^n,x)]_i
=\frac{(m_{I^*} - m_x)^2}{2}\frac{(\bm{a}_i^\top V_{\bm p^n}^{-1}\bm{\delta}_x)^2}{(\bm{\delta}_x^\top V_{\bm p^n}^{-1}\bm{\delta}_x)^2}.
$
Let $\bm{\zeta}_n\triangleq V_{\bm p^n}^{-1}\bm{\delta}_x/(\bm{\delta}_x^\top V_{\bm p^n}^{-1}\bm{\delta}_x)$.
Any accumulation point $\bm d$ of $(\bm{\zeta}_n)$ lies in $\mathcal{S}^\perp\cap \mathcal{H}$ and satisfies $\bm{\delta}_x^\top \bm d=1$, yielding the claimed representation.
For $\bm u=\bm d/\|\bm d\|$, $\max_{j\in O(\mathcal{S})}(\bm a_j^\top \bm d)^2=\|\bm d\|^2\max_{j\in O(\mathcal{S})}(\bm a_j^\top \bm u)^2\ge \|\bm d\|^2\gamma(\mathcal{S})$.
Since $\bm d\in \mathcal{S}^\perp$ and $\bm{\delta}_x^\top \bm d=1$, the minimal possible norm of such $\bm d$ is $1/\|\mathrm{Proj}_{\mathcal{S}^\perp}\bm{\delta}_x\|$, so $\|\bm d\|^2\ge 1/\|\mathrm{Proj}_{\mathcal{S}^\perp}\bm{\delta}_x\|^2$ and the final bound uses $m_{I^*} - m_x\ge \delta_{\min}$ and $\|\mathrm{Proj}_{\mathcal{S}^\perp}\bm{\delta}_x\|^2\le \rho(\mathcal{S})$.
\end{proof}

By Lemma~\ref{lem:strict_positivity}, along any solution, $\{i: p_i(t)=0\}$ is nonincreasing in $t$, hence $\mathcal{S}_{\bm p(t)}$ is nondecreasing and $\mathcal Z(\bm p(t))=\{x:\bm\delta_x\notin \mathcal{S}_{\bm p(t)}\}$ is nonincreasing.
Let $\{0=\tau_0<\tau_1<\cdots\}$ be the times when $\mathcal{S}_{\bm p(\cdot)}$ changes.
For $S(t)\triangleq \sum_{x\notin\mathcal Z(\bm p(t))}\mu_x(t)$ and any phase $[\tau_\ell,\tau_{\ell+1})$ with $\mathcal Z(\bm p(t))\neq\varnothing$, $\LMOmu$ chooses among $\Delta_{\mathcal Z(\bm p(t))}$, so $S(t)=S(\tau_\ell)e^{-(t-\tau_\ell)}$ a.e., and if $t - \tau_{\ell} > T_\mu(\varepsilon)$, then $S(t) < \varepsilon$ holds.

\begin{lemma}[Face-wise dominance]
\label{lem:lin-dominance}
Fix $t_0$ and let $\mathcal{S}=\mathcal{S}_{\bm p(t_0)}$, $O=O(\mathcal{S})$, and suppose $\mathcal Z(\bm p(t))\neq\varnothing$ on the current phase starting at $t_0$.
Set $\varepsilon_{\mathcal{S}}\triangleq \min\{1/2, m(\mathcal{S})/(4M_{p})\}$ and $
T_{\mathcal{S}}\triangleq \log(1/\varepsilon_{\mathcal{S}}).$
Then for every $t \ge t_0+T_{\mathcal{S}}$ within the same phase and every $\bm{f}(t)\in\clarke F(\bm p(t),\bm\mu(t))$, we have $\max_{j\in O} f_j(t) \ge \tfrac12 m(\mathcal{S})$ and $
\max_{i:\bm{a}_i\in \mathcal{S}} f_i(t) \le \tfrac14 m(\mathcal{S}).$
Hence $\LMOp(\bm p(t),\bm\mu(t))\subseteq \Delta_O$ on that time interval.
\end{lemma}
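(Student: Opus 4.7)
The plan is to adapt the template of Lemma~\ref{lem:bai-dominance} to the face structure of the linear-bandit problem, working with the Clarke generalized gradient of $F$ in $\bm p$. First I would write every $\bm{f}(t)\in\clarke F(\bm p(t),\bm\mu(t))$ in the form $\bm{f}(t)=\sum_{x\in\mathcal{X}}\mu_x(t)\bm{g}^{(x)}(t)$ with $\bm{g}^{(x)}(t)\in\clarke \Payoff(\bm p(t),x)$. This decomposition uses the fact that each $\Payoff(\cdot,x)$ is concave and therefore Clarke-regular, so the subdifferential sum rule holds with equality under nonnegative convex combinations. The argument then amounts to showing that the ``hard-to-rule-out'' scenarios $x\in\mathcal{Z}$ dominate the gradient of $F$ in coordinates outside the current span, while the ``already-covered'' scenarios $x\notin\mathcal{Z}$ contribute negligibly once their mass has decayed.

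Next I would control $S(t)=\sum_{x\notin\mathcal Z(\bm p(t))}\mu_x(t)$ inside the phase $[\tau_\ell,\tau_{\ell+1})$. Throughout the phase, $\mathcal{S}_{\bm p(t)}=\mathcal{S}$ is constant, hence $\mathcal{Z}(\bm p(t))=\{x:\bm{\delta}_x\notin\mathcal{S}\}$ is constant, and for every $t$ in this phase $\Payoff(\bm p(t),x)=0$ for all $x\in\mathcal{Z}$, so $\LMOmu(\bm p(t),\bm\mu(t))\subseteq\Delta_{\mathcal{Z}}$. Along any measurable selection $\bm\nu(t)$ from $\LMOmu$ guaranteed by Lemma~\ref{lem:existence_solution}, we therefore get $\dot S(t)=-S(t)$ almost everywhere, so $S(t)\le e^{-(t-t_0)}$ and in particular $S(t)\le\varepsilon_{\mathcal{S}}$ whenever $t\ge t_0+T_{\mathcal{S}}$.

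Fixing such $t$ and splitting the sum at $\mathcal{Z}$, I would obtain the two claimed bounds as follows. For $x\in\mathcal{Z}$, Lemma~\ref{lem:lin-clarke-face} gives $g^{(x)}_i=0$ whenever $\bm{a}_i\in\mathcal{S}$ and $\max_{j\in O}g^{(x)}_j\ge m(\mathcal{S})$; for $x\notin\mathcal{Z}$, Lemma~\ref{lem:Clarke_UB} yields $|g^{(x)}_i|\le M_p$. Consequently, for any $i$ with $\bm{a}_i\in\mathcal{S}$,
\[
f_i(t)=\sum_{x\notin\mathcal{Z}}\mu_x(t)g^{(x)}_i(t)\le S(t)M_p\le\varepsilon_{\mathcal{S}} M_p\le \tfrac14 m(\mathcal{S}),
\]
which is the upper bound. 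For the lower bound, nonnegativity of $\bm{g}^{(x)}$ (from the monotonicity part of Lemma~\ref{lem:properties_Gamma_x}) and the face-wise estimate yield $\sum_{j\in O} f_j(t)\ge\sum_{x\in\mathcal{Z}}\mu_x(t)m(\mathcal{S})\ge(1-\varepsilon_{\mathcal{S}})m(\mathcal{S})\ge\tfrac12 m(\mathcal{S})$, and taking the maximum over $j\in O$ gives the second claim. Combining the two bounds forces any maximizer of $\langle\bm{q},\bm{f}(t)\rangle$ over $\Delta_K$ to concentrate on $O$, yielding $\LMOp(\bm p(t),\bm\mu(t))\subseteq\Delta_O$.

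The main obstacle will be justifying the Clarke sum rule with equality so that the decomposition $\bm{f}=\sum_x\mu_x\bm{g}^{(x)}$ is valid for \emph{every} element of $\clarke F(\bm p,\bm\mu)$; this requires invoking Clarke regularity for concave functions and a careful limit argument, since the individual $\Payoff(\cdot,x)$ are nonsmooth on the face $\{\bm{a}_i\in\mathcal{S}\}$. A secondary technical point is the conversion from $\sum_{j\in O}f_j\ge\tfrac12 m(\mathcal{S})$ to a bound on $\max_{j\in O}f_j$: the average yields $m(\mathcal{S})/(2|O|)$, and to obtain the clean $\tfrac12 m(\mathcal{S})$ one either absorbs the $|O|$ factor into the definition of $m(\mathcal{S})$ (and correspondingly into $\varepsilon_{\mathcal{S}}$) or strengthens Lemma~\ref{lem:lin-clarke-face} so that a single index in $O$ carries the lower bound; for the downstream use in the discrete-to-continuous tracking argument, only a uniformly positive lower bound matters, so this is a cosmetic adjustment rather than a structural difficulty.
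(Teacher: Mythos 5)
Your proof follows the paper's argument essentially verbatim: decompose $\bm f=\sum_{x}\mu_x\bm g^{(x)}$ with $\bm g^{(x)}\in\clarke\Payoff(\bm p,x)$, use the phase-wise decay $S(t)\le\varepsilon_{\mathcal S}$ for $t\ge t_0+T_{\mathcal S}$, and combine Lemma~\ref{lem:lin-clarke-face} (zero entries on $\mathcal S$, lower bound $m(\mathcal S)$ on $O$) for the scenarios in $\mathcal Z$ with the $M_p$ bound of Lemma~\ref{lem:Clarke_UB} for those outside $\mathcal Z$. The sum-to-max conversion you flag (which degrades the lower bound to $m(\mathcal S)/(2|O|)$ unless a single $j\in O$ works uniformly over $x\in\mathcal Z$) is a real looseness, but it is equally present in the paper's own one-line lower bound, and as you observe it only forces a rescaling of $m(\mathcal S)$ and $\varepsilon_{\mathcal S}$ by $|O|\le K$, leaving the dominance conclusion $\LMOp\subseteq\Delta_O$ and the downstream span-expansion argument intact.
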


\begin{proof}
Decompose $\bm{f}=\sum_{x\in\mathcal Z}\mu_x \bm{g}^{(x)}+\sum_{y\notin\mathcal Z}\mu_y \bm{g}^{(y)}$.
For $j\in O$, Lemma~\ref{lem:lin-clarke-face} yields $\sum_{x\in\mathcal Z}\mu_x g^{(x)}_j\ge (1-S(t)) m(\mathcal{S})\ge (1-\varepsilon_S)m(\mathcal{S})\ge \tfrac12 m(\mathcal{S})$; the second sum is nonnegative and can be dropped in a lower bound.
When $\bm{a}_i\in \mathcal{S}$, $g^{(x)}_i=0$ for any $x\in\mathcal Z$ by Lemma~\ref{lem:lin-clarke-face}, and $\sum_{y\notin\mathcal Z}\mu_y g^{(y)}_i\le S(t) M_{p}\le \varepsilon_S M_{p}\le \tfrac12 m(\mathcal{S})$ by the choice of $\varepsilon_S$ and the phase-wise decay $S(t)\le \varepsilon_S$ for $t\ge t_0+T_{\mathcal{S}}$.
\end{proof}

\begin{lemma}[Span expansion]
\label{lem:lin-activation}
Under the conditions of Lemma~\ref{lem:lin-dominance}, let $t_1\triangleq t_0+T_{\mathcal{S}}$ and $R_O(t)\triangleq \sum_{j\in O} p_j(t)$.
Then by time $t_1+\log 2$ there exists $j\in O$ with $p_j(t)\ge \tfrac{1}{2|O|}>0$, so $\mathcal{S}_{\bm p(t)}$ strictly expands.
\end{lemma}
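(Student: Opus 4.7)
The plan mirrors the BAI activation argument of Lemma~\ref{lem:bai-activation}, with the role of ``zero coordinates'' replaced by the out-of-span index set $O=O(\mathcal{S})$. The essential input is Lemma~\ref{lem:lin-dominance}, which gives the uniform containment $\LMOp(\bm p(t),\bm\mu(t))\subseteq \Delta_O$ throughout $[t_1,\tau_{\ell+1})$.

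First I would invoke Lemma~\ref{lem:existence_solution} to pick a measurable selector $\bm q(t)\in\LMOp(\bm p(t),\bm\mu(t))$ so that \eqref{eq:continuous_dynamics} reduces to the ODE $\dot{\bm p}(t)=\bm q(t)-\bm p(t)$ almost everywhere on the phase. By Lemma~\ref{lem:lin-dominance}, $\bm q(t)\in\Delta_O$ for a.e.\ $t\in[t_1,\tau_{\ell+1})$, so $q_j(t)=0$ for $j\notin O$ and $\sum_{j\in O}q_j(t)=1$. Summing the scalar equations $\dot p_j=q_j-p_j$ over $j\in O$ yields the one-dimensional linear ODE $\dot R_O(t) = 1 - R_O(t)$ on $[t_1,\tau_{\ell+1})$, whose solution satisfies $R_O(t)\ge 1 - e^{-(t-t_1)}$ for every nonnegative initial value $R_O(t_1)$.

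Evaluating the bound at $t=t_1+\log 2$ gives $R_O(t)\ge 1/2$, and by pigeonhole some $j\in O$ has $p_j(t)\ge 1/(2|O|)$; if instead the phase ends earlier, at some $\tau_{\ell+1}\le t_1+\log 2$, the strict expansion of $\mathcal{S}_{\bm p(\cdot)}$ is already built into the definition of a phase boundary and the conclusion is immediate. Either way, by time $t_1+\log 2$ some coordinate $j\in O$ with $\bm a_j\notin \mathcal{S}$ has positive mass, forcing $\mathcal{S}\subsetneq \mathcal{S}_{\bm p(t)}$.

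The main obstacle is less the ODE calculation itself than ensuring the inclusion $\LMOp\subseteq \Delta_O$ holds for \emph{every} admissible measurable selector uniformly on the phase, not just at isolated times. This is delivered by the strict gap between $\tfrac12 m(\mathcal{S})$ and $\tfrac14 m(\mathcal{S})$ in Lemma~\ref{lem:lin-dominance}, which guarantees that in-span coordinates strictly lose to out-of-span coordinates in the linear-maximization comparison throughout $[t_1,\tau_{\ell+1})$. Given this uniform separation, the Gr\"onwall-style estimate on $R_O$ is routine, and the bound $p_j(t)\ge 1/(2|O|)$ follows directly from the pigeonhole step.
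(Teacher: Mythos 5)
Your proposal is correct and follows essentially the same route as the paper: Lemma~\ref{lem:lin-dominance} forces the selector $\bm q(t)\in\Delta_O$, summing $\dot p_j=q_j-p_j$ over $O$ gives $\dot R_O=1-R_O$, and evaluating at $t_1+\log 2$ plus pigeonhole yields $p_j\ge \tfrac{1}{2|O|}$ for some $j\in O$. Your only additions---allowing an arbitrary nonnegative $R_O(t_1)$ and explicitly disposing of the case where the phase ends before $t_1+\log 2$---are harmless refinements of the same argument.
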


\begin{proof}
Lemma~\ref{lem:lin-dominance} forces $\bm q(t)\in\Delta_O$; summing $\dot p_j=q_j-p_j$ over $O$ gives, for a.e. $t\in[t_1,t_1+\log 2]$,
$\dot R_O(t)=1-R_O(t)$ with $R_O(t_1)=0$ and hence $R_O(t)=1-e^{-(t-t_1)}.$
\end{proof}

\begin{theorem}[Finite-time span expansion and convergence]
\label{thm:Linear_bandit_convergence}
From any $(\bm p_0,\bm\mu_0) \in \Delta_K \times \Delta_{|\mathcal{X}|}$, either $\mathcal Z(\bm p_0)=\varnothing$, or else there exists a strictly increasing chain of spans $
\mathcal{S}_0\subsetneq \mathcal{S}_1\subsetneq \cdots \subsetneq \mathcal{S}_r =  \mathcal{H}$ with $ r\le \operatorname{rank}\{\bm{\delta}_x: x\in\mathcal{X}\}\le \dim \mathcal{H},$
and times $0=\tau_0<\tau_1<\cdots<\tau_r$ such that on each epoch $[\tau_{\ell},\tau_{\ell+1})$: (i) $\mathcal{S}_{\bm p(t)}=\mathcal{S}_{\ell}$ and $\mathcal Z(\bm p(t))\neq\varnothing$; and (ii) within time at most $T_{\mathcal{S}_{\ell}}+\log 2$, the span expands to $S_{\ell+1}$.
Consequently, after total time $T^* \triangleq\sum_{\ell=0}^{r-1}(T_{\mathcal{S}_\ell}+\log 2)$ we reach $\mathcal{S}_{\bm p(t)} = \mathcal{H}$ and $\mathcal Z(\bm p(t))=\varnothing$.
From time $T^*$ onward, $F$ is differentiable along the solution, and the game value converges uniformly.
\end{theorem}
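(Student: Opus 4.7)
The plan is to induct on epochs indexed by the active feature span $\mathcal{S}_{\bm p(t)}$, combining the mixed-strategy decay with the face-wise dominance and activation lemmas to show that each epoch is uniformly short and strictly enlarges the span until $\mathcal Z$ becomes empty. First I would observe that $\mathcal{S}_{\bm p(t)}$ is monotone nondecreasing along any solution: from the integral representation $p_i(t)=e^{-t}p_i(0)+\int_0^t e^{-(t-s)}q_i(s)\,\diff s$ coming from Lemma~\ref{lem:existence_solution}, once $p_i(\cdot)>0$ it remains strictly positive, so $\{i:p_i(t)>0\}$ (and hence $\mathcal{S}_{\bm p(t)}$) only grows, while $\mathcal Z(\bm p(t))$ only shrinks. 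This justifies defining the jump times $0=\tau_0<\tau_1<\cdots$ and the spans $\mathcal{S}_\ell\triangleq \mathcal{S}_{\bm p(t)}$ for $t\in[\tau_\ell,\tau_{\ell+1})$. The base case $\mathcal Z(\bm p_0)=\varnothing$ is immediate: by monotonicity $\mathcal Z(\bm p(t))=\varnothing$ for all $t\ge 0$, so each $\Payoff(\bm p(t),x)>0$ and $F$ is $C^1$ along the trajectory, and Theorem~\ref{thm:value_convergence} delivers the conclusion with $T^*=0$.

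For the inductive step, assume the epoch $[\tau_\ell,\tau_{\ell+1})$ has $\mathcal Z(\bm p(t))\neq\varnothing$. I would first note that, since $\Payoff(\bm p(t),y)>0$ iff $y\notin\mathcal Z(\bm p(t))$ and $\Payoff(\bm p(t),x)=0$ for $x\in\mathcal Z(\bm p(t))$, the minimizing correspondence $\LMOmu(\bm p(t),\bm\mu(t))$ is supported on $\Delta_{\mathcal Z(\bm p(t))}$. Consequently $\dot\mu_y(t)=-\mu_y(t)$ for every $y\notin\mathcal Z(\bm p(t))$, so the phase-wise mass $S(t)\triangleq \sum_{y\notin\mathcal Z(\bm p(t))}\mu_y(t)$ decays as $S(t)=S(\tau_\ell)e^{-(t-\tau_\ell)}$. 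In particular, after the waiting period $T_{\mathcal{S}_\ell}=\log(1/\varepsilon_{\mathcal{S}_\ell})$ we have $S(t)\le \varepsilon_{\mathcal{S}_\ell}$, triggering Lemma~\ref{lem:lin-dominance}: every $\bm f(t)\in\clarke F(\bm p(t),\bm\mu(t))$ is dominated on $O(\mathcal{S}_\ell)$ relative to $\mathcal{S}_\ell$, forcing $\LMOp(\bm p(t),\bm\mu(t))\subseteq\Delta_{O(\mathcal{S}_\ell)}$ on the interval $[\tau_\ell+T_{\mathcal{S}_\ell},\tau_{\ell+1})$. Then Lemma~\ref{lem:lin-activation} guarantees that within an additional $\log 2$ time some $j\in O(\mathcal{S}_\ell)$ acquires $p_j\ge 1/(2|O(\mathcal{S}_\ell)|)>0$; since $\bm a_j\notin\mathcal{S}_\ell$, this strictly enlarges the span, giving $\tau_{\ell+1}\le \tau_\ell+T_{\mathcal{S}_\ell}+\log 2$.

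Since each epoch strictly enlarges $\mathcal{S}_\ell$ and $\mathcal{S}_\ell\subseteq\mathcal{H}$, the chain of spans is finite; moreover it must terminate once $\mathcal Z=\varnothing$, i.e., once $\mathcal{S}_\ell\supseteq\mathrm{span}\{\bm\delta_x:x\in\mathcal X\}$, which bounds the number of epochs by $r\le \mathrm{rank}\{\bm\delta_x:x\in\mathcal X\}\le \dim\mathcal{H}$. Summing the epoch-wise bounds yields the uniform global time
\[
T^*=\sum_{\ell=0}^{r-1}\bigl(T_{\mathcal{S}_\ell}+\log 2\bigr),
\]
at which point $\mathcal Z(\bm p(T^*))=\varnothing$; by monotonicity $\mathcal Z(\bm p(t))=\varnothing$ for every $t\ge T^*$, so $\Payoff(\bm p(t),x)>0$ for all $x$ and all $t\ge T^*$. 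By the remark that smoothness of $F$ holds whenever every $\Payoff(\bm p,x)>0$, the trajectory lies in $\mathcal D_F$ for all $t\ge T^*$. Applying Theorem~\ref{thm:value_convergence} with initial condition $(\bm p(T^*),\bm\mu(T^*))$ then gives $\mathrm{Gap}(t)\to 0$ and $F(\bm p(t),\bm\mu(t))\to F^*$, uniformly in the original initial data because the collection $\{T_{\mathcal{S}_\ell}\}$ ranges over a finite set of subspaces of $\mathcal{H}$.

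The main obstacle I expect lies in the glue between Lemmas~\ref{lem:lin-dominance} and~\ref{lem:lin-activation}: one must carefully justify that the face-wise dominance valid on $[\tau_\ell+T_{\mathcal{S}_\ell},\tau_{\ell+1})$ is preserved long enough for the coordinate-sum ODE $\dot R_O=1-R_O$ to drive $R_O$ above zero. The key point is that throughout this sub-interval $\mathcal{S}_{\bm p(t)}$ has not yet expanded (by definition of $\tau_{\ell+1}$), so the dominance hypothesis of Lemma~\ref{lem:lin-dominance} remains in force; conversely, the activation conclusion of Lemma~\ref{lem:lin-activation} immediately contradicts the nonexpansion if one waits past $\log 2$, forcing $\tau_{\ell+1}$ to occur within the bounded window. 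A secondary subtlety is that $T^*$ is not a single number but a worst-case over which spans appear; uniformity follows because the set of possible spans $\mathcal{S}_\ell$ is finite, so $\max_\ell(T_{\mathcal{S}_\ell}+\log 2)$ and thus $T^*$ admit a uniform upper bound over all initial conditions.
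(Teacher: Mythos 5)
Your argument is correct and follows essentially the same route the paper leaves implicit: monotonicity of the active span via Lemma~\ref{lem:strict_positivity}, exponential decay of $S(t)$ within each phase, Lemmas~\ref{lem:lin-dominance} and~\ref{lem:lin-activation} to bound each epoch's length by $T_{\mathcal{S}_\ell}+\log 2$, a finite chain of strictly expanding spans giving the uniform time $T^*$, and then Theorem~\ref{thm:value_convergence} once $\mathcal Z(\bm p(t))=\varnothing$. The only cosmetic difference is that you terminate the chain as soon as $\mathcal{S}_\ell \supseteq \mathrm{span}\{\bm{\delta}_x : x\in\mathcal{X}\}$ (equivalently $\mathcal Z=\varnothing$) rather than at $\mathcal{S}_r=\mathcal{H}$, which is all the differentiability and game-value convergence conclusion actually requires.
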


\subsection{Convergence of the Discrete-Time Algorithm}\label{sec:discrete_time_dynamics}
To prove Theorem~\ref{thm:discrete_convergence}, we view the discrete scheme as an asymptotic pseudotrajectory of the DI in~\eqref{eq:continuous_dynamics}. 
Specifically, we embed the iterates into continuous time via piecewise-linear interpolation, show that the interpolant solves a perturbed DI whose perturbation vanishes, and then use robustness of solutions to conclude that the perturbed paths track true solutions.

\begin{definition}[Linear interpolation]\label{def:interpolated DI}
Set $t_0=0$ and $t_{\ell}=\sum_{k=0}^{\ell-1}\frac{1}{k+1}$.
For $n\in\mathbb N$, we define 
\begin{equation}\label{def:hat_p_mu}
\hat{\bm{p}}(t_n+s)=\bm{p}_n+s \frac{\bm{p}_{n+1}-\bm{p}_n}{t_{n+1}-t_n},\quad
\hat{\bm{\mu}}(t_n+s)=\bm{\mu}_n+s \frac{\bm{\mu}_{n+1}-\bm{\mu}_n}{t_{n+1}-t_n},\quad s\in[0,t_{n+1}-t_n].
\end{equation}
Then $\hat{\bm{p}}(t_n)=\bm{p}_n$ and $\hat{\bm{\mu}}(t_n)=\bm{\mu}_n$ for all $n$.
\end{definition}

Let $G$ denote the right-hand side correspondence of the dynamics in~\eqref{eq:continuous_dynamics}:
\[
    G(\bm{p},\bm{\mu})\triangleq\big(\LMOp(\bm{p},\bm{\mu})-\bm{p}, \LMOmu(\bm{p},\bm{\mu})-\bm{\mu}\big).
\]
We consider an $\varepsilon$-perturbation that evaluates $G$ at nearby states and allows an $\varepsilon$-accurate approximation of the drift. 
This construction preserves upper hemicontinuity of the set-valued vector field, which is crucial for proving that perturbed solution trajectories remain close to true solutions.
\begin{definition}[$\varepsilon$-Perturbed DI]\label{def:epsilon_BR}
For $\varepsilon\ge0$ and $(\bm{p},\bm{\mu})\in \Delta_{K}\times\Delta_{|\mathcal X|}$, define
\[
    G^{\varepsilon}(\bm{p},\bm{\mu})
    = \Bigl\{
    \bm{y} \in \mathbb{R}^{K+|\mathcal{X}|} :
    \exists (\bm{p}',\bm{\mu}') \in \Delta_{K} \times \Delta_{|\mathcal{X}|} \text{ s.t. } 
    \|(\bm{p}, \bm{\mu}) - (\bm{p}', \bm{\mu}')\|_{1} 
    + d\bigl(
        \bm{y}, G(\bm{p}', \bm{\mu}')    
    \bigr) 
    \le \varepsilon
    \Bigr\}.
\]
The $\varepsilon$-perturbed DI is defined by
$\frac{\diff}{\diff t}\bigl(\bm{p}(t),\bm{\mu}(t)\bigr)\in G^{\varepsilon}\bigl(\bm{p}(t),\bm{\mu}(t)\bigr), 
(\bm{p}(0),\bm{\mu}(0))=(\bm{p}_0,\bm{\mu}_0).$
\end{definition}

We define $\mathcal T^{\varepsilon,T}$ as a solution correspondence that maps an initial condition in $\Delta_K\times\Delta_{|\mathcal X|}$ to the (nonempty) set of absolutely continuous solutions of the $\varepsilon$-perturbed DI on $[0,T]$.
The next result shows that the continuous-time interpolation introduced in Definition~\ref{def:interpolated DI} indeed evolves according to the perturbed DI. 

\begin{proposition}\label{prop:interpolation}
For any $n\in\mathbb N$ and $T\ge0$, the translated interpolation
\[
 \bigl(\hat{\bm p}(s+\cdot),\hat{\bm\mu}(s+\cdot)\bigr)\big|_{[0,T]}
 \in \mathcal T^{\frac{4}{n+1}, T}\bigl(\hat{\bm p}(s),\hat{\bm\mu}(s)\bigr).
\]
\end{proposition}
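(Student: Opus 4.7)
The plan is a direct verification: show that, on each epoch $[t_k,t_{k+1}]$, the interpolant's a.e.\ derivative equals an element of $G$ evaluated at the left endpoint $(\bm p_k,\bm\mu_k)$, then pay a small $\ell_1$ displacement cost to move the base point from $(\bm p_k,\bm\mu_k)$ to the current interpolant value, and finally check that this cost is at most $\varepsilon=4/(n+1)$ whenever $s\ge t_n$.

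First, I would compute the piecewise-linear derivative. Using $t_{k+1}-t_k=1/(k+1)$ together with the discrete update~\eqref{eq:discrete_dynamics}, Definition~\ref{def:interpolated DI} gives, for a.e.\ $t\in(t_k,t_{k+1})$,
\[
\dot{\hat{\bm p}}(t)=\frac{\bm p_{k+1}-\bm p_k}{t_{k+1}-t_k}=\bm q_k-\bm p_k,\qquad
\dot{\hat{\bm\mu}}(t)=\bm\nu_k-\bm\mu_k,
\]
with $\bm q_k\in\LMOp(\bm p_k,\bm\mu_k)$ and $\bm\nu_k\in\LMOmu(\bm p_k,\bm\mu_k)$. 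By definition of $G$, this means $\bigl(\dot{\hat{\bm p}}(t),\dot{\hat{\bm\mu}}(t)\bigr)\in G(\bm p_k,\bm\mu_k)=G(\hat{\bm p}(t_k),\hat{\bm\mu}(t_k))$.

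Next I would bound the $\ell_1$ displacement between $(\hat{\bm p}(t),\hat{\bm\mu}(t))$ and $(\hat{\bm p}(t_k),\hat{\bm\mu}(t_k))$. By linearity on $[t_k,t_{k+1}]$ and the fact that the $\ell_1$ diameter of the simplex is $2$, we have $\|\hat{\bm p}(t)-\bm p_k\|_1\le (t-t_k)\|\bm q_k-\bm p_k\|_1\le 2(t-t_k)$, and likewise for the $\bm\mu$-component. Taking the sum-norm on the product space gives
\[
\bigl\|\bigl(\hat{\bm p}(t),\hat{\bm\mu}(t)\bigr)-\bigl(\hat{\bm p}(t_k),\hat{\bm\mu}(t_k)\bigr)\bigr\|_{1}\le 4(t-t_k)\le \frac{4}{k+1}.
\]
For any $\tau\in[0,T]$, the time $s+\tau$ lies in some epoch $[t_k,t_{k+1})$ with $k\ge n$ (since $s\ge t_n$ under the natural interpretation of the statement), so the displacement bound sharpens to $4/(n+1)$.

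To conclude, I would apply Definition~\ref{def:epsilon_BR} of $G^{\varepsilon}$ with $\varepsilon=4/(n+1)$, taking the approximating base point $(\bm p',\bm\mu')=(\hat{\bm p}(t_k),\hat{\bm\mu}(t_k))$. Step~1 makes the distance $d\bigl((\dot{\hat{\bm p}}(t),\dot{\hat{\bm\mu}}(t)),G(\bm p',\bm\mu')\bigr)$ vanish, and Step~2 bounds $\bigl\|\bigl(\hat{\bm p}(t),\hat{\bm\mu}(t)\bigr)-(\bm p',\bm\mu')\bigr\|_{1}\le 4/(n+1)$. Hence $\bigl(\dot{\hat{\bm p}}(t),\dot{\hat{\bm\mu}}(t)\bigr)\in G^{4/(n+1)}\bigl(\hat{\bm p}(t),\hat{\bm\mu}(t)\bigr)$ for a.e.\ $t\ge s$. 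The piecewise-linear interpolant is Lipschitz and therefore absolutely continuous, so the translated restriction to $[0,T]$ is a valid element of $\mathcal T^{4/(n+1),T}\bigl(\hat{\bm p}(s),\hat{\bm\mu}(s)\bigr)$.

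I do not expect a serious obstacle; the argument is essentially bookkeeping that (a) exploits the epoch-wise linearity to identify the drift exactly with a point of $G$ at the left endpoint, and (b) absorbs the small shift to the current interpolant value into the $\varepsilon$-budget. The only minor subtlety is that the break points $\{t_k\}$ are null, so the a.e.\ verification is enough, and care should be taken that the sum-norm on the product is the one used in Definition~\ref{def:epsilon_BR}, which yields the factor of $4$ (two simplices, each of $\ell_1$ diameter $2$).
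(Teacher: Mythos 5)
Your proof is correct and follows essentially the same route as the paper's: identify the epoch-wise constant derivative with an element of $G$ at the left endpoint $(\bm p_k,\bm\mu_k)$, bound the $\ell_1$ displacement by $4/(k+1)\le 4/(n+1)$ using the simplex diameter, and invoke Definition~\ref{def:epsilon_BR} with base point $(\bm p_k,\bm\mu_k)$. Your added remarks on absolute continuity and the reading $s\ge t_n$ (so that $k\ge n$) match the paper's implicit assumptions.
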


For a set-valued map $F: X \rightrightarrows Y$, the graph is defined as $\mathrm{graph}(F)\triangleq \bigl\{(x,y)\in X\times Y: y\in F(x)\bigr\}.$
The following proposition establishes the upper graphical convergence of the derivative and solution maps.
\begin{proposition}[Graph Stability]\label{prop:uniform_perturbation}
For any ${\varepsilon} > 0$,  $\mathrm{graph}(G^{\varepsilon})\subseteq \mathrm{graph}(G)+\varepsilon \mathbb{B}.$
For any $\delta>0$, $T>0$, there exists $\varepsilon= \varepsilon(\delta,T)>0$ such that
$
\mathrm{graph}(\mathcal{T}^{\varepsilon,T})\subseteq\mathrm{graph}(\mathcal{T}^{0,T})+\delta \mathbb{B}.
$
\end{proposition}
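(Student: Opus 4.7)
The plan is to handle the two inclusions separately: Part~1 follows by unpacking the definition of $G^{\varepsilon}$, while Part~2 is a standard graphical convergence argument for differential inclusions whose right-hand side is upper hemicontinuous with convex, compact values. For Part~1, fix $(\bm{z}, \bm{y})\in\mathrm{graph}(G^{\varepsilon})$ with $\bm{z}=(\bm{p},\bm{\mu})$; by Definition~\ref{def:epsilon_BR} and (if needed) approximation of the set distance up to an arbitrary tolerance, there exist $\bm{z}'=(\bm{p}',\bm{\mu}')\in\Delta_K\times\Delta_{|\mathcal{X}|}$ and $\bm{y}'\in G(\bm{z}')$ with $\|\bm{z}-\bm{z}'\|_1+\|\bm{y}-\bm{y}'\|_1\le\varepsilon$. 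Thus $(\bm{z}',\bm{y}')\in\mathrm{graph}(G)$ lies within $\varepsilon$ of $(\bm{z},\bm{y})$ in the product $\ell_1$ metric.

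For Part~2, the plan is a proof by contradiction. If the conclusion fails there exist $\delta>0$, $T>0$, a sequence $\varepsilon_n\downarrow 0$, and pairs $(\bm{z}_n,\phi_n)\in\mathrm{graph}(\mathcal{T}^{\varepsilon_n,T})$ whose distance to $\mathrm{graph}(\mathcal{T}^{0,T})$, measured in the natural product metric ($\ell_1$ on initial conditions and the sup-$\ell_1$ metric on trajectories from $\mathcal{AC}([0,T];\Delta_K\times\Delta_{|\mathcal{X}|})$), exceeds $\delta$. The goal is to extract a subsequential limit $(\bm{z}^*,\phi^*)$ and show $\phi^*\in\mathcal{T}^{0,T}(\bm{z}^*)$, which is a contradiction since $(\bm{z}_n,\phi_n)\to(\bm{z}^*,\phi^*)$ in the graph metric. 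For the extraction, Part~1 together with the boundedness of $G$ (its range is contained in a compact subset of $\mathbb{R}^{K+|\mathcal{X}|}$ because both simplices are compact) yields a uniform bound $\|\dot\phi_n(t)\|_1\le C$ independent of $n$. Hence $\{\phi_n\}$ is uniformly bounded and equi-Lipschitz on $[0,T]$, so Arzel\`a--Ascoli delivers a subsequence converging uniformly to some absolutely continuous $\phi^*$, and compactness of $\Delta_K\times\Delta_{|\mathcal{X}|}$ gives $\bm{z}_n\to\bm{z}^*$ along a further subsequence, with $\phi^*(0)=\bm{z}^*$.

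The heart of the argument is to verify $\dot\phi^*(t)\in G(\phi^*(t))$ for a.e.\ $t$. Since $\{\dot\phi_n\}$ is uniformly bounded in $L^\infty([0,T])$ it is weakly compact in $L^2([0,T])$; after a further subsequence $\dot\phi_n\rightharpoonup\dot\phi^*$. Proposition~\ref{prop:BR_properties} ensures $G$ is upper hemicontinuous with nonempty, convex, compact values, hence closed-graph. The standard convergence theorem for differential inclusions, e.g.\ \citet[Theorem~1, Section~1.4]{aubin_differential_1984}, then identifies the weak limit: Mazur's lemma produces convex combinations of $\{\dot\phi_n\}$ converging a.e.\ to $\dot\phi^*$; combined with Part~1 and the upper hemicontinuity of $G$ at $\phi^*(t)$, for any $\eta>0$ and $n$ large enough, $\dot\phi_n(t)\in G^{\varepsilon_n}(\phi_n(t))\subseteq G(\phi^*(t))+\eta\mathbb{B}$, and convexity of $G(\phi^*(t))+\eta\mathbb{B}$ preserves this under convex combinations. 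Sending $\eta\downarrow 0$ and using that $G(\phi^*(t))$ is closed, one concludes $\dot\phi^*(t)\in G(\phi^*(t))$ a.e., so $\phi^*\in\mathcal{T}^{0,T}(\bm{z}^*)$.

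The main obstacle is this last identification step: only weak convergence of $\dot\phi_n$ is available, while the inclusion $\dot\phi_n(t)\in G^{\varepsilon_n}(\phi_n(t))$ is pointwise and involves a moving right-hand side. Closure under strong convergence alone is insufficient, and a purely pointwise argument would require strong a.e.\ convergence of the derivatives that we cannot guarantee. The weak-plus-Mazur route, leveraging the convex compact values of $G$ together with Part~1 to control the ``target set'' uniformly in a shrinking neighborhood of $\phi^*(t)$, resolves this, but care is needed to choose the $\eta$-neighborhood and convex combinations jointly so that both the convex combinations converge a.e.\ and all but finitely many terms lie in the neighborhood of interest.
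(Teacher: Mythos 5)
Your proof is correct and takes essentially the same route as the paper: the paper handles Part 1 directly from the definition of $G^{\varepsilon}$ and Part 2 by citing \citet[Proposition~1, Chapter~2.2]{aubin_differential_1984}, and your contradiction argument---uniform derivative bounds, Arzel\`a--Ascoli with weak compactness of the derivatives, then identification of the limit via the convergence theorem for upper hemicontinuous maps with convex compact values (Mazur's lemma)---is precisely the standard proof of that cited proposition, built from the same auxiliary facts the paper restates in its appendix. The only substantive hypotheses you need, namely boundedness of $G$, the closed convex compact values and upper hemicontinuity from Proposition~\ref{prop:BR_properties}, and Part 1 to verify that $(\phi_n(t),\dot\phi_n(t))$ lies in $\mathrm{graph}(G)+\varepsilon_n\mathbb{B}$, are all correctly invoked, so there is no gap.
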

\begin{proof}
    The first part follows directly from the definition of the perturbation correspondence. The second part is a consequence of \citet[Proposition 1,  Chapter 2.2]{aubin_differential_1984}.
\end{proof}

The LMO solution maps are discontinuous when two gradient components are tied---an arbitrarily small gradient shift can flip the selected extreme point and produce $O(1)$ deviations over any fixed time horizon. 
Thus, the usual ODE method uniform perturbation stability with identical initial data fails for this discontinuous vector field. 
This is precisely why our $\varepsilon$-perturbation in Definition~\ref{def:epsilon_BR} allows nearby evaluation: it restores upper hemicontinuity and the robustness/closedness of solution sets needed for shadowing results.

\subsubsection{Proof of the Main Result, Theorem~\ref{thm:discrete_convergence}}\label{sec:discrete_convergence}
\begin{proof}[Proof of Theorem~\ref{thm:discrete_convergence}]
    Fix any convergence tolerance $\delta$.
    Lemma~\ref{lem:Clarke_UB} implies that $F$ is globally $L_F$-Lipschitz on $\Delta_K\times\Delta_{|\mathcal{X}|}$ in the $\ell_1$ norm.
    By Theorem~\ref{thm:bai_convergence} or~\ref{thm:Linear_bandit_convergence}, for any $\delta>0$ there exists $T_0(\delta)<\infty$ such that, for every initial condition in $\Delta_K\times\Delta_{|\mathcal{X}|}$ and every DI solution $(\bm{p}(t),\bm{\mu}(t))$, $|F(\bm{p}(t),\bm{\mu}(t))-F^*|\leq\delta/2$.
    Choose $n$ large enough that $4/(n+1)\le \varepsilon_0 \triangleq \varepsilon(\delta/(2L_F),T_0)$ from Proposition~\ref{prop:uniform_perturbation}.
    By Propositions~\ref{prop:interpolation} and~\ref{prop:uniform_perturbation}, for any $s\geq t_{n}$, we have
    $
    \bigl[(\hat{\bm{p}}(s),\hat{\bm{\mu}}(s)), \bigl(\hat{\bm{p}}(s+\cdot),\hat{\bm{\mu}}(s+\cdot)\bigr)\big|_{[0,T_0]}\bigr]
    \in \mathrm{graph}(\mathcal T^{\varepsilon_0,T_0})
    \subseteq \mathrm{graph}(\mathcal T^{0,T_0})+\frac{\delta}{2L_F}\mathbb{B}.
    $
    Therefore, for any $s\geq t_{n}$ there exists a DI solution $(\bm{p}^s(\cdot),\bm{\mu}^s(\cdot))\in\mathcal T^{0,T_0}(\bm{p}^s_0,\bm{\mu}^s_0)$ such that
    $
    \sup_{t\in[0,T_0]}\|(\bm{p}^s(t),\bm{\mu}^s(t))-(\hat{\bm{p}}(s+t),\hat{\bm{\mu}}(s+t))\|_1\le \frac{\delta}{2L_F},
    $
    which implies, by the Lipschitz property of $F$, that $
    \bigl|F(\hat{\bm{p}}(s+T_0),\hat{\bm{\mu}}(s+T_0))-F^*\bigr|\le \delta$.
    Since $\delta>0$ is arbitrary and $(\bm{p}_n,\bm{\mu}_n) = (\hat{\bm{p}}(t_{n}),\hat{\bm{\mu}}(t_{n}))$, we have $\lim_{n\to\infty} F(\bm{p}_n,\bm{\mu}_n)=F^*$.
\end{proof}

\section{Learning Algorithm}\label{sec:learning}

We propose a posterior-sampling-based algorithm that follows the spirit of FWSP. 
At each round $t \in \{0,1,\dots\}$, the experimenter selects an arm $i_t$ and observes a random reward $Y_{t+1, i_t}$ with mean $\mu_{i_t}$. 

\begin{assumption}[Posterior updating and sampling]
We assume access to a routine that, given new data $(i_t, Y_{t+1,i_t})$ and the current posterior $\Pi_t$ over $\bm{\theta}$, returns (i) the updated posterior $\Pi_{t+1}$ and (ii) a sampler generating draws from $\Pi_{t+1}$.
\end{assumption}
    In pure exploration for Gaussian linear bandits with a flat prior and known noise variances $\{\sigma_i^2\}_{i=1}^K$, an updater is immediate: Let $V_t \triangleq \sum_{i=1}^K n_{t,i}  \sigma_i^{-2}  \bm{a}_i \bm{a}_i^\top $ and $ \bm b_t \triangleq \sum_{i=1}^K n_{t,i}  \sigma_i^{-2}  \bm a_i   \bar y_{t,i},$
    where $n_{t,i}$ is the number of pulls of arm $i$ up to round $t$ and $\bar y_{t,i}$ is its sample mean. 
    If $V_t$ is full rank, the updated posterior is $\mathcal{N}(V_t^{-1}\bm b_t, V_t^{-1})$. These quantities admit online Sherman--Morrison updates upon each new observation.

Our learning algorithm samples $\widehat{\bm\theta}_t \sim \Pi_t$ and treats it as ground truth when computing the gradients of $F(\bm p,\bm \mu;\widehat{\bm\theta}_t)$, making the dependence on $\widehat{\bm\theta}_t$ explicit.
Rather than using the asymptotic-inspired Frank--Wolfe step for the skeptic, we adopt a posterior-sampling update in the spirit of top-two TS, see also \citet[Section~3.5.2]{qin2025dual}, which is often more effective at finite sample sizes.
Algorithm~\ref{alg:learn} summarizes the procedure; numerical results appear in Appendix~\ref{sec:num}. 
Theoretical analysis is left for future work.

\begin{algorithm}[hbtp]
    \caption{Posterior-sampling based FWSP}
    \label{alg:learn}
    \begin{algorithmic}[1]
        \renewcommand{\algorithmicrequire}{\textbf{Input:}}

        \Require{Uninformative prior $\Pi_0$; access to a posterior-updating oracle.}
        \For{$t = 0,1,\dots,T$}

        \State{Call the oracle to obtain the posterior $\Pi_{t}$ and sample  $\widehat{\bm{\theta}}_t \sim \Pi_t$.}

        \State{Repeatedly sample $\widetilde{\bm\theta}_t \sim \Pi_t$ until $\widetilde{\bm\theta}_t \in \mathrm{Alt}(\widehat{\bm\theta}_t)$.}
        
        \State{Choose any $x_t \in \{x' \in \mathcal {X}(\widehat{\bm\theta}_t) : \widetilde{\bm\theta}_t \in \mathrm{Alt}_{x'}(\widehat{\bm\theta}_t)\}$ and update $\bm{\mu}_{t+1} = \bm{\mu}_t + \tfrac{1}{t+1} \bigl( \bm{e}_{x_t} - \bm{\mu}_t \bigr)$.}
        
        \State{Choose any $i_t \in \argmax_{i \in [K]} \bigl[\nabla_{\bm{p}} F(\bm{p}_t,\bm{\mu}_{t+1};\widehat{\bm\theta}_t)\bigr]_i$ and update $\bm{p}_{t+1} = \bm{p}_t + \tfrac{1}{t+1} \bigl( \bm{e}_{i_t} - \bm{p}_t \bigr)$.}

        \State{Play arm $i_t$ and receive observation $Y_{t+1, i_t}$.}

        \EndFor
    \end{algorithmic}
\end{algorithm}

\section{Proofs}\label{sec:proof}
In this section, we present the deferred proofs of our results. These proofs rely on a number of classical facts about set-valued maps and differential inclusions, which we restate in Appendix~\ref{sec:facts} for ease of reference.

\begin{proof}[Proof of Proposition~\ref{prop:BR_properties}]
    Since $F(\cdot,\bm\mu)$ is concave, $\clarke F$ coincides with the concave superdifferential, is nonempty, convex, and closed, by \citet[Proposition~4.2.1]{bertsekas2003convex}. By Lemma~\ref{lem:Clarke_UB}, $\clarke F$ is also compact.
    To prove upper hemicontinuity, let $(\bm p^n,\bm\mu^n)\to(\bm p,\bm\mu)$ and $\bm f^n\in\clarke F(\bm p^n,\bm\mu^n)$ with $\bm f^n\to\bm f$. By \citet[Proposition~4.2.3]{bertsekas2003convex} for concave superdifferential, $\bm f\in\clarke F$.
    Thus the graph of $(\bm p,\bm\mu)\mapsto \clarke F(\bm p,\bm\mu)$ is closed. This is equivalent to upper hemicontinuity for compact-valued correspondence; see \citet[Theorem 3.2]{beavis1990optimization}.

    By Lemma~\ref{lem:properties_Gamma_x}, $\nabla_{\bm{\mu}}F$ are continuous on $\Delta_K\times\Delta_{|\mathcal{X}|}$, so the objectives in Definition~\ref{def:LMO} of $\LMOmu$ are continuous.
    For any $(\bm{p},\bm{\mu})$, $\LMOmu$ solve linear programs over simplices, so their argmin sets are nonempty, closed, and convex.
    Since the feasible sets are constant and compact, and the objectives are continuous, Berge's Maximum Theorem implies that $\LMOmu$ is upper hemicontinuous; see \citet[Theorem 3.6]{beavis1990optimization}. 
    
    For $\LMOp$, define 
    $Q=\bigcup_{\bm f\in\clarke F(\bm{p},\bm{\mu})}\argmax_{\bm q\in\Delta_K}\bm q^\top \bm f$.
    If $\bm q^n\in Q$ and $\bm q^n\to\bm q$, pick $\bm f^n\in\clarke F(\bm{p},\bm{\mu})$ with $\bm q^n\in\argmax_{\bm q}\bm q^\top \bm f^n$.
    By compactness of $\clarke F$, pass to a subsequence $\bm f^{n_k}\to\bm f\in\clarke F$. For any $\bm r\in\Delta_K$, $\bm r^\top \bm f = \lim_k \bm r^\top \bm f^{n_k}
    \le \liminf_k (\bm q^{n_k})^\top \bm f^{n_k}
    = \bm q^\top \bm f,$ so $\bm q\in\argmax \bm q^\top \bm f$, so $Q$ is closed, hence compact. Then $\LMOp=\operatorname{conv}(Q)$ is nonempty, convex and compact.
    For upper hemicontinuity of $\LMOp$, by \citet[Theorem 3.2]{beavis1990optimization}, for compact-valued correspondences it suffices to show the graph is closed. 
    Let $(\bm{p}^n,\bm{\mu}^n)\to(\bm{p},\bm{\mu})$ and let $\bm q^n\in\LMOp(\bm{p}^n,\bm{\mu}^n)$. By compactness of $\Delta_K$, extract a subsequence (not relabeled) with $\bm q^n\to\bm q$.
    By Carath\'{e}odory's theorem, we can represent $\bm{q}^n$ as $\bm{q}^n=\sum_{i=1}^{K}\lambda^n_i\bm{q}^n_i$
    with
    $\bm q^{n}_{i}\in
     \argmax_{\bm q\in \Delta_{K}}\bm q^{\top}\bm f^{n}_{i}$ and
    $\bm f^{n}_{i}\in\clarke F(\bm{p}^{n},\bm{\mu}^{n})$.
    Pass to a further subsequence so that
    $\lambda^{n}_{i}\to\lambda_{i}$,
    $\bm q^{n}_{i}\to\bm q_{i}$, and
    $\bm f^{n}_{i}\to\bm f_{i}$.
    By closedness of the Clarke gradient, we have $\bm f_{i}\in\clarke F(\bm{p},\bm{\mu})$, and 
    $
      \bm q_{i}^{\top}\bm f_{i}
      =\max_{\bm q'\in \Delta_{K}}\bm{q}'^{\top}\bm f_{i}.
    $
    Hence $\bm q=\sum_{i=1}^{K}\lambda_{i}\bm q_{i}\in\LMOp(\bm{p},\bm{\mu})$, so the graph of
    $\LMOp$ is closed.
\end{proof}

\begin{proof}[Proof of Theorem~\ref{thm:convergence}]
Fix any solution $(\bm{p}(t),\bm{\mu}(t))$ and measurable selection $(\bm{q}(t),\bm{\nu}(t))$ guaranteed by Lemma~\ref{lem:existence_solution}. We now consider the value of the Lyapunov function along the solution:
\begin{align*}
    V(t) = V(\bm{p}(t),\bm{\mu}(t)) 
        & = \max_{\bm{q}' \in \Delta_{K}} (\bm{q}' - \bm{p}(t))^{\top} \nabla_{\bm{p}} F(\bm{p}(t),\bm{\mu}(t)) - \min_{\bm{\nu}' \in \Delta_{|\mathcal{X}|}} (\bm{\nu}' -\bm{\mu}(t))^{\top} \nabla_{\bm{\mu}} F(\bm{p}(t),\bm{\mu}(t)) \\
        & = (\bm{q}(t)-\bm{p}(t))^{\top} \nabla_{\bm{p}} F(\bm{p}(t),\bm{\mu}(t)) - (\bm{\nu}(t)-\bm{\mu}(t))^{\top} \nabla_{\bm{\mu}} F(\bm{p}(t),\bm{\mu}(t)) \\
        & = \dot{\bm{p}}(t)^{\top} \nabla_{\bm{p}} F(\bm{p}(t),\bm{\mu}(t)) - \dot{\bm{\mu}}(t)^{\top} \nabla_{\bm{\mu}} F(\bm{p}(t),\bm{\mu}(t)).
\end{align*}
By assumption, $V(t)$ is differentiable for almost every $t \ge 0$, and
\begin{align*}
    \frac{\diff}{\diff t}\Bigl((\bm{q} - \bm{p})^{\top} \nabla_{\bm{p}} F\Bigr) 
        & = \frac{\diff}{\diff t}\Bigl(\bm{q}^{\top} \nabla_{\bm{p}} F\Bigr) - \frac{\diff}{\diff t}\Bigl(\bm{p}^{\top} \nabla_{\bm{p}} F\Bigr) 
        = \bm{q}^{\top} \frac{\diff}{\diff t}\Bigl(\nabla_{\bm{p}} F\Bigr) - \Bigl[\dot{\bm{p}}^{\top} \nabla_{\bm{p}} F + \bm{p}^{\top} \frac{\diff}{\diff t}\Bigl(\nabla_{\bm{p}} F\Bigr)\Bigr] \\
        & = (\bm{q} - \bm{p})^{\top} \frac{\diff}{\diff t}\Bigl(\nabla_{\bm{p}} F\Bigr) - \dot{\bm{p}}^{\top} \nabla_{\bm{p}} F 
        = - \dot{\bm{p}}^{\top} \nabla_{\bm{p}} F + \dot{\bm{p}}^{\top} \nabla_{\bm{pp}}F \dot{\bm{p}} + \dot{\bm{p}}^{\top} \nabla_{\bm{p\mu}}F \dot{\bm{\mu}},
\end{align*}
where the second equality follows from The Envelope theorem, \citet[Theorem 1]{milgrom2002envelope}. 
Similarly, we obtain
$
    \frac{\diff}{\diff t}\bigl((\bm{\nu} - \bm{\mu})^{\top} \nabla_{\bm{\mu}} F\bigr) = - \dot{\bm{\mu}}^{\top} \nabla_{\bm{\mu}} F + \dot{\bm{\mu}}^{\top} \nabla_{\bm{\mu \mu}}F \dot{\bm{\mu}} + \dot{\bm{\mu}}^{\top} \nabla_{\bm{\mu p}}F \dot{\bm{p}}.
$
Hence, 
\begin{align*}
    \frac{\diff}{\diff t}V 
        & = \frac{\diff}{\diff t}\Bigl((\bm{q} - \bm{p})^{\top} \nabla_{\bm{p}} F\Bigr) - \frac{\diff}{\diff t}\Bigl((\bm{\nu} - \bm{\mu})^{\top} \nabla_{\bm{\mu}} F\Bigr) \\
        & = - \Bigl(\dot{\bm{p}}^{\top} \nabla_{\bm{p}} F - \dot{\bm{\mu}}^{\top} \nabla_{\bm{\mu}} F\Bigr)  + \dot{\bm{p}}^{\top} \nabla_{\bm{pp}}F \dot{\bm{p}} + \dot{\bm{p}}^{\top} \nabla_{\bm{p\mu}}F \dot{\bm{\mu}} - \dot{\bm{\mu}}^{\top} \nabla_{\bm{\mu \mu}}F \dot{\bm{\mu}} - \dot{\bm{\mu}}^{\top} \nabla_{\bm{\mu p}}F \dot{\bm{p}} \\
        & = -V + \dot{\bm{p}}^{\top} \nabla_{\bm{pp}}F \dot{\bm{p}} - \dot{\bm{\mu}}^{\top} \nabla_{\bm{\mu \mu}}F \dot{\bm{\mu}}
         \le -V,
\end{align*}
where the symmetry $\nabla_{\bm{p\mu}}F = (\nabla_{\bm{\mu p}}F)^{\top}$ is used along with the definition of $V$, and the last inequality follows Since $F$ is concave--convex.
Thus, for almost every $t \ge 0$, we have $\frac{\diff}{\diff t}V(t) \le -V(t)$. Since $V(\cdot)$ is absolutely continuous, applying Gr\"{o}nwall's inequality yields $ V(t) \le V(0)e^{-t}$.
\end{proof}

\begin{proof}[Proof of Lemma~\ref{lem:gap}]
Concavity in $\bm{p}$ implies 
$
F(\bm q,\bm{\mu}) - F(\bm{p},\bm{\mu})
 \le  (\bm q-\bm{p})^\top\nabla_{\bm{p}}F(\bm{p},\bm{\mu})
 \le  \max_{\bm q'\in\Delta_K} (\bm q'-\bm{p})^\top\nabla_{\bm{p}}F(\bm{p},\bm{\mu})
$ for any $\bm q\in\Delta_K$.
Convexity in $\bm{\mu}$ implies 
$
F(\bm{p},\bm{\mu}) - F(\bm{p},\bm\nu)
 \le (\bm\nu-\bm{\mu})^\top\nabla_{\bm{\mu}}F(\bm{p},\bm{\mu})
 \le  \max_{\bm\nu'\in\Delta_{|\mathcal X|}} (\bm\nu'-\bm{\mu})^\top\nabla_{\bm{\mu}}F(\bm{p},\bm{\mu})
$ for any $\bm\nu\in\Delta_{|\mathcal X|}$.
Taking the maximum over $\bm q$ and the minimum over $\bm\nu$ in the primal--dual gap definition, and combining the above bounds, the proof is complete.
\end{proof}

\begin{proof}[Proof of Theorem~\ref{thm:value_convergence}]
By Lemma~\ref{lem:strict_positivity}, the solution remains in the interior of the simplices for any finite $t\ge 0$, so that all gradients are well-defined by Assumption~\ref{assumption:regularity}. 
By Lemma~\ref{lem:gap}, $\mathrm{Gap}(\bm{p},\bm{\mu})  \le  V(\bm{p},\bm{\mu})$ for all interior points.
By Theorem~\ref{thm:convergence}, the Lyapunov function $V(\bm{p}(t),\bm{\mu}(t))$ satisfies
$
V(\bm{p}(t),\bm{\mu}(t)) \le V(\bm{p}(0),\bm{\mu}(0)) e^{-t}.
$
Thus, 
$
\lim_{t\to\infty} \mathrm{Gap}(\bm{p}(t),\bm{\mu}(t)) \le \lim_{t\to\infty} V(\bm{p}(t),\bm{\mu}(t)) = 0.
$
By the definition of the duality gap, we obtain
$
\max_{\bm{q}\in\Delta_{K}}F(\bm{q},\bm{\mu}(t)) - \min_{\bm{\nu}\in\Delta_{|\mathcal{X}|}}F(\bm{p}(t),\bm{\nu}) \to 0 
$
as $t\to\infty$.
This implies that for every $\epsilon>0$, there exists $T = T(\epsilon) > 0$ such that for all $t\ge T$,
$
\max_{\bm{q}\in\Delta_{K}}F(\bm{q},\bm{\mu}(t)) - F(\bm{p}(t),\bm{\mu}(t)) < \epsilon 
$ and $F(\bm{p}(t),\bm{\mu}(t)) - \min_{\bm{\nu}\in\Delta_{|\mathcal{X}|}}F(\bm{p}(t),\bm{\nu}) < \epsilon.
$
Hence, for $t\ge T$, we have $\max_{\bm{q}\in\Delta_{K}}F(\bm{q},\bm{\mu}(t)) - \epsilon < F(\bm{p}(t),\bm{\mu}(t)) < \min_{\bm{\nu}\in\Delta_{|\mathcal{X}|}}F(\bm{p}(t),\bm{\nu}) + \epsilon.
$
Since the saddle-point value $F^*$ satisfies
$
\max_{\bm{q}\in\Delta_{K}}F(\bm{q},\bm{\mu}(t)) \ge F^* \ge \min_{\bm{\nu}\in\Delta_{|\mathcal{X}|}}F(\bm{p}(t),\bm{\nu}),
$
it follows that for every $\epsilon>0$, there exists $T = T(\epsilon) > 0$ such that for all $t\ge T$, we have $F^* - \epsilon < F(\bm{p}(t),\bm{\mu}(t)) < F^* + \epsilon.$
Therefore, the objective value converges to the optimal game value.
\end{proof}

\begin{proof}[Proof of Proposition~\ref{prop:interpolation}]
Fix $t\in [s,s+T]$ satisfying $t\in(t_k,t_{k+1})$. Here $k\ge n$ since $s\ge t_n$. 
By \eqref{def:hat_p_mu} and $t_{k+1}-t_k=1/(k+1)$, we have
$
\frac{\diff}{\diff t}\bigl(\hat{\bm{p}}(t),\hat{\bm{\mu}}(t)\bigr)
=\Bigl(\frac{\bm{p}_{k+1}-\bm{p}_k}{1/(k+1)}, \frac{\bm{\mu}_{k+1}-\bm{\mu}_k}{1/(k+1)}\Bigr)
=(\bm q_k-\bm{p}_k, \bm\nu_k-\bm{\mu}_k)\in G(\bm{p}_k,\bm{\mu}_k).
$
Moreover, for any $r\in[0,1/(k+1)]$, we have
$
\|(\hat{\bm{p}}(t_k+r),\hat{\bm{\mu}}(t_k+r))-(\bm{p}_k,\bm{\mu}_k)\|_1
=\frac{r}{1/(k+1)}\|(\bm{p}_{k+1}-\bm{p}_k,\bm{\mu}_{k+1}-\bm{\mu}_k)\|_1
\le \|(\bm{p}_{k+1}-\bm{p}_k,\bm{\mu}_{k+1}-\bm{\mu}_k)\|_1 =\frac{1}{k+1}\|(\bm q_k-\bm{p}_k,\bm\nu_k-\bm{\mu}_k)\|_1\le \frac{4}{k+1} \le \frac{4}{n+1},
$
where we used the fact that $\| \bm q_k-\bm{p}_k\|_1\le 2$ and $\|\bm\nu_k-\bm{\mu}_k\|_1\le 2$.
Thus, with $(\bm{p}',\bm{\mu}')=(\bm{p}_k,\bm{\mu}_k)$ we have
\[
\|(\hat{\bm{p}}(t),\hat{\bm{\mu}}(t))-(\bm{p}',\bm{\mu}')\|_1
+ d\!\left(\tfrac{\diff}{\diff t}(\hat{\bm{p}}(t),\hat{\bm{\mu}}(t)),  G(\bm{p}',\bm{\mu}')\right)
\le \tfrac{4}{k+1} \le \tfrac{4}{n+1}, \quad \text{for all } t \in (t_k,t_{k+1}).
\]
Hence $\frac{\diff}{\diff t}(\hat{\bm{p}},\hat{\bm{\mu}})\in G^{\frac{4}{n+1},T}(\hat{\bm{p}},\hat{\bm{\mu}})$ a.e. on $[s,s+T]$, and the claim follows by definition of $\mathcal T^{\frac{4}{n+1},T}$.
\end{proof}

\section{Conclusion}

We proposed Frank--Wolfe Self-Play, a minimal modification of Frank--Wolfe that solves the mixed-strategy saddle-point formulation of pure exploration problems in structured bandits.
Conceptually, the mixed-strategy skeptic is the right abstraction: it converts asymptotic problem complexity into a clean concave--convex game and aligns algorithmic updates with KKT complementary slackness.
Algorithmically, FWSP is appealingly simple, using one-hot steps on simplices and remains faithful to the ``sample one arm at a time'' bandit paradigm, without needing projections, regularization, or tuning parameters.

We established a Lyapunov function for the continuous-time limit that decreases exponentially, yielding vanishing duality gap and convergence to optimal game value. 
A careful stochastic-approximation argument via $\varepsilon$-perturbed differential inclusions then transfers these guarantees to the discrete updates of FWSP. 
Our linear-bandit case study exposes structural quirks (nonunique optima, bilinear objectives, and optimal designs with zero weight on the best arm) and demonstrates where popular top-two/Thompson-style methods can be systematically suboptimal, while FWSP still finds the correct allocation.

Several directions for future work are compelling.
First, rates of convergence for the discrete algorithm and last-iterate behavior under nonunique equilibria are open even for the bilinear case \cite{gidel2017frank,chen2024last}. 
Second, a full analysis of the learning variant of the algorithm would close the loop to practice. 
Third, FWSP can be readily extended to pure exploration with multiple correct answers \cite{degenne2019pure} and to cost-aware exploration \cite{qin2024optimizing}.
Finally, our Lyapunov-and-DI toolkit suggests a path to analyze other projection-free bandit procedures, potentially unifying algorithm design for pure exploration under a common game-theoretic and dynamical-systems framework.

\bibliography{refs}
\bibliographystyle{plainnat}

\clearpage

\appendix
\part*{Appendix}

\section{Further Related Work}\label{app:literature}
\paragraph{Pure Exploration.}
Asymptotically tight fixed-confidence lower bounds for unstructured BAI and the Track-and-Stop (TaS) algorithm were given by \citet{garivier2016optimal}, but the latter solves a new optimization at every step. 
The lower bound has since then been extended for a variety of pure exploration tasks
\citep{soare2014best,degenne2020gamification,shen2021ranking,du2024contextual,zhou2025linear}.
To reduce computation of TaS, \citet{menard2019gradient} use subgradient ascent, and \citet{wang2021fast} propose Frank--Wolfe Sampling (FWS) for the nonsmooth objective. 
Game-theoretic approaches further refine the picture \citep{degenne2019non,degenne2020gamification,zaki2022improved}, but hinge on regret guarantees that can be delicate (e.g., fictitious play can incur linear regret under adversarial tie-breaking; \citealp{kalai2005efficient}). Methodologically, FWS is closest to our approach but it requires constructing a subdifferential subspace and solving a linear program.

\paragraph{Game Theory.}
Our work is also closely connected to evolutionary game theory.
When the objective function is bilinear, the maximin problem reduces to a two-player zero-sum matrix game, and our algorithm coincides with fictitious play (FP) \citep{brown1951iterative}.
\citet{robinson1951iterative} proved its convergence in two-player zero-sum matrix games, while \citet{shapiro1958note} observed that their proof implied a convergence rate of $O\bigl(n^{-1/(p+q-2)}\bigr)$ where $p$ and $q$ denote the dimensions of the rows and columns of the payoff matrix, respectively. 
\citet{karlin1959mathematical} conjectured a faster $O(t^{-1/2})$; but \citet{daskalakis2014counter} later provided a counterexample, showing that Shapiro's bound is in fact tight under adversarial tie-breaking.
More recently, \citet{abernethy2021fast} established an $O(t^{-1/2})$ rate for diagonal matrix games under fixed lexicographic tie-breaking, and \citet{lazarsfeld2025fast} proved the same rate for Rock-Paper-Scissors games under arbitrary tie-breaking.
Another line of work analyzes FP and best-response dynamics (BRD) via Lyapunov methods: \citet{hofbauer1995stability,harris1998rate} first studied continuous-time BRD and later discrete FP in matrix games, with \citet{hofbauer2006best} extending the analysis to concave--convex zero-sum games.
In the optimization literature, \citet{gidel2017frank} introduced Saddle-Point Frank--Wolfe and proved an $O(t^{-1/2})$ rate in \emph{strongly convex-concave} games, while \citet{chen2024last} obtained the same rate for a \emph{smoothed variant} of SPFW in the setting of monotone variational inequalities.
By contrast, our objective is neither bilinear nor strongly concave--convex, and it is moreover nonsmooth on the boundary of the simplex.
Thus, existing convergence analyses do not apply, motivating the new technical arguments we develop.
Notably, prior studies report promising empirical performance \citep{kaufmann2020contributions,degenne2020gamification}, although theoretical guarantees were unavailable before this work.

\paragraph{Minimax Optimization.}
No-regret learning provides a unifying lens for iterative game dynamics: sublinear regret (typically $O(\sqrt{T})$) ensures time-averaged strategies approach Nash equilibria in zero-sum games \citep{cesa2006prediction,hazan2016introduction}.
Classical algorithms such as Hedge and Follow-the-Regularized-Leader (FTRL) thus yield $O(1/\sqrt{T})$ convergence to an optimal mixed strategy \citep{freund1999adaptive}.
Beyond this baseline, \citet{daskalakis2011near} achieved $O(\log T/T)$ for saddle-point problems via the excessive-gap technique, and \citet{rakhlin2013optimization} matched this rate with Optimistic Hedge by canceling oscillatory terms in gradient dynamics.
Designing faster coordinating learners remains active \citep{abernethy2018faster,WeiLZL21,lin2020finite,daskalakis2021near,cai2022tight,cen2024fast}.
We complement this line by introducing two simple greedy learners for the experimenter and the skeptic, and proving convergence.

\paragraph{Stochastic Approximation.}
Our technical tool is closest to the ODE method for stochastic approximation \citep{ljung1977analysis}, developed for dynamical systems by \citet{benaim1999dynamics} and \citet{borkar2000ode} and extended to differential inclusions by \citet{benaim2005stochastic}.
These tools are widely used in simulation \citep{hu2022stochastic,he2024adaptive,hu2025quantile} and reinforcement learning \citep{tsitsiklis1996analysis,rowland2024analysis}, but, to our knowledge, are new to adaptive experimentation.
Recent works use discrete-time Lyapunov functions to obtain finite-sample guarantees \citep{srikant2019finite,chen2024lyapunov,zhang2025piecewise}; adapting such techniques to our setting is an open direction.
While convergence rates for the ODE method are well understood \citep{Kushner1997StochasticAA,borkar2008stochastic}, the picture is far less complete for differential inclusions; the only available result \citep{Nguyen2021StochasticAW} assumes conditions that our algorithm does not satisfy.

\section{Numerical Example}\label{sec:num}
In this section, we present simple numerical examples to validate the theoretical convergence guarantees of both the nonlearning version of FWSP (with closed-form updates in Remark~\ref{rmk:closed_form}) and the posterior-sampling-based learning version described in Algorithm~\ref{alg:learn}.
We consider two cases:
\begin{itemize}
    \item \textbf{Case 1}: The linear bandit example in Example~\ref{ex:TS_fail}, where 
    \[\bm{p}^* = (0,2/3,1/3), \quad \bm{\mu}^* = (1,0), \quad F^* = 2/9.\]
    \item \textbf{Case 2}: A randomly generated linear bandit instance with $K = 6$, $D = 3$, and
    \[\bm{\theta} = (1.9492, -0.4601, -0.4279),\]
    \[A = \begin{bmatrix}
        \bm{a}_1^{\top} \\
        \bm{a}_2^{\top} \\
        \bm{a}_3^{\top} \\
        \bm{a}_4^{\top} \\
        \bm{a}_5^{\top} \\ 
        \bm{a}_6^{\top} \\
    \end{bmatrix}
    =
    \begin{bmatrix}
          -0.7322,& -0.7272,& -0.0976 \\
          -0.9580,& -0.2982,&  0.8227\\
          -0.0585,& -0.8511,&  0.1397\\
           0.2705,& -0.8211,&  0.1124\\
           0.5793,& -0.5567,& -0.1627\\
          -0.5004,& -0.4163,&  0.6065
    \end{bmatrix}\]
    The mean vector is $\bm{m} = (-1.0509, -2.0821,  0.2178,  0.8569,  1.4549, -1.0434)$ and the optimal arm is $I^* = 5$.
    The optimal game value is $F^* = 0.5037$ with unique optimal solution:
    \begin{align*}
        \bm{p}^* & = (0.3122, 0.3856, 0, 0.3022, 0, 0),\\
        \bm{\mu}^* & = (\mu^*_1,\mu^*_2,\mu^*_3,\mu^*_4,\mu^*_6) = (0.5149, 0, 0, 0.4851, 0).
    \end{align*}
\end{itemize}
Note that this is yet another example where $p^*_{I^*} = p^*_5 = 0$.

For each instance, we run FWSP (without learning) for $10^7$ rounds and record the Lyapunov value $V_t \triangleq V(\bm{p}_t, \bm{\mu}_t)$, computed using the ground-truth parameter $\bm{\theta}$.
Since $V$ upper-bounds the duality gap, a decreasing $V_t$ indicates convergence of the game value. We likewise run Algorithm~\ref{alg:learn} for $10^7$ rounds and record $V_t$. 
We repeat this procedure for 100 independent replications and report the mean trajectory together with the first and third quartiles. 
The results for the two examples are shown in Figures~\ref{fig:Ex_classic} and~\ref{fig:Ex_rand}, respectively.
In both cases, the Lyapunov trajectory under FWSP drops rapidly, consistent with Theorem~\ref{thm:discrete_convergence}. 
The learning-based variant also converges, though at a slower pace in Case 2.

\begin{figure}[hbtp]
    \centering
    \includegraphics[width=0.7\linewidth]{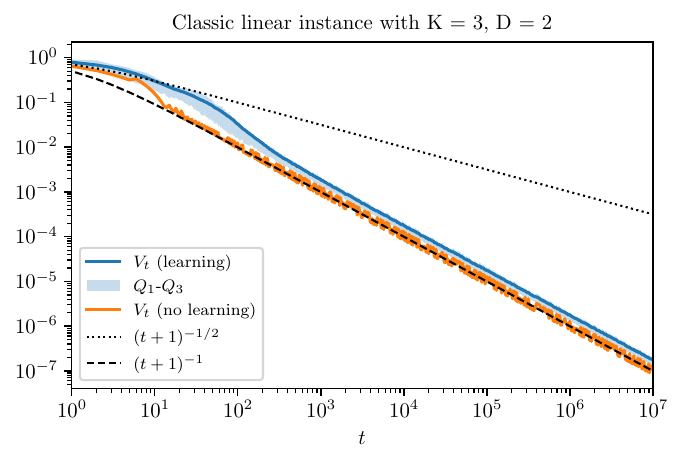}
    \caption{The trajectory of the Lyapunov function $V_t$ for \textbf{Case 1}.}
    \label{fig:Ex_classic}
\end{figure}

\begin{figure}[hbtp]
    \centering
    \includegraphics[width=0.7\linewidth]{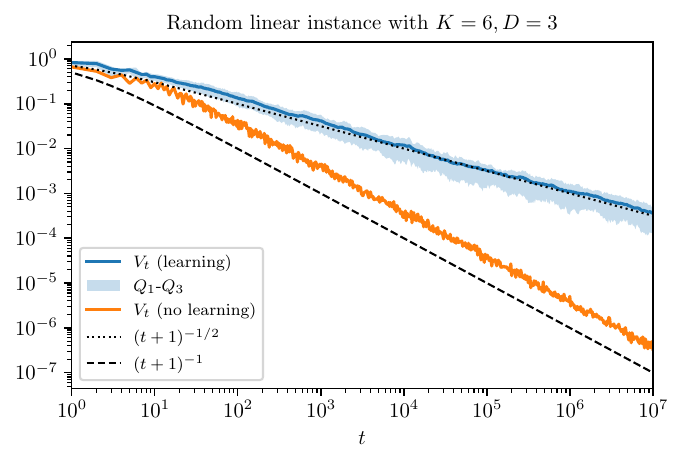}
    \caption{The trajectory of the Lyapunov function $V_t$ for \textbf{Case 2}.}
    \label{fig:Ex_rand}
\end{figure}

\section{Auxiliary facts}\label{sec:facts}
In this section, we compile several classical results on set-valued maps and differential inclusions that facilitate our proofs.

\paragraph{Envelope theorem.}
Let $X$ be a choice set and let $t \in [0,1]$ be the relevant parameter. Consider the parameterized objective function $f: X \times [0,1] \to \mathbb{R},$
and define the value function $v: [0,1] \to \mathbb{R}$ and the optimal choice correspondence (set-valued function) $X^*$ by
\[
v(t) = \sup_{x \in X} f(x,t) \quad \text{and} \quad X^*(t) = \{x \in X : f(x,t) = v(t)\}.
\]

\begin{theorem}[Envelope theorem, Theorem 1, \citealt{milgrom2002envelope}]\label{thm:envelope}
Assume that for a given $t \in [0,1]$ and for some $x^* \in X^*(t)$ the partial derivative $f_t(x^*,t)$ exists. If $v$ is differentiable at $t$, then $v'(t) = f_t(x^*,t).$
\end{theorem}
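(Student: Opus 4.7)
I would prove this by the classical dominance-and-sandwich argument that underlies all envelope-type results. The key observation is that at $x^{*}\in X^{*}(t)$, the parametrized objective $s\mapsto f(x^{*},s)$ lies everywhere below the value function $v(\cdot)$ and touches it at $s=t$: by definition $v(s)\ge f(x^{*},s)$ for every $s\in[0,1]$, with equality at $s=t$. Introducing the auxiliary function $g(s)\triangleq v(s)-f(x^{*},s)$, this says $g\ge 0$ on $[0,1]$ and $g(t)=0$, so $t$ is a global minimizer of $g$.

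\textbf{Main step.} From the dominance I extract the inequality $v(s)-v(t)\ge f(x^{*},s)-f(x^{*},t)$ and pass to one-sided difference quotients around $t$. For $s>t$, dividing by the positive number $s-t$ and letting $s\downarrow t$ gives $v'(t)\ge f_{t}(x^{*},t)$. For $s<t$, dividing by the negative number $s-t$ flips the inequality, and letting $s\uparrow t$ yields $v'(t)\le f_{t}(x^{*},t)$. Both one-sided limits exist and coincide with the two-sided derivatives under the stated hypotheses that $v$ is differentiable at $t$ and that the partial derivative $f_{t}(x^{*},t)$ exists. Sandwiching the two bounds gives $v'(t)=f_{t}(x^{*},t)$.

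\textbf{Main obstacle.} There is no serious obstacle: the argument is essentially a three-line consequence of first-order optimality applied to $g$ at its minimizer $t$. The only care-points are (i) tracking the sign when dividing by $s-t$, and (ii) the boundary case $t\in\{0,1\}$, where ``differentiability at $t$'' must be read as one-sided; in that case only the admissible direction survives in the sandwich, and the conclusion is recovered by the corresponding one-sided first-order condition $g'_{\pm}(t)=0$ that follows from $g\ge 0=g(t)$ together with the existence of the one-sided derivative of $g$ at the endpoint.
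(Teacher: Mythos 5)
The paper never proves this statement: it is quoted as an auxiliary fact directly from \citet{milgrom2002envelope}, so there is no internal proof to compare against. Your dominance-and-sandwich argument is exactly the standard proof of that theorem, and it is correct for interior $t$: since $v(s)\ge f(x^{*},s)$ for all $s$ with equality at $s=t$, the difference quotients of $v$ dominate those of $f(x^{*},\cdot)$ from the right and are dominated by them from the left, and the two hypotheses (existence of $f_{t}(x^{*},t)$ and differentiability of $v$ at $t$) let you pass to the limit on both sides to get the sandwich $f_t(x^*,t)\le v'(t)\le f_t(x^*,t)$.

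The one genuine flaw is your endpoint remark. At $t\in\{0,1\}$, the fact that $t$ minimizes $g(s)=v(s)-f(x^{*},s)$ over $[0,1]$ yields only a one-sided first-order \emph{inequality} ($g'_{+}(0)\ge 0$, resp.\ $g'_{-}(1)\le 0$), not $g'_{\pm}(t)=0$, and the equality in the conclusion can genuinely fail there: take $X=[0,1]$ and $f(x,t)=xt$, so $v(t)=t$; at $t=0$ one may choose $x^{*}=0\in X^{*}(0)$, giving $f_{t}(x^{*},0)=0$ while $v'(0)=1$. This is precisely why Milgrom and Segal state, in general, only the one-sided bounds $v'(t^{+})\ge f_{t}(x^{*},t)$ and $v'(t^{-})\le f_{t}(x^{*},t)$, with equality reserved for $t\in(0,1)$. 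The caveat is immaterial for how the paper uses the result (it is invoked for almost every time $t$ along a trajectory, so interior points suffice), but your claim that the boundary case is ``recovered'' by a one-sided first-order condition is incorrect as stated and should be dropped or replaced by the one-sided inequalities.
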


\paragraph{Upper hemicontinuity for compact-valued correspondence.}

\begin{theorem}[Theorem 3.2, \citealt{beavis1990optimization}]
\label{thm:compact_hemicontinuous}
The compact-valued set-valued map $F: X \rightrightarrows Y$ is upper hemicontinuous at $x \in X$ if, and only if, for every sequence $\{x_n\}$ converging to $x$ and every sequence $\{y_n\}$ with $y_n \in F(x_n)$, there exists a converging subsequence of $\{y_n\}$ whose limit belongs to $F(x)$.
\end{theorem}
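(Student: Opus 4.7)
\medskip

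\noindent\textbf{Proof proposal for Theorem~\ref{thm:compact_hemicontinuous}.}
The plan is to prove the two implications separately, using only the open-neighborhood definition of upper hemicontinuity together with compactness of $F(x)$ and of each $F(x_n)$. Throughout, I work in the metric setting relevant to the paper (so sequences fully characterize the topology), which lets me translate between neighborhood statements and sequential statements freely.

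For the forward direction, I assume $F$ is upper hemicontinuous at $x$ and compact-valued, and fix $x_n \to x$ together with $y_n \in F(x_n)$. The key step is to show $d(y_n, F(x)) \to 0$. For each $k \in \mathbb{N}$, the open tube $V_k \triangleq \{y \in Y : d(y, F(x)) < 1/k\}$ is an open set containing $F(x)$, so upper hemicontinuity yields a neighborhood $U_k$ of $x$ with $F(x') \subseteq V_k$ for all $x' \in U_k$. Since $x_n \to x$, eventually $x_n \in U_k$, and therefore $y_n \in F(x_n) \subseteq V_k$. A standard diagonal argument then gives $d(y_n, F(x)) \to 0$. For each $n$ pick $z_n \in F(x)$ with $d(y_n, z_n) \le d(y_n, F(x)) + 1/n$; since $F(x)$ is compact, $\{z_n\}$ admits a convergent subsequence $z_{n_j} \to y^* \in F(x)$, and $d(y_{n_j}, z_{n_j}) \to 0$ forces $y_{n_j} \to y^*$, which is the desired subsequence.

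For the converse, I argue by contrapositive. Suppose $F$ fails to be upper hemicontinuous at $x$: there exists an open set $V \supseteq F(x)$ such that every neighborhood of $x$ contains some $x'$ with $F(x') \not\subseteq V$. Choosing a countable neighborhood base at $x$ (available in the metric setting), I extract $x_n \to x$ and $y_n \in F(x_n) \setminus V$. By the hypothesized sequential property there is a subsequence $y_{n_j} \to y \in F(x)$. But $y_{n_j} \in Y \setminus V$, a closed set, so the limit $y$ lies in $Y \setminus V$, contradicting $y \in F(x) \subseteq V$. Hence $F$ must be upper hemicontinuous at $x$.

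The only delicate step is the forward direction, specifically the passage from ``$F(x_n)$ is eventually contained in every tube $V_k$'' to the existence of a convergent subsequence of $\{y_n\}$. The tube argument by itself only gives $d(y_n, F(x)) \to 0$; without compactness of $F(x)$ this is insufficient, since $\{y_n\}$ could drift to infinity along $F(x)$. Compactness of $F(x)$ is exactly what closes the gap, by letting me transfer convergence from the auxiliary sequence $\{z_n\} \subset F(x)$ back to $\{y_n\}$. The converse, in contrast, is a clean contrapositive that uses compact-valuedness only implicitly, via the fact that $F(x) \subseteq V$ and $V$ is open.
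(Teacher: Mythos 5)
Your proof is correct. The paper does not prove this statement at all --- it is quoted verbatim as an auxiliary classical fact with a citation to \citet{beavis1990optimization}, so there is no in-paper argument to compare against; your two-directional argument (the open-tube construction plus compactness of $F(x)$ for the forward implication, and the contrapositive via a countable neighborhood base for the converse) is the standard textbook proof of this sequential characterization and is sound in the metric setting the paper works in.
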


\paragraph{Berge's Maximum Theorem.}
\begin{theorem}[Maximum Theorem]
Let $X\subset \mathbb{R}^m$, $Y \subset \mathbb{R}^k$ and $\mathcal{X}i: X \rightrightarrows Y$ be a set-valued map with nonempty, compact values. Let $f: X \times Y \to \mathbb{R}$ be a continuous function. Define the set-valued function $M: X\rightrightarrows Y$, the maximizers $M(x) \triangleq \argmax_{y\in\mathcal{X}i(x)} f(x,y)$,
and the corresponding value function $v: X \to \mathbb{R}$ by $v(x) = \max_{y\in\mathcal{X}i(x)} f(x,y)$. If $\mathcal{X}i$ is continuous at $x$, then  $v$ is continuous at $x$ and the set-valued function $M$ is closed, compact-valued and upper hemicontinuous at $x$. 
\end{theorem}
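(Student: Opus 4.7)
The plan is to verify the three conclusions in turn, leveraging the sequential characterization of upper hemicontinuity for compact-valued correspondences (Theorem 3.2) and the joint continuity of $f$. First, for any $x'\in X$, Weierstrass' extreme value theorem applied to the continuous function $f(x',\cdot)$ on the nonempty compact set $\mathcal{X}i(x')$ gives $M(x')\neq\varnothing$. Moreover, $M(x')=\mathcal{X}i(x')\cap\{y:f(x',y)=v(x')\}$ is the intersection of a compact set with a closed set, hence compact; this handles the compact- and closed-valuedness claim for $M$ at every point.

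Next, I would establish continuity of $v$ at $x$ by proving upper and lower semicontinuity separately. For upper semicontinuity, pick any $x_n\to x$ and select $y_n\in M(x_n)$, so $v(x_n)=f(x_n,y_n)$. Since $y_n\in\mathcal{X}i(x_n)$ and $\mathcal{X}i$ is upper hemicontinuous at $x$ with compact values, Theorem 3.2 extracts a subsequence $y_{n_k}\to y^{\star}\in\mathcal{X}i(x)$. Then $\limsup_n v(x_n)=\lim_k f(x_{n_k},y_{n_k})=f(x,y^{\star})\le v(x)$ by joint continuity of $f$ and the definition of $v(x)$. For lower semicontinuity, fix any $y^{\ast}\in M(x)$; lower hemicontinuity of $\mathcal{X}i$ at $x$ supplies a sequence $\tilde y_n\in\mathcal{X}i(x_n)$ with $\tilde y_n\to y^{\ast}$, so $v(x_n)\ge f(x_n,\tilde y_n)\to f(x,y^{\ast})=v(x)$, giving $\liminf_n v(x_n)\ge v(x)$. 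Combining both bounds yields $v(x_n)\to v(x)$.

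Finally, for the upper hemicontinuity of $M$ at $x$, I would apply Theorem 3.2 a second time. Take $x_n\to x$ and $y_n\in M(x_n)$; the same subsequence extraction as above yields $y_{n_k}\to y^{\star}\in\mathcal{X}i(x)$. Combining $v(x_{n_k})=f(x_{n_k},y_{n_k})$ with the continuity of $v$ just proved and the joint continuity of $f$ gives $f(x,y^{\star})=v(x)$, so $y^{\star}\in M(x)$. Theorem 3.2 then certifies that $M$ is upper hemicontinuous at $x$. The main (modest) obstacle is bookkeeping about which half of the continuity of $\mathcal{X}i$ is used where: upper hemicontinuity of $\mathcal{X}i$ drives upper semicontinuity of $v$ and upper hemicontinuity of $M$, while lower hemicontinuity of $\mathcal{X}i$ is needed solely for the lower-semicontinuity half of $v$; a second delicate point is that the sequential criterion in Theorem 3.2 relies on compact-valuedness, which is why we need $M(x)$ to be compact before invoking it for $M$ itself.
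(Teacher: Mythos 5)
Your proof is correct in substance, but note that the paper itself offers no proof of this statement: it is Berge's Maximum Theorem, recorded in the ``Auxiliary facts'' appendix as a classical result imported from the literature (the paper invokes it, e.g., via \citet[Theorem~3.6]{beavis1990optimization} in the proof of Proposition~\ref{prop:BR_properties}, and only verifies that its hypotheses hold in the specific instantiation where the constraint correspondence is constant). What you have written is essentially the standard textbook proof, and your bookkeeping of which half of the continuity of $\mathcal{X}i$ feeds which conclusion is exactly right: upper hemicontinuity plus compact values gives the convergent-subsequence extraction used for upper semicontinuity of $v$ and for closedness/upper hemicontinuity of $M$, while lower hemicontinuity is needed only for the lower-semicontinuity half of $v$. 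One small wrinkle: in the upper-semicontinuity step you write $\limsup_n v(x_n)=\lim_k f(x_{n_k},y_{n_k})$ for a subsequence extracted solely to make the $y_n$ converge; this identity is not automatic for an arbitrary subsequence. The standard fix is to first pass to a subsequence along which $v(x_{n_j})$ converges to the $\limsup$, and only then extract the convergent sub-subsequence of the $y$'s; the rest of your argument goes through verbatim. With that one-line repair the proof is complete and matches the classical argument the paper is implicitly relying on.
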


In our context, we take $x = (\bm{p},\bm{\mu})\in \interior{\Delta_K} \times \Delta_{|\mathcal{X}|}$,  $y = (\bm{q},\bm{\nu})$, and $X = Y = \Delta_{K} \times \Delta_{|\mathcal{X}|}$.
Let the set-valued map $\mathcal{X}i(\bm{p},\bm{\mu}) = \Delta_{K} \times \Delta_{|\mathcal{X}|}$ for all $(\bm{p},\bm{\mu})$, 
which is constant (and hence continuous) at any $x$.
Let $f(\bm{p},\bm{\mu};\bm{q},\bm{\nu}) = (\bm{q}^{\top}\nabla_{\bm{p}} F(\bm{p},\bm{\mu}),\bm{\nu}^{\top}\nabla_{\bm{\mu}} F(\bm{p},\bm{\mu}))$.
By part 1 of Assumption~\ref{assumption:regularity}, $\nabla_{\bm{p}}F$ and $\nabla_{\bm{\mu}}F$ are continuous.
The maximizer set-valued function $M(\bm{p},\bm{\mu}) = \LMOp(\bm{p},\bm{\mu})\times \LMOmu(\bm{p},\bm{\mu})$. Consequently, both LMO correspondences are upper hemicontinuous.

\paragraph{Existence of solution to differential inclusion.}

\begin{definition}[Viable trajectory]
    The trajectory $\bm{x}(\cdot)$ such that 
    \[\forall t\in[0,T),\quad\bm{x}(t)\in K\]
    is called a viable trajectory on $[0,T)$.
\end{definition}

\begin{definition}[Contingent cone]
    We define the contingent cone $T_{K}(\bm{x})$ as
    \begin{align*}
        T_{K}(\bm{x})&\triangleq\{\bm{v}: \exists\text{ strictly positive sequence } h_n\to 0 \\
        & \qquad \qquad\text{ and elements }\bm{u}_n\to\bm{v} \text{ such that }\forall n\geq 0, \bm{x}+h_n\bm{u}_n\in K\}.
    \end{align*}
\end{definition}

\begin{theorem}[Viability Theorem, Theorem 1, Section 4.2, \citealt{aubin_differential_1984}]\label{thm:existence_viability}
Let $K$ be a subset of Hilbert space $X$ and $T_{K}(\bm{x})$ be $K$'s contingent cone at $\bm{x}$. Consider upper hemicontinuous map $F$ from $K$ to $X$ with compact convex values. We posit the tangential condition:
\[\forall \bm{x}\in K,\quad F(x)\cap T_{K}(\bm{x})\neq\varnothing.\]
When $X$ is finite dimensional and $F(K)$ is bounded, then for all $\bm{x}_0\in K$, there exists a viable trajectory $x: [0,\infty) \to X$ that is a solution of the differential inclusion
\[
\dot{x}(t) \in F(t,x(t)) , \quad x(0)=x_0.
\]
\end{theorem}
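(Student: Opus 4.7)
The plan is to prove the Viability Theorem by an Euler-type polygonal approximation: build a family of $\epsilon$-approximate viable trajectories using the tangential condition at every step, extract a uniformly convergent subsequence via Arzel\`a--Ascoli, and identify the limit as a solution of the differential inclusion using upper hemicontinuity together with convex-valuedness of $F$. Throughout I would implicitly assume $K$ is closed (or replace $K$ by its closure, to which the tangential condition extends via graph-closedness of $F$ and $T_{K}$).

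First I would fix $\epsilon>0$ and construct $\bm{x}^{\epsilon}:[0,\infty)\to K$ inductively. Starting from $\bm{x}^{\epsilon}(0)=\bm{x}_0$, at a grid time $t_n$ the tangential condition supplies some $\bm{v}_n\in F(\bm{x}^{\epsilon}(t_n))\cap T_{K}(\bm{x}^{\epsilon}(t_n))$; the contingent-cone definition then yields $h_n\in(0,\epsilon]$ and $\bm{u}_n$ with $\|\bm{u}_n-\bm{v}_n\|\le \epsilon$ such that $\bm{x}^{\epsilon}(t_n)+h_n\bm{u}_n\in K$. I would define $\bm{x}^{\epsilon}(t)\triangleq \bm{x}^{\epsilon}(t_n)+(t-t_n)\bm{u}_n$ on $[t_n,t_n+h_n]$, set $t_{n+1}=t_n+h_n$, and iterate. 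Because $F(K)$ is bounded by some $M<\infty$, $\|\bm{u}_n\|\le M+\epsilon$, so $\bm{x}^{\epsilon}$ is $(M+\epsilon)$-Lipschitz. A Zorn-style maximality argument on $\epsilon$-approximations rules out finite accumulation of $\{t_n\}$, so $\bm{x}^{\epsilon}$ extends to all of $[0,\infty)$. Grid endpoints lie in $K$ by construction, while on each sub-interval $\bm{x}^{\epsilon}$ remains within $O(\epsilon)$ of $K$.

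Next I would extract a limit. For any finite $T$, the family $\{\bm{x}^{\epsilon}\}_{\epsilon\le 1}$ is equi-Lipschitz with constant $M+1$ and uniformly bounded, so Arzel\`a--Ascoli yields a subsequence $\bm{x}^{\epsilon_{k}}\to \bm{x}$ uniformly on $[0,T]$; a diagonal argument over $T\in\mathbb{N}$ produces a Lipschitz, hence absolutely continuous, limit $\bm{x}:[0,\infty)\to X$. Since $\bm{x}^{\epsilon_{k}}$ stays within $\epsilon_{k}(M+1)$ of $K$ and $K$ is closed, $\bm{x}(t)\in K$ for all $t$, giving viability.

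The final and main step, which I expect to be the principal obstacle, is to show $\dot{\bm{x}}(t)\in F(\bm{x}(t))$ for almost every $t$. At a Lebesgue point $t$ of $\dot{\bm{x}}$, I would write
\[
\frac{\bm{x}(t+\delta)-\bm{x}(t)}{\delta}
=\lim_{k\to\infty}\frac{1}{\delta}\int_{t}^{t+\delta}\bm{u}^{\epsilon_{k}}(s) \diff s,
\]
where $\bm{u}^{\epsilon_{k}}(s)$ equals the corresponding $\bm{u}_{n}^{\epsilon_{k}}$ on each sub-interval. Uniform convergence $\bm{x}^{\epsilon_{k}}\to \bm{x}$ together with upper hemicontinuity of $F$ ensures that for any open convex neighborhood $V\supseteq F(\bm{x}(t))$ one has $\bm{v}_{n}^{\epsilon_{k}}\in V$, and therefore $\bm{u}_{n}^{\epsilon_{k}}\in V+\epsilon_{k}\mathbb{B}$, for $k$ large, $\delta$ small, and $s\in[t,t+\delta]$. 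The averaged integral then lies in $\overline{V}$; intersecting over a decreasing family of such $V$ and using the compactness and convexity of $F(\bm{x}(t))$ forces $\dot{\bm{x}}(t)\in F(\bm{x}(t))$. The delicate point is that the $\bm{u}^{\epsilon_{k}}(s)$ are anchored at grid points rather than at $\bm{x}(t)$, and only the convex-valuedness of $F$ permits passing from a time-average of nearby values back into the set itself—this interplay of equi-Lipschitz control, upper hemicontinuity, and convex values is where the argument must be handled most carefully. Combined with $\bm{x}(0)=\bm{x}_{0}$, this yields the desired viable solution.
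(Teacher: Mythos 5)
This statement is an imported classical result: the paper gives no proof of it, quoting it verbatim as an auxiliary fact from \citet{aubin_differential_1984} (Theorem~1, Section~4.2). Your Euler/polygonal-approximation argument is essentially the standard proof in that reference --- approximate viable trajectories built from the tangential condition and the definition of the contingent cone, a maximality argument to extend the grid, equi-Lipschitz compactness (Arzel\`a--Ascoli, or the Compactness Theorem~\ref{thm:compactness}), and identification of the limit via upper hemicontinuity plus convex values (the Convergence Theorem~\ref{thm:set-valued map continuous}) --- so in substance you have reconstructed the proof the paper delegates to the literature, and the main steps are sound.

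One caveat worth flagging: your opening parenthetical, that one may ``replace $K$ by its closure, to which the tangential condition extends via graph-closedness of $F$ and $T_K$,'' is false in general. The contingent-cone correspondence is not closed-graph: for instance if $K$ is open then $T_K(\bm x)=X$ for every $\bm x\in K$ and the tangential condition is vacuous, yet at a point of $\overline K\setminus K$ the cone $T_{\overline K}$ collapses to a half-space (or less) and $F(\bm x)\cap T_{\overline K}(\bm x)$ may well be empty; moreover $F$ need not even be defined on $\overline K\setminus K$. The hypothesis actually used in \citet{aubin_differential_1984} is local compactness of $K$ (automatic for closed sets in finite dimensions, but also satisfied by locally closed sets such as the set $\interior{\Delta_K}\times\Delta_{|\mathcal X|}$ to which the paper applies the theorem); the construction is then carried out inside a compact neighborhood of the current point, grid endpoints land in $K$ exactly as in your scheme, and viability of the limit follows from compactness of that neighborhood rather than from closedness of $K$. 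With that substitution your argument goes through; the remainder --- the Lebesgue-point computation, the averaging into a convex neighborhood of $F(\bm x(t))$, and the intersection over shrinking neighborhoods --- is handled correctly.
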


In our context, let $K \triangleq \interior{\Delta_K}\times\Delta_{|\mathcal{X}|}$ and $F\triangleq (\LMOp(\bm{p},\bm{\mu})-\bm{p},\LMOmu(\bm{p},\bm{\mu})-\bm{\mu})$.
The tangential condition is satisfied.
To see this, note that $\LMOp(\bm{p},\bm{\mu})\subset\Delta_K$ and $\LMOmu(\bm{p},\bm{\mu})\subset\Delta_{|\mathcal{X}|}$, for any $(\bm{p},\bm{\mu})\in\interior{\Delta_K}\times\Delta_{|\mathcal{X}|}$ and any $\bm{v}\in F(\bm{p},\bm{\mu})$ we have $(\bm{p},\bm{\mu})+\epsilon \bm{v}\in \interior{\Delta_K}\times\Delta_{|\mathcal{X}|}$ for all sufficiently small $\epsilon>0$, hence $F(\bm{p},\bm{\mu})\subseteq T_{K}(\bm{p},\bm{\mu})$.

\paragraph{Existence of measurable selections.}

\begin{theorem}[Corollary 1, Section 1.14, \citealt{aubin_differential_1984}]
Let $ f : X \times U \to X $ be continuous, where $ U $ is a compact separable metric space. Assume that there exist an interval $ I $ and an absolutely continuous function $x: I \to \mathbb{R}^n,$ such that 
\[
x'(t) \in f(x(t), U) \quad \text{for almost every } t \in I.
\]
Then, there exists a Lebesgue measurable function $u: I \to U $
such that 
\[
x'(t) = f(x(t), u(t)) \quad \text{for almost every } t \in I.
\]
\end{theorem}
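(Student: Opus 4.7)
The plan is to construct the desired measurable selection $u(\cdot)$ via the Kuratowski--Ryll-Nardzewski / Filippov measurable selection apparatus applied to the level-set correspondence
\[
   \Phi(t) \triangleq \bigl\{ u \in U : f(x(t), u) = x'(t) \bigr\}, \qquad t \in I.
\]
First I would work only on the full-measure subset $I_0 \subseteq I$ on which $x'(t)$ exists and satisfies $x'(t) \in f(x(t), U)$; on $I \setminus I_0$ I will later extend $u$ arbitrarily (say, by a constant $u_0 \in U$), which does not affect the a.e.\ conclusion. On $I_0$, nonemptiness of $\Phi(t)$ is immediate from the hypothesis, and since $f(x(t), \cdot)$ is continuous on the compact $U$, each $\Phi(t)$ is the continuous preimage of the point $\{x'(t)\}$, hence a nonempty closed (therefore compact) subset of $U$.

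The next step is to verify that $\Phi$ admits a measurable selection. The cleanest route is to check that the graph
\[
   \operatorname{graph}(\Phi) = \bigl\{(t,u) \in I_0 \times U : f(x(t),u) - x'(t) = 0\bigr\}
\]
is jointly measurable. Since $x(\cdot)$ is absolutely continuous it is Borel measurable, $x'(\cdot)$ is Lebesgue measurable on $I_0$, and $f$ is jointly continuous. Hence $(t,u) \mapsto f(x(t),u) - x'(t)$ is a Carath\'{e}odory map (measurable in $t$, continuous in $u$), and therefore jointly $(\mathcal{L}(I_0)\otimes\mathcal{B}(U))$-measurable; its zero set is measurable. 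Combined with the closed-valuedness of $\Phi$ in the Polish space $U$, the Kuratowski--Ryll-Nardzewski theorem (equivalently, Filippov's implicit function theorem) then produces a Lebesgue-measurable selection $u : I_0 \to U$ with $u(t) \in \Phi(t)$, i.e.\ $f(x(t), u(t)) = x'(t)$, for every $t \in I_0$.

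The hardest step is precisely this measurability check: one must be careful that $x'(t)$ is only defined almost everywhere and that the selection theorem requires either measurable graph or weak measurability of $\Phi$. Carath\'{e}odory's theorem resolves joint measurability, and closed-valuedness plus Polish $U$ makes measurable graph equivalent to weak measurability, so Kuratowski--Ryll-Nardzewski applies directly. An alternative, constructive approach would exploit separability of $U$: fix a countable dense $\{u_k\}$, define $u(t)$ as the first $u_k$ in some lexicographic ordering within $\Phi(t) \cap \overline{B}(u_k, 1/m)$ as $m, k$ increase---this gives a Borel selection without invoking the abstract theorem---but the graph-measurability argument is shorter. In either case, extending $u$ by any fixed value on the null set $I\setminus I_0$ yields a Lebesgue-measurable $u: I \to U$ with $x'(t) = f(x(t), u(t))$ almost everywhere, as required.
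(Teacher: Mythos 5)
Your argument is correct, and it is essentially the standard proof of this result: the statement is quoted in the paper as an auxiliary fact from \citet{aubin_differential_1984} (Corollary~1, Section~1.14) without proof, and what you give is the classical Filippov implicit-function / measurable-selection argument that underlies it. The construction of the level-set correspondence $\Phi(t)=\{u\in U: f(x(t),u)=x'(t)\}$, nonemptiness and compactness of its values, joint measurability of the Carath\'eodory map $(t,u)\mapsto f(x(t),u)-x'(t)$, and the extension by a constant on the null set where $x'$ fails to exist are all handled correctly. One point worth making explicit: the Kuratowski--Ryll-Nardzewski theorem is stated for weakly measurable correspondences, and the passage from ``measurable graph'' to weak measurability of $\Phi$ is not purely topological---it uses the projection theorem, which requires the completeness of the Lebesgue $\sigma$-algebra on $I_0$ (projections of product-measurable sets onto a complete $\sigma$-finite measure space are measurable). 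Since you work with Lebesgue measurability throughout this is available, but you should cite it (or invoke an Aumann--von Neumann type selection theorem formulated directly for measurable-graph correspondences) rather than assert the equivalence; with that reference added, the proof is complete.
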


In our context, let $U = \Delta_{|\mathcal{X}|} \times \Delta_{K}$ denote the feasible set for the FW-responses.
And the interval $I$ can be selected as $I = [0,\infty)$ by the existence of absolute continuous solution in Lemma~\ref{lem:existence_solution}.

\paragraph{Integral representation for upper hemicontinuous map with compact convex images.}
\begin{theorem}[Lemma 1, Section 2.1, \citealt{aubin_differential_1984}]\label{thm:integral}
Let $F$ be an upper semicontinuous map from $I \times X$ into the compact convex subsets of $X$. Then the continuous function $x(\cdot)$ is a solution on $I$ to the inclusion
\[x'(t) \in F(t, x(t))\]
if and only if for every pair $(t_1, t_2)$,
\[x(t_2) \in x(t_1) + \int_{t_1}^{t_2} F(s, x(s))   ds.\]
\end{theorem}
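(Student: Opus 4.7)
The statement is the classical equivalence between a differential inclusion and its integral (Aumann) formulation, so the plan is to prove the two implications separately, interpreting $\int_{t_1}^{t_2} F(s,x(s))\,ds$ throughout as the Aumann integral---that is, the set of all $\int_{t_1}^{t_2} v(s)\,ds$ for measurable selections $v(\cdot)$ with $v(s)\in F(s,x(s))$ a.e. Upper semicontinuity of $F$ together with compact convex images ensures that $s\mapsto F(s,x(s))$ is a measurable closed-convex-valued map along any continuous $x(\cdot)$, so the Aumann integral is well defined and a measurable selection theorem of Kuratowski--Ryll-Nardzewski type applies in both directions.

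For the forward direction, suppose $x'(t)\in F(t,x(t))$ for a.e.\ $t$. Then $x(\cdot)$ is absolutely continuous and $x'(\cdot)$ is itself a measurable selection of $s\mapsto F(s,x(s))$, so the fundamental theorem of calculus gives
\[
x(t_2)-x(t_1)=\int_{t_1}^{t_2} x'(s)\,ds\in\int_{t_1}^{t_2}F(s,x(s))\,ds,
\]
which is precisely the integral inclusion.

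For the backward direction, the key idea is to differentiate the integrated statement at Lebesgue points. First I would use upper semicontinuity and compact-valuedness along $x(\cdot)$ to derive a local bound $\sup_{s\in J}\|F(s,x(s))\|<\infty$ on any compact subinterval $J$; combined with the integral hypothesis this forces $x(\cdot)$ to be locally Lipschitz and therefore absolutely continuous, so $x'(t)$ exists for a.e.\ $t$. Fix such a $t$. Given $\varepsilon>0$, upper semicontinuity produces $\delta>0$ with $F(s,x(s))\subseteq F(t,x(t))+\varepsilon\mathbb{B}$ whenever $|s-t|<\delta$. Then for $0<h<\delta$,
\[
\tfrac{1}{h}\bigl(x(t+h)-x(t)\bigr)\in \tfrac{1}{h}\int_t^{t+h}F(s,x(s))\,ds\subseteq F(t,x(t))+\varepsilon\mathbb{B},
\]
and letting first $h\downarrow 0$ and then $\varepsilon\downarrow 0$ yields $x'(t)\in F(t,x(t))$, using that $F(t,x(t))$ is closed.

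The main obstacle is the innocuous-looking identity $\tfrac{1}{h}\int_t^{t+h}\bigl(F(t,x(t))+\varepsilon\mathbb{B}\bigr)\,ds=F(t,x(t))+\varepsilon\mathbb{B}$: for the Aumann integral of a \emph{constant} set-valued integrand, this equality holds only because $F(t,x(t))+\varepsilon\mathbb{B}$ is convex and closed; for a nonconvex constant integrand the Aumann integral would equal the closed convex hull of the integrand, which can be strictly larger. Convexity and compactness are built into the hypotheses, so everything goes through, but this is exactly the step where those assumptions are genuinely needed and where a careless argument would break.
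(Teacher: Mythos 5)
This statement is an auxiliary fact that the paper does not prove at all: it is restated verbatim from the literature and attributed to \citet{aubin_differential_1984} (Lemma~1, Section~2.1), so there is no in-paper argument to compare against. Your proof is a correct, self-contained derivation and is essentially the standard textbook argument: necessity via the fundamental theorem of calculus, since $x'(\cdot)$ is itself an integrable selection of $s\mapsto F(s,x(s))$; sufficiency by first using upper semicontinuity and compactness of values to bound $F$ along the compact arc $\{(s,x(s))\}$, deducing that $x$ is locally Lipschitz hence absolutely continuous, and then differentiating the integral inclusion at points of differentiability, where $F(s,x(s))\subseteq F(t,x(t))+\varepsilon\mathbb{B}$ for $s$ near $t$ and the average of any selection stays in the closed convex set $F(t,x(t))+\varepsilon\mathbb{B}$. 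Your remark pinpointing that this last step is exactly where closedness and convexity of the values are indispensable (for a nonconvex integrand the Aumann integral convexifies) is accurate, and in fact only the inclusion $\tfrac{1}{h}\int_t^{t+h}F(s,x(s))\,ds\subseteq F(t,x(t))+\varepsilon\mathbb{B}$ is needed, not the stated equality. Two minor points: the appeal to a Kuratowski--Ryll-Nardzewski selection theorem is superfluous (both directions use only selections you already have in hand), and the step ``locally Lipschitz implies differentiable a.e.'' silently uses that $X$ is finite dimensional (or has the Radon--Nikodym property); this is harmless here since the paper applies the lemma with $X=\Delta_K\times\Delta_{|\mathcal{X}|}\subset\mathbb{R}^{K+|\mathcal{X}|}$, but it is worth flagging if you state the result for a general Banach space.
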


\paragraph{Gr\"{o}nwall's inequality.}
\begin{theorem}[Gr\"{o}nwall's Inequality]
Let $\beta(\cdot)$ and $u(\cdot)$ be real-valued continuous functions defined on $[0,\infty)$. Suppose that $u$ is absolutely continuous and satisfies the differential inequality
$$
u'(t) \le \beta(t) u(t), \quad \text{for almost all } t \ge 0.
$$
Then, we have
$$
u(t) \le u(0) \exp\left( \int_0^t \beta(s) \diff s \right).
$$
\end{theorem}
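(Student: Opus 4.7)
The plan is to reduce the one-sided differential inequality to a monotonicity statement for an auxiliary function obtained via the classical integrating-factor trick. First, I would define $B(t)\triangleq \int_0^t \beta(s)\diff s$, which is well defined and continuously differentiable on $[0,\infty)$ since $\beta$ is continuous, and then introduce
\[
v(t)\triangleq u(t) \exp\bigl(-B(t)\bigr).
\]
Because $u$ is absolutely continuous on every compact subinterval of $[0,\infty)$ and $\exp(-B(\cdot))$ is $C^1$ (hence Lipschitz on compacts), their product $v$ is absolutely continuous on every compact subinterval as well. This step fixes the regularity needed to apply the Lebesgue fundamental theorem of calculus to $v$.

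Next, I would differentiate $v$ pointwise at any $t$ where $u$ is differentiable, which, by absolute continuity of $u$, is almost every $t\ge 0$. The product rule gives
\[
v'(t)=\bigl(u'(t)-\beta(t) u(t)\bigr) \exp\bigl(-B(t)\bigr),
\]
and the hypothesis $u'(t)\le \beta(t) u(t)$ a.e., combined with strict positivity of the exponential factor, yields $v'(t)\le 0$ for a.e. $t\ge 0$.

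Then, invoking the Lebesgue fundamental theorem of calculus for the absolutely continuous function $v$, I would write $v(t)=v(0)+\int_0^t v'(s)\diff s$ and conclude $v(t)\le v(0)=u(0)$ for all $t\ge 0$. Rearranging via multiplication by $\exp(B(t))>0$ gives exactly
\[
u(t)\le u(0)\exp\bigl(B(t)\bigr)=u(0)\exp\!\left(\int_0^t \beta(s)\diff s\right),
\]
as claimed.

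This is the classical proof, so I do not anticipate a genuine technical obstacle; the only points deserving explicit care are (i) verifying that the product of an absolutely continuous function with a $C^1$ function remains absolutely continuous on compacts, so that the fundamental theorem applies to $v$, and (ii) handling the measure-zero set on which $u'$ may fail to exist or the inequality may fail, which is harmless because Lebesgue integration is insensitive to null sets. No sign restrictions on $u$ or $\beta$ are needed, and the argument is uniform on any finite horizon, so extending from $[0,T]$ to all of $[0,\infty)$ is immediate by letting $T\to\infty$.
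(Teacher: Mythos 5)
Your proof is correct: the integrating-factor argument with $v(t)=u(t)\exp(-B(t))$, the observation that $v$ is absolutely continuous on compacts, and the Lebesgue fundamental theorem of calculus applied to $v'\le 0$ a.e.\ is exactly the standard route, and you correctly note that no sign condition on $u$ or $\beta$ is needed. The paper states Gr\"onwall's inequality only as a classical auxiliary fact (used with $\beta\equiv -1$ and $u=V$) and gives no proof of its own, so there is nothing to compare against; your argument supplies the canonical proof without any gaps.
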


In our context, we can show that the candidate Lyapunov function $V(t) = V(\bm{p}(t),\bm{\mu}(t))$ is Lipschitz and hence absolutely continuous. We then take $\beta(t) = -1$ and $u(t) = V(t)$.

\paragraph{Compactness theorem.}
\begin{theorem}[Theorem 4, Section 0.4, \citealt{aubin_differential_1984}]\label{thm:compactness}
Consider a sequence of absolutely continuous functions $x_k(\cdot)$ from an interval $I$ of $\mathbb{R}$ to a Banach space $X$ satisfying
\begin{itemize}
\item[(i)] $\forall t \in I$, $\{ x_k(t) \}_k$ is a relatively compact subset of $X$,
\item[(ii)] there exists a positive function $c(\cdot) \in \mathcal{L}^1(I, x)$ such that, for almost all $t \in I$, $\| \dot{x}_k(t) \| \leq c(t).$
\end{itemize}
Then there exists a subsequence (again denoted by) $x_k(\cdot)$ converging to an absolutely continuous function $x(\cdot)$ from $I$ to $X$ in the sense that
\begin{itemize}
    \item[(i)] $ x_k(\cdot) $ converges uniformly to $ x(\cdot) $ over compact subsets of $ I $,
    \item[(ii)] $ \dot{x}_k(\cdot) $ converges weakly to $ \dot{x}(\cdot) $ in $ \mathcal{L}^1(I, X) $.
\end{itemize}
\end{theorem}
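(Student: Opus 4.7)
The plan is to combine an Arzel\`a--Ascoli argument (yielding subsequential uniform convergence of the $x_k$) with a Dunford--Pettis-type weak compactness argument (yielding weak $\mathcal{L}^1$ convergence of the derivatives $\dot{x}_k$), and then to identify the weak limit as the derivative of the uniform limit via the fundamental theorem of calculus for absolutely continuous functions.

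First I would establish equicontinuity on each compact subinterval $[a,b]\subset I$. Since each $x_k$ is absolutely continuous and dominated as in hypothesis (ii),
\[
\|x_k(t)-x_k(s)\| \le \int_s^t \|\dot{x}_k(u)\|\diff u \le \int_s^t c(u)\diff u, \qquad a\le s<t\le b,
\]
and the right-hand side is a $k$-independent modulus of continuity that vanishes as $t-s\to 0$ by absolute continuity of the Lebesgue integral of $c\in\mathcal{L}^1$. Together with the pointwise relative compactness in hypothesis (i), a Banach-space version of Arzel\`a--Ascoli extracts a subsequence converging uniformly on $[a,b]$. A diagonal construction over a countable exhaustion of $I$ by compact subintervals then delivers a subsequence converging uniformly on compact subsets to some continuous limit $x:I\to X$. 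Passing the displayed bound to the limit gives $\|x(t)-x(s)\|\le \int_s^t c(u)\diff u$, so $x$ is absolutely continuous on every compact subinterval of $I$.

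Next I would extract a weak $\mathcal{L}^1$ limit for the derivatives. The domination $\|\dot{x}_k(t)\|\le c(t)$ with $c\in \mathcal{L}^1(I)$ makes the scalar family $\{\|\dot{x}_k\|\}_k$ uniformly integrable. Hypothesis (i) together with the equicontinuity above confines all trajectory values on any compact subinterval to a norm-compact subset of $X$, so the relevant parts of $X$ reached by the iterates live in a separable, weakly compactly generated subspace. A vector-valued Dunford--Pettis theorem on that subspace, combined with an Eberlein--\v{S}mulian argument, then yields a further subsequence and some $y\in\mathcal{L}^1(I,X)$ with $\dot{x}_k\to y$ weakly in $\mathcal{L}^1(I,X)$, simultaneously with the uniform convergence $x_k\to x$.

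Finally I would identify $y$ with $\dot{x}$. For any $s<t$ in $I$ and any $\phi\in X^*$, test the absolute-continuity identity
\[
\phi\bigl(x_k(t)-x_k(s)\bigr) = \int_s^t \phi\bigl(\dot{x}_k(u)\bigr)\diff u.
\]
The left-hand side converges to $\phi(x(t)-x(s))$ by uniform convergence, and the right-hand side converges to $\int_s^t \phi(y(u))\diff u$ by weak convergence of $\dot{x}_k$ tested against the duality $\mathbf{1}_{[s,t]}\otimes\phi$. Hahn--Banach then gives $x(t)-x(s)=\int_s^t y(u)\diff u$ for all $s<t$, which combined with the absolute continuity of $x$ forces $\dot{x}=y$ almost everywhere on $I$. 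The main obstacle is the middle step: in a general Banach space, $\mathcal{L}^1$-norm domination does not by itself imply weak $\mathcal{L}^1$ compactness, and the role of hypothesis (i) is precisely to localize the trajectory values into a norm-compact (hence weakly compact) subset of $X$, so that the Dunford--Pettis criterion can be unlocked.
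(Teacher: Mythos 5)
The paper does not actually prove this statement: it is quoted in Appendix~\ref{sec:facts} as a classical compactness theorem from \citet{aubin_differential_1984} and used as a black box, so there is no in-paper proof to compare against. Your outline---Arzel\`a--Ascoli for the uniform convergence, weak $\mathcal{L}^1$ compactness for the derivatives, and identification of the weak limit with $\dot{x}$ by testing $x_k(t)-x_k(s)=\int_s^t\dot{x}_k(u)\diff u$ against $\mathbf{1}_{[s,t]}\otimes\phi$---is the standard route, and the first and third steps are sound: the equicontinuity bound is correct, and once you have $x(t)-x(s)=\int_s^t y(u)\diff u$ with $y\in\mathcal{L}^1$, Lebesgue differentiation of the Bochner integral gives $\dot x=y$ almost everywhere.

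The gap is in the middle step, precisely where you flag ``the main obstacle.'' The vector-valued Dunford--Pettis criterion for relative weak compactness in $\mathcal{L}^1(I,X)$ requires, besides uniform integrability, that for almost every $t$ the set of \emph{derivative} values $\{\dot{x}_k(t)\}_k$ be relatively weakly compact in $X$. Hypothesis (i) gives relative norm-compactness of the \emph{function} values $\{x_k(t)\}_k$, and your claim that this ``localizes the trajectory values into a norm-compact subset'' does not transfer to the derivatives: $\dot{x}_k(t)$ can range over the entire ball of radius $c(t)$ of a separable closed subspace, and in a nonreflexive space (one containing a copy of $c_0$ or $\ell^1$, say) such a ball is not relatively weakly compact, so the criterion is not unlocked by compactness of the trajectory alone. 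What hypothesis (i) \emph{does} control compactly are the difference quotients $(x_k(t)-x_k(s))/(t-s)$, i.e.\ the averages $\frac{1}{t-s}\int_s^t\dot x_k$, and a correct general proof must route the weak-compactness argument through these rather than through the pointwise derivative values. For the only case this paper uses---$X=\mathbb{R}^{K+|\mathcal{X}|}$ finite-dimensional---uniform integrability alone already yields relative weak compactness in $\mathcal{L}^1(I,\mathbb{R}^n)$, and your argument then goes through verbatim; but as a proof of the theorem as stated for a general Banach space, this step fails.
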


\paragraph{Convergence theorem: Property of upper hemicontinuous correspondences with closed values.}
\begin{theorem}[Theorem 1, Section 1.4, \citealt{aubin_differential_1984}]\label{thm:set-valued map continuous}
Let $F$ be a proper hemicontinuous map from a Hausdorff locally convex space $X$ to the closed convex subsets of a Banach space $Y$. Let $I$ be an interval of $\mathbb{R}$ and $x_k(\cdot)$ and $y_k(\cdot)$ be measurable functions from $I$ to $X$ and $Y$ respectively satisfying
for almost all $t \in I$, for every open neighborhood $\mathcal{N}$ of $0\in X \times Y$, there exists $k_0 \doteq k_0(t, \mathcal{N})$ such that
\begin{align}
    \forall k \geq k_0, \quad (x_k(t), y_k(t)) \in \mathrm{graph}(F) + \mathcal{N}
\end{align}
If $x_k(\cdot)$ converges almost everywhere to a function $x(\cdot)$ from $I$ to $X$, $y_k(\cdot)\in \mathcal{L}^1(I, Y)$ and converges weakly to $y(\cdot)\in \mathcal{L}^1(I, Y)$,
then for almost all $t \in I$,
\[
(x(t), y(t)) \in \mathrm{graph}(F), \quad \text{i.e., } y(t) \in F(x(t)).
\]    
\end{theorem}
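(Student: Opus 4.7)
The plan is to pass from weak $L^{1}$ convergence of $y_{k}$ to almost-everywhere pointwise convergence (of convex combinations) and then exploit upper hemicontinuity of $F$ together with the closedness and convexity of its values. The hypothesis that $(x_{k}(t),y_{k}(t))$ lies in $\mathrm{graph}(F)+\mathcal{N}$ already encodes pointwise ``approximate inclusion,'' so the challenge is to preserve this in the limit through the weak $L^{1}$ mode of convergence of the $y_{k}$.

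First, I would invoke Mazur's lemma in $L^{1}(I,Y)$: since $y_{k}\to y$ weakly, there exist finite convex combinations $z_{k}=\sum_{j=k}^{N_{k}}\alpha_{jk} y_{j}$, with $\alpha_{jk}\ge 0$ and $\sum_{j}\alpha_{jk}=1$, that converge to $y$ strongly in $L^{1}(I,Y)$. By extracting a subsequence, $z_{k}(t)\to y(t)$ in norm for a.e. $t\in I$. Intersecting with the a.e. set where $x_{k}(t)\to x(t)$ and the graph-closeness hypothesis holds for every $\mathcal{N}$ in a countable neighborhood basis of $0$, I obtain a full-measure set $I'\subseteq I$ on which all three conditions hold simultaneously.

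Next, I fix $t\in I'$ and an arbitrary open neighborhood $\mathcal{V}$ of $0$ in $Y$. Choose a balanced convex neighborhood $\mathcal{V}_{1}$ with $\mathcal{V}_{1}+\mathcal{V}_{1}\subseteq\mathcal{V}$. Upper hemicontinuity of $F$ at $x(t)$, applied to the open set $F(x(t))+\mathcal{V}_{1}$ that contains $F(x(t))$, yields a neighborhood $U$ of $x(t)$ in $X$ such that $F(u)\subseteq F(x(t))+\mathcal{V}_{1}$ for every $u\in U$. Pick a balanced neighborhood $\mathcal{N}_{1}$ of $0$ in $X$ small enough that $x_{k}(t)+\mathcal{N}_{1}\subseteq U$ for all large $k$, which is possible since $x_{k}(t)\to x(t)$. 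Apply the graph-closeness hypothesis with $\mathcal{N}=\mathcal{N}_{1}\times\mathcal{V}_{1}$: for all $k\ge k_{0}(t,\mathcal{N})$ there exists $(\tilde{x}_{k},\tilde{y}_{k})\in\mathrm{graph}(F)$ with $\tilde{x}_{k}\in x_{k}(t)+\mathcal{N}_{1}\subseteq U$ and $y_{k}(t)\in \tilde{y}_{k}+\mathcal{V}_{1}$. Since $\tilde{y}_{k}\in F(\tilde{x}_{k})\subseteq F(x(t))+\mathcal{V}_{1}$, I conclude that $y_{k}(t)\in F(x(t))+\mathcal{V}_{1}+\mathcal{V}_{1}\subseteq F(x(t))+\mathcal{V}$ for all $k\ge k_{0}$. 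Now I use convexity: the set $F(x(t))+\mathcal{V}$ is convex (both summands are), so the convex combination $z_{k}(t)$ lies in $F(x(t))+\mathcal{V}$ for all $k\ge k_{0}$ (the Mazur combinations use only indices $j\ge k\ge k_{0}$). Passing $k\to\infty$ and using $z_{k}(t)\to y(t)$ in norm yields $y(t)\in\overline{F(x(t))+\mathcal{V}}$.

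Finally, I would shrink $\mathcal{V}$. For each open balanced convex $\mathcal{V}$, a sequence $f_{n}+v_{n}\to y(t)$ with $f_{n}\in F(x(t))$, $v_{n}\in\mathcal{V}$ admits a subsequence of $v_{n}$ converging in $\overline{\mathcal{V}}$ by boundedness, so $f_{n}$ also converges, to a point in $F(x(t))$ by closedness; hence $y(t)\in F(x(t))+\overline{\mathcal{V}}$. Taking $\mathcal{V}$ through a countable basis of norm balls of radius $\varepsilon\downarrow 0$, the Banach structure on $Y$ gives $d(y(t),F(x(t)))=0$, and closedness of $F(x(t))$ gives $y(t)\in F(x(t))$, as desired. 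The main obstacle is the bookkeeping across three distinct modes of convergence: weak $L^{1}$ for $y_{k}$, pointwise a.e. for $x_{k}$, and approximate graph inclusion. Mazur's lemma bridges weak to strong, a diagonal subsequence extraction packs all ``a.e.'' sets into one full-measure $I'$, and the care needed at each $t\in I'$ is in aligning the upper-hemicontinuity neighborhood $U$, the product neighborhood $\mathcal{N}_{1}\times\mathcal{V}_{1}$ used in the hypothesis, and the convex combinations, so that every $y_{j}(t)$ appearing in $z_{k}(t)$ already sits in the same enlargement $F(x(t))+\mathcal{V}$.
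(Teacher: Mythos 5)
This statement is one of the ``auxiliary facts'' in Appendix~\ref{sec:facts}: the paper does not prove it, but quotes it verbatim from \citet{aubin_differential_1984} (the Convergence Theorem, Theorem~1 of Section~1.4) and uses it as a black box. Your reconstruction is essentially the standard textbook proof of that result---Mazur's lemma to upgrade weak $L^1$ convergence to a.e.\ norm convergence of tail convex combinations, upper hemicontinuity plus the approximate-graph hypothesis to place the tail $\{y_j(t)\}_{j\ge k_0}$ inside the convex set $F(x(t))+\mathcal{V}$, convexity to keep $z_k(t)$ there, and closedness of $F(x(t))$ to pass to the limit---and the architecture is sound. One blemish: in the final step you argue that $y(t)\in\overline{F(x(t))+\mathcal{V}}$ implies $y(t)\in F(x(t))+\overline{\mathcal{V}}$ by extracting a convergent subsequence of the bounded perturbations $v_n$; closed balls in an infinite-dimensional Banach space $Y$ are not sequentially compact in norm, so that extraction is not available. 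The detour is unnecessary, though: with $\mathcal{V}=\varepsilon\mathbb{B}$, the inclusion $y(t)\in\overline{F(x(t))+\varepsilon\mathbb{B}}$ already gives $d\bigl(y(t),F(x(t))\bigr)\le\varepsilon$ directly (take $f_n+v_n\to y(t)$ and bound $\|y(t)-f_n\|\le\|y(t)-f_n-v_n\|+\varepsilon$), and letting $\varepsilon\downarrow 0$ together with closedness of $F(x(t))$ finishes the proof exactly as you conclude. Also note that the hypothesis as stated already places the quantifier over $\mathcal{N}$ inside a single full-measure set of $t$, so no countable neighborhood basis of $0$ in $X\times Y$ is needed (and $X$, being merely locally convex, need not admit one); this makes your bookkeeping simpler, not harder.
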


\end{document}